%% file: main.tex
\documentclass{article}
\usepackage{def-hao} %
\usepackage{fullpage}
\usepackage[utf8]{inputenc} %
\usepackage[T1]{fontenc}    %
\usepackage{booktabs}       %
\usepackage{makecell}       %
\usepackage{enumitem}       %
\usepackage{titlesec}       %
\usepackage[square]{natbib}
\setcitestyle{authoryear,open={(},close={)}}

\title{Optimal and Efficient Contextual Combinatorial Semi-bandits with General Function Approximation}

\author{
\begin{tabular}{c}
{\Large Hao Qin~~~~~~~~~~Chicheng Zhang}\\
The University of Arizona\\
{\small\texttt{\{hqin,chichengz\}@arizona.edu}}
\end{tabular}
}
\date{}

\begin{document}
\input{Contents/0.acronyms}

\maketitle

\begin{abstract}
We study the contextual combinatorial semi-bandit (CCSB) problem with general reward function approximation. At each round, the learner observes a context, selects a combinatorial action consisting of a subset of basic arms, and receives the reward of each selected arm; the goal is to maximize the cumulative reward over time.
We propose \sqcomb, a computationally efficient algorithm that, at each round, solves a convex optimization problem to sample a combinatorial action that balances exploration and exploitation. \sqcomb\ scales to large arm sets and imposes no structural assumptions on the action set beyond a cardinality bound of $m$ on each combinatorial action.
We prove that \sqcomb\ achieves a minimax optimal regret bound of $\iupbound{\sqrt{m A T \log \abs{\Fcal}}}$, where $A$ is the number of arms, $m$ is the maximum number of arms in a combinatorial action, $T$ is the time horizon, and $\Fcal$ is the reward function class.
In the realizable setting, this bound matches the state-of-the-art regret guarantees achieved by policy search-based algorithms in the more restricted slate recommendation settings, while simultaneously generalizing to arbitrary combinatorial action structures and general reward function approximation.

\end{abstract}

\input{Contents/1.introduction}

\input{Contents/2.related-work}

\input{Contents/3.problem-setup}

\input{Contents/4.algorithm-design}

\input{Contents/5.experiments}

\newpage
\bibliography{references,more_references}

\newpage 
\appendix

\input{Contents/9.0.appendix}

\end{document}

%% file: Contents/0.acronyms.tex
\newcommand{\sgreedy}[2]{\hat{\pi}_{#1}(#2)}
\newcommand{\coverage}{\ensuremath{\mathsf{Coverage}}}
\newcommand{\conv}{\ensuremath{\mathsf{Conv}}}
\newcommand{\est}{\ensuremath{\mathsf{Est}}}
\newcommand{\dec}{\ensuremath{\mathsf{dec}}}
\newcommand{\decsq}{\ensuremath{\mathsf{dec}^{\mathsf{sq}}}}
\newcommand{\doec}{\ensuremath{\mathsf{doec}}}
\newcommand{\SEC}{\ensuremath{\mathsf{SEC}}}
\newcommand{\WSEC}{\ensuremath{\mathsf{W}\text{-}\mathsf{SEC}}}
\newcommand{\Reg}{\mathrm{Reg}}
\newcommand{\Off}{\mathrm{Off}}
\newcommand{\On}{\mathrm{On}}
\newcommand{\sqcomb}{\textsc{SquareCB.Comb}}
\newcommand{\oetdcomb}{\textsc{OE2D.Comb}}
\newcommand{\sqlin}{\textsc{SquareCB.Lin}}
\newcommand{\falconlin}{\textsc{FALCON.Lin}}
\newcommand{\err}{\ensuremath{\mathsf{Err}}}
\newcommand{\sampleoracle}{\Ocal_{\textsc{sample}}}
\newcommand{\batchoracle}{\Ocal_{\textsc{batch}}}
\newcommand{\offoracle}{\Ocal_{\textsc{batch.off}}}
\newcommand{\Alg}{\mathrm{Alg}}
\newcommand{\Unif}{\mathrm{Unif}}
\newcommand{\rel}{\mathrm{rel}}
\newcommand{\smax}{s_{\max}}

%% file: Contents/1.introduction.tex
\section{Introduction}

The contextual combinatorial semi-bandit (CCSB) problem is a framework for sequential decision making in which the learner repeatedly selects a subset of base arms (a combinatorial action) based on observed contextual information, with the goal of maximizing cumulative reward over time.
For example, \citet{hoseini2024contextual} formulate bottleneck identification in road networks as a CCSB problem, where the learner repeatedly selects a source-to-destination path (a subset of road segments) and seeks to minimize the largest travel cost along the chosen path, using contextual side information such as time of day and traffic conditions.
As another example, \citet{panda2025adaptive} formulates online routing for multi-LLM serving as a CCSB problem with knapsack constraints, where for each incoming user query the learner selects a subset of LLMs (a combinatorial action) to invoke subject to a token-budget cap and seeks to maximize the resulting response quality.
Other applications include energy-efficient path planning for vehicle navigation~\citep{aakerblom2024combinatorial}, online personalized slate recommendation~\citep{wang2017efficient,aramayo2023multiarmed}, and resource allocation in wireless networks, including channel assignment~\citep{gai2012combinatorial} and beam management~\citep{li2025contextual}.

A CCSB environment is specified by a context space $\Xcal$ and an arm set $\Acal$ of size $A$.
The combinatorial action space $\Scal \subseteq \cbr{0,1}^A$ is a feasible family of subsets of $\Acal$ determined by the problem structure (e.g., the $m$-set setting~\cite{kale2010non} or knapsack constraints~\cite{panda2025adaptive}).
At each round $t$ of interaction, the environment shows a context $x_t$ to the learner, together with a vector $r_t \in [0, 1]^\Acal$ representing the reward of each arm, hidden from the learner.
The learner selects a combinatorial action $s_t \in \Scal$ based on $x_t$, observes the semi-bandit feedback $\cbr{r_t(a) : s_t(a) = 1}$, and receives reward $\inner{r_t}{s_t} = \sum_{a: s_t(a)=1} r_t(a)$. The goal is to minimize the regret over $T$ rounds, defined as the gap between the cumulative reward of the optimal context-dependent action in hindsight and the algorithm's expected cumulative reward.

Prior works on contextual combinatorial semi-bandits with function approximation focus on the linear-reward setting, in which the expected reward of each arm is a linear function of the context~\citep{qin2014contextual,li2025efficient,takemura2021near}.
In many real-world applications, however, the mapping from context and arm to reward is not strictly linear. For instance, user clicks in online recommendation systems often exhibit highly non-linear dependence on contextual features.
In this work, we consider a general function approximation setting for CCSB, where the mapping from the context to the 
expected reward of the arms is assumed to lie in some known function class $\Fcal$.

\paragraph{Our contributions.} We propose an algorithm, \sqcomb, for the CCSB problem with general reward function approximation.
Our algorithm builds on the Estimation-to-Decisions principle for online decision making~\citep{foster2021statistical}, which lets the algorithm balance exploration and exploitation and yields a regret bound of $\iupboundlog{\sqrt{m A T\log\abs{\Fcal}}}$, where $m$ is the maximum number of arms in a combinatorial action, $A$ is the number of arms, $T$ is the time horizon, and $\Fcal$ is the reward function class.

This bound matches the best known regret rate for the linear-reward setting and extends it to general reward function approximation.

\begin{table}[t]
    \centering
    \caption{
    A comparison between our work and prior works on contextual combinatorial semi-bandits with general function approximation.
    $\iupboundlog{\cdot}$ hides logarithmic factors in $T$, $A$, and $\delta$.
    $d$ is the dimension of the feature in the linear reward setting, and $\abs{\Pi}$ is the size of the policy class in the general reward setting.
    SquareCB.lin is a contextual bandit algorithm adapted to the semi-bandit setting by aggregating per-arm feedback into a slate-level reward.
    In the policy-class setting, regret is measured against the best policy in $\Pi$ in hindsight; in the value-function-approximation setting, the learner is assumed to be realizable with respect to $\Fcal$ and competes against the optimal context-dependent action in hindsight.
    }
    \label{tab:algorithms}
    \renewcommand{\arraystretch}{1.15}
    \resizebox{\textwidth}{!}{%
    \begin{tabular}{ccccc}
        \toprule
        Algorithm  & \makecell{Problem\\ Setting} & \makecell{Combinatorial\\ Constraint} & \makecell{Approximation\\ Type} & \makecell{Regret\\ Guarantee} \\
        \midrule
        \citet{kale2010non}
        & \makecell{Stochastic context,\\ adversarial reward} & \makecell{$m$-set \\ $m$-ordered slate} & Policy & $\iupboundlog{\sqrt{m A T \log(\abs{\Pi})}}$ \\
        \addlinespace[2pt]
        SquareCB.lin~\citep{foster2020adapting}
        & \makecell{Adversarial context,\\ stochastic reward} & General & Value & $\iupboundlog{m\sqrt{AT \log(\abs{\Fcal})}}$\\
        \midrule
        SquareCB.comb (ours)
        & \makecell{Adversarial context,\\ stochastic reward} & General & Value & $\iupboundlog{\sqrt{m A T \log(\abs{\Fcal})}}$ \\
        \bottomrule
    \end{tabular}%
    }
\end{table}

%% file: Contents/2.related-work.tex
\section{Related Work}
\label{sec:related-work}

\paragraph{Contextual combinatorial semi-bandit (CCSB) with linear function approximation.}
Motivated by personalized recommendation, \citet{qin2014contextual} first introduced the contextual combinatorial semi-bandit (CCSB) problem under the assumption that the expected reward of each arm is linear in the context with a shared $d$-dimensional coefficient, while the reward of a combinatorial action may be a general function of the per-arm rewards.
They proposed C$^2$UCB, a UCB-based algorithm with an $\iupbound{\sqrt{(d+m) d m T}}$ regret guarantee.
\citet{li2016contextual} studied a variant of the CCSB problem with a cascade feedback structure and proposed an algorithm with $\iupbound{\sqrt{d m T}}$ regret under a (strong) feedback coverage assumption.
\citet{takemura2021near} gave a refined analysis of C$^2$UCB and obtained a minimax optimal regret guarantee of $\iupbound{d\sqrt{m T}}$. In comparison, our algorithm, instantiated with a discretization of the linear reward function class, attains regret $\iupboundlog{\sqrt{m A d T}}$ in this setting.\footnote{It may appear that their lower bound contradicts ours when $d$ is large; however, a close examination of their lower-bound instances shows that they take $A = m 2^d$.}
\citet{zierahn2023nonstochastic} studied the adversarial linear rewards CCSB problem, where each arm's linear reward coefficient is chosen adversarially, and proposed a follow-the-regularized-leader (FTRL) algorithm with $\iupbound{\sqrt{T}}$ regret.
\citet{liu2023contextual} generalizes the results of \citet{li2016contextual,takemura2021near} to linear contextual combinatorial bandits with probabilistically triggered arms; when instantiated in the setting of \citet{li2016contextual}, their result replaces the minimum triggering probability assumption with a natural assumption on the reward function.
\citet{li2025efficient} gave the first best-of-both-worlds algorithm for linear CCSB, attaining $\iupbound{\sqrt{T}}$ regret in the adversarial setting and $\iupbound{\log T}$ regret in the stochastic setting by combining FTRL with a negative Shannon entropy regularizer.

\paragraph{Contextual combinatorial semi-bandit (CCSB) with general function approximation.}  \citet{levy2023efficient} studied the contextual MDP (CMDP) problem with adversarial losses, which can be used to solve the contextual shortest-path problem as a special case.
On a graph with $N$ nodes and $M$ edges, the shortest-path instance can be cast as an episodic CMDP with episode length $N$; under this reduction, their algorithm attains regret $\iupboundlog{\sqrt{N^{7} T \log\abs{\Fcal}}}$,
which is worse than our bound $\iupboundlog{\sqrt{N M T \log\abs{\Fcal}}}$ in this setting (see \Cref{sec:regret-sqcomb} for details).
\citet{hwang2023combinatorial} studied the general function approximation with a Lipschitz continuity assumption and proposed algorithms based on the Thompson sampling and UCB principles, achieving $\iupboundlog{\sqrt{m \tilde{d} T}}$, where $\tilde{d}$ is the effective dimension of the neural tangent kernel associated with the reward function class.

Another line of work approaches CCSB via policy search: the learner is given a policy class $\Pi$ and seeks to minimize regret against the best policy in $\Pi$ in hindsight.
The early work of~\citet{kale2010non} proposed algorithms
based on exponential weighting over the policy class, achieving $\iupboundlog{\sqrt{m A T \log(\abs{\Pi})}}$ regret in the unordered and ordered slate settings with $n$ items (where $A = n$ and $mn$ respectively). Their algorithm is computationally inefficient as it requires enumerating all policies in the policy class. 
Subsequently, \citet{krishnamurthy2016contextual} studied CCSB with a linearly weighted reward structure and proposed the VCEE algorithm, which improves computational efficiency by extending the policy elimination technique of \citet{agarwal2014taming} to call a cost-sensitive classification oracle over the policy class rather than enumerating all policies. Their regret guarantee ranges between $\iupboundlog{\sqrt{mAT\log\abs{\Pi}}}$ and $\iupboundlog{m\sqrt{AT\log\abs{\Pi}}}$, depending on the reward structure.
\citet{erez2025contextual} studied CCSB in both the PAC guarantee and the regret minimization settings, and adapted the technique of \citet{agarwal2014taming} to construct an importance-weighted sampler over the policy class, achieving an $\iupboundlog{\abs{\Pi} + \sqrt{m s T \log \abs{\Pi}}}$ regret guarantee under the sparsity assumption $\|r_t\|_1 \leq s$, where $\Pi$ denotes the policy class.
We compare our work with the most related prior works in Table~\ref{tab:algorithms}.

\paragraph{Applying contextual bandit algorithms to the CCSB problem.} A natural approach to the CCSB problem is to reduce it to a contextual bandit problem with reward being linear in action per-context~\footnote{This is different from the aforementioned ``reward linear in context''setting. }
Specifically, we encode each combinatorial action $s \in \Scal \subseteq \cbr{0,1}^A$ as a binary indicator vector and define the reward as $\inner{r(x)}{s}$, where $r(x) \in \RR^A$ is an unknown context-dependent per-arm reward vector.
Under this canonical reduction, off-the-shelf regression-based contextual bandit algorithms apply directly, including SquareCB.Lin~\citep{foster2020adapting}, the linear FALCON algorithm of \citet{xu2020upper}, and the SpannerIGW algorithm of \citet{zhu2022contextualb};
all of these algorithms operate in the linear reward structure per-context setting and generalize the inverse-gap-weighting (IGW) algorithm of \citet{abe1999associative}, which learns an estimate of the reward function and uses it to sample actions.
Applying SquareCB.Lin under this reduction yields a regret of $\iupboundlog{m\sqrt{A T \log\abs{\Fcal}}}$ in the $m$-set setting, where $\Fcal$ denotes the underlying reward function class. The extra factor of $m$ originates from the reward aggregation.

Due to space limits, additional related works are discussed in \Cref{sec:addl-related-work}.

%% file: Contents/3.problem-setup.tex
\section{Preliminaries}
\label{sec:prelim}
\paragraph{Basic notations.}
For any two vectors $x, y \in \RR^A$, we write $\iinner{x}{y} \coloneqq \sum_{a \in \Acal} x(a) y(a)$ for their inner product. For any set $\Scal \subseteq \RR^A$, we write $\conv(\Scal)$ for its convex hull, and $\opnorm{f}{p} = \sqrt{\sum_{a \in \Acal} p(a)f(a)^2}$ for the $L^2(p)$ norm of a vector $f \in \RR^A$. 
$\Delta(\Scal)$ denotes the set of distributions over $\Scal$.
$\iupbound{\cdot}$ and $\ilowbound{\cdot}$ denote upper and lower bounds that hide absolute constants; their tilde variants $\iupboundlog{\cdot}$ and $\ilowboundlog{\cdot}$ additionally hide polylogarithmic factors in $A, m, T, \log\abs{\Fcal}$.
We write $\lesssim$ for inequalities that hide absolute constants.

\paragraph{Contextual combinatorial semi-bandits.}
Let $\Xcal$ be a context space and $\Acal$ a finite set of arms with size $A$. The set of feasible combinatorial actions is $\Scal \subseteq \cbr{0,1}^{A}$, where for any combinatorial action $s \in \Scal$, $s(a) = 1$ encodes that arm $a$ is selected by $s$. The agent is given a class $\Fcal$ of mean reward functions $f : \Xcal \times \Acal \to [0,1]$, and we adopt two standard assumptions.
\begin{assum}[Realizability]
    \label{assum:realizable}
    The ground-truth reward function $f^\star$ lies in $\Fcal$.
\end{assum}
\begin{assum}[Bounded combinatorial-action size]
    \label{assum:action-size}
    There exists a known constant $m \in \NN$ such that $\|s\|_1 \leq m$ for every $s \in \Scal$.
\end{assum}

Realizability is a standard assumption in the contextual bandit literature and is necessary for sublinear regret with general function classes~\citep{foster2020beyond,foster2020adapting}.
\citet{lattimore2020learning} show that, without realizability, linear regret of order $\varepsilon T$ is unavoidable for linear contextual bandits with misspecification level $\varepsilon$.
The bounded combinatorial-action-size assumption is also standard in the combinatorial bandit literature, as it covers many applications including unordered and ordered slate selection~\citep{kale2010non}, maximum-weight matching~\citep{gai2012combinatorial}, and weighted online recommendation~\citep{kveton2015tight}.

\paragraph{Interaction protocol.}
At each round $t = 1, \ldots, T$:
\begin{enumerate}
    \item the environment draws a context $x_t$ and a reward vector $r_t \in [0,1]^A$, with $x_t$ revealed to the agent.
    We assume that $\EE\sbr{ r_t(a) \mid x_t = x } = f^\star(x, a)$.
    \item the agent selects a combinatorial action $s_t \in \Scal$;
    \item the agent observes semi-bandit feedback $o_t = (r_t(a))_{a: s_t(a) = 1}$ 
\end{enumerate}
The expected reward of combinatorial action $s$ on context $x$ takes a linear form $\iinner{s}{f(x, \cdot)}$, where $f(x, \cdot) \in \RR^{A}$ stacks the per-action rewards. We denote the optimal combinatorial action at context $x$ by $s^\star(x) \coloneqq \argmax_{s \in \Scal}\iinner{s}{f^\star(x, \cdot)}$. 
After $t$ rounds, the agent has collected a dataset $\Dcal_t = \cbr{(x_i, s_i, o_i)}_{i=1}^t$.

\paragraph{Running examples.} As example applications, we will demonstrate the utility of our algorithm on the following examples throughout this paper:
\begin{itemize}
\item \textit{Unordered $m$-set recommendations with $n$ items~\cite{kale2010non}.} Here we have $A = n$ available items for recommendation, $\Scal=\icbr{s \in \icbr{0,1}^n: \| s \|_1 = m}$ contains all subsets of size $m$ that can be recommended at each round. Each item is regarded equally, and we observe the user's click-through rates on the items recommended.
\item \textit{Ordered $m$-slate recommendations with $n$ items~\cite{kale2010non}.} Here, the user's click-through rate depends on both the item recommended and its position in the ranked list. 
Here, our combinatorial action set is the set of permutations:  
$\Scal = \icbr{ M \in \icbr{0,1}^{m \times n}: \sum_{j=1}^n M_{i,j} = 1, \forall i \text{ and } \sum_{i=1}^m M_{i,j} \leq 1, \forall j }$, where $M_{i,j} = 1$ indicates that item $j$ is recommended at position $i$. Therefore, $A = m \cdot n$.
\item \textit{Contextual shortest path on a directed acyclic graph (DAG)~\cite{hoseini2024contextual}.} Here, we have a DAG with $N$ vertices $V$ and $M$ edges $E$, together with a designated source $s$ and terminal $t$; $\Scal = \icbr{ (x_e)_{e \in E}: \sum_{e \in N_+(v)} x_e - \sum_{e \in N_-(v)} x_e = \mathbb{I}(v = t) - \mathbb{I}(v = s),\ x_e \in \cbr{0,1} }$ is the set of all $s$-$t$ paths, where $N_+(v)$ and $N_-(v)$ are the sets of edges entering and leaving $v$, respectively. Semi-bandit feedback means that the learner only sees the delays on the road segments it traverses. In this example, $A = M$ and we may take $m \le N$.
\end{itemize}

The agent's goal is to minimize its cumulative (pseudo-)regret:
\begin{align}
    \textstyle \Regret(T) \coloneqq \sum_{t=1}^T \inner{s^\star(x_t)}{f^\star(x_t, \cdot)} - \sum_{t=1}^T \inner{ s_t }{f^\star(x_t, \cdot)}.
        \label{eqn:regret}
\end{align}

\paragraph{Distributions, participation vectors, and policies.}
Given a distribution $p \in \Delta(\Scal)$ over combinatorial actions, denote by its \textit{participation vector} $\bar{p} \in \bar{\Scal}$ such that $\bar{p}(a) \coloneqq \EE_{s \sim p}\sbr{s(a)}, a \in \Acal$. For each $\bar{p} \in \bar{\Scal}$, the $p$ that corresponds to it may not be unique.
A \textit{policy} is a (measurable) map $\pi: \Xcal \to \Delta(\Scal)$. Given a context $x$, we write $\bar{\pi}(x) \in \bar{\Scal}$ for the participation vector of distribution $\pi(x) \in \Delta(\Scal)$.
Note that $\bar{\pi}(x)$ is $A$-dimensional, and its $a$-th coordinate $\bar{\pi}(x)(a)$ is the marginal probability of selecting arm $a$ under $\pi(x)$.

\paragraph{Batch-mode online regression oracle.}

Online regression oracles are commonly used in the design and analysis of contextual bandit algorithms~\citep{foster2020beyond,zhu2022contextuala}.
Since we observe reward feedback from multiple arms at each round, in this paper we adopt the following batch-mode formulation of the online square-loss regression oracle $\batchoracle$. Let $\Gcal$ be a finite function class. At each round $t$, the regression oracle first outputs a predictor $\hat g_t$ based on past batches, and then receives a batch $\cbr{(z_{t,b}, y_{t,b})}_{b=1}^{B_t}$ of $B_t \leq B$ labeled examples.
The regression oracle's performance is measured by the square-loss regret compared to the best predictor in $\Gcal$ on the sequence of batches:
\begin{align*}
    \textstyle
    \Reg_{\mathrm{batch}}(T)
    =
    \sum_{t=1}^T \sum_{b=1}^{B_t}\del{\hat g_t(z_{t,b}) - y_{t,b}}^2
    -
    \inf_{g \in \Gcal} \sum_{t=1}^T \sum_{b=1}^{B_t}\del{g(z_{t,b}) - y_{t,b}}^2.
\end{align*}

We make the following assumption on the batch-mode online regression oracle $\batchoracle$:
\begin{assum}[Batch-mode online regression oracle]
    \label{assum:online-reg-semi}
    Suppose the reward function $g^\star$ belongs to a finite function class $\Gcal$. The batch-mode online regression oracle $\batchoracle$ with batch size $B$ guarantees that, for any sequence of inputs and outputs $\cbr{(z_{t,b}, y_{t,b})}_{t=1}^T$, where $y_{t, b} \in [0, 1]$ for all $t, b$, the predictors $\hat g_t$'s output by $\batchoracle$ satisfy $\Reg_{\mathrm{batch}}(T) \lesssim B \log |\Gcal|.$
\end{assum}

Indeed, \citet{mesterharm2005line} gives a reduction from the batch/delayed-feedback setting to the standard online learning setting that incurs an additional multiplicative factor of $B$ relative to the standard regret bound. Furthermore, when $\Gcal$ is finite, the Exponentially Weighted Average (EWA) algorithm attains regret $\iupbound{\log\abs{\Gcal}}$ in the standard online learning setting~\citep{cesa2006prediction,rakhlin2014statistical}, which translates into regret $\iupbound{B\,\log\abs{\Gcal}}$ in the batch-mode setting.

%% file: Contents/4.algorithm-design.tex
\section{Proposed Algorithm}
\label{sec:algorithm}

\label{sec:reduction}

We propose an algorithm for the contextual combinatorial semi-bandit setting, called \sqcomb\ (\Cref{alg:sqcomb}), which achieves $\iupboundlog{\sqrt{m A T \log\abs{\Fcal}}}$ regret under the realizability assumption. The algorithm is inspired by the SquareCB algorithm for contextual bandits~\citep{foster2020beyond}, whose action-selection rule is the solution of a log-barrier regularized optimization problem~\citep{foster2020adapting} every round.
At each round, the learner obtains an estimate of the reward function by calling a batch-mode online regression oracle $\batchoracle$ trained on past observations of the form (context, arms, rewards) (line~\ref{line:sqcomb-regression}), and observes the context $x_t$ (line~\ref{line:sqcomb-context}).
The learner then computes a participation vector $\bar{p}_t$ over arms by solving a log-barrier regularized optimization problem (line~\ref{line:sqcomb-logbarrier}), and invokes a sampling oracle $\sampleoracle$ to sample a combinatorial action $s_t$ whose marginals match $\bar{p}_t$ (line~\ref{line:sqcomb-sampling}). Finally, the learner observes the semi-bandit feedback $o_t$ (line~\ref{line:sqcomb-feedback}) and forwards the new observation to $\batchoracle$ for use in the next round.

\paragraph{Computing an exploratory participation vector.}
Our key algorithmic innovation lies in the computation of the participation vector $\bar{p}_t$: at that step, the learner solves the following $A$-dimensional convex optimization problem (Eq.~\eqref{eqn:log-barrier}).
The first term in the objective, $\iinner{\bar{p}}{\hat f_t(x_t,\cdot)}$, measures the ``greediness'' of the participation vector $\bar{p}$, i.e., how optimal $\bar{p}$ is with respect to the current reward function estimate $\hat{f}_t$ and context $x_t$;
the second term $\frac{1}{\gamma}\sum_{a \in \Acal}\log\bar{p}(a)$ is a log-barrier regularizer that encourages $\bar{p}$ to spread its support, thereby preventing concentration on a small set of arms and promoting exploration. By balancing these two terms, the learner obtains a participation vector that exploits the current reward estimate while ensuring sufficient exploration to control the estimation error.

\begin{algorithm}[t]
    \small
    \begin{algorithmic}[1]
    \STATE \textbf{Input:} Semi-bandit tuple $(\Xcal, \Scal, \Acal)$, online regression oracle $\batchoracle$, sampling oracle $\sampleoracle$, reward function class $\Fcal$, exploration schedule $\gamma$. \label{line:sqcomb-input}
    \FOR{round $t = 1, 2, \ldots$}
        \STATE Receive an updated estimate $\hat f_t$ from $\batchoracle$ based on the history $\cbr{(x_\tau, s_\tau, o_\tau)}_{\tau < t}$. \label{line:sqcomb-regression}
        \STATE Observe context $x_t$. \label{line:sqcomb-context}
        \STATE Compute a participation vector $\bar{p}_t \in \bar\Scal$ by solving 
        \begin{align}
        \bar{p}_t
        =
        \argmax_{\bar{p} \in \bar\Scal}
        \inner{\bar{p}}{\hat f_t(x_t,\cdot)}
        +
        \frac{1}{\gamma}\sum_{a \in \Acal}\log\bar{p}(a).
        \label{eqn:log-barrier}
\end{align}

        \label{line:sqcomb-logbarrier}
        \STATE Sample a combinatorial action $s_t$ such that $\EE\sbr{s_t} = \bar{p}_t$ by calling $\sampleoracle(\bar{p}_t, \Scal)$.
        \label{line:sqcomb-sampling}
        \STATE Observe semi-bandit feedback $o_t = \cbr{r_t(a): s_t(a) = 1}$. 
        \label{line:sqcomb-feedback}

    \ENDFOR
    \end{algorithmic}
    \caption{\sqcomb}
    \label{alg:sqcomb}
\end{algorithm}

\paragraph{Sampling Oracle for Combinatorial Constraints.}

The learner needs to sample a combinatorial action $s_t$ from a distribution $p_t \in \Delta(\Scal)$ whose marginal expectation matches the participation vector $\bar{p}_t$ computed in Line~\ref{line:sqcomb-logbarrier}.
This step is nontrivial because the participation vector $\bar{p}_t$ is defined in the convex hull of the combinatorial action space $\bar\Scal$, whose dimension is $A$, while the actual sampling must be performed in $\Delta(\Scal)$, a space of dimension $\abs{\Scal}$ that can be exponentially larger than $A$.
For many structured combinatorial action sets, however, efficient sampling algorithms are readily available. For instance:
\begin{itemize}
\item \textit{Unordered $m$-set recommendations with $n$ items.} 
Here, after computing $\bar{p}_t$, we can use dependent rounding~\citep{gandhi2006dependent} to sample $s_t$ with running time $O(n)$.

\item \textit{Ordered $m$-slate recommendations with $n$ items.}
\citet{kale2010non} give an efficient sampler (which in turn uses the efficient algorithm of~\citet{helmbold2009learning}) that first decomposes any $\bar{p} \in \bar{\Scal}$ to a convex combination of $n^2$ matrices, and samples from it.

\item \textit{Contextual shortest path on DAG.} Any $\bar{p} \in \bar{\Scal}$ is a unit $s$-$t$ flow, and can be represented as a convex combination of $s$-$t$ paths via standard flow decomposition procedures~\citep{ahuja1988network}.

\end{itemize}

When $\Scal$ admits an efficient linear optimization oracle (i.e., $\argmin_{s \in \Scal} \inner{\ell}{s}$ is tractable for any $\ell$), as in shortest path and maximum weight matching, lines~\ref{line:sqcomb-logbarrier} and~\ref{line:sqcomb-sampling} can be merged: Frank-Wolfe on Eq.~\eqref{eqn:log-barrier} produces iterates $\bar{p}_n = \sum_{i=1}^n \alpha^i s^i$ with $s^i \in \Scal$ and $\alpha \in \Delta^{n-1}$, so sampling $i \sim \mathrm{Categorical}(\alpha)$ and returning $s^i$ yields $s_t$ with marginals $\bar{p}_n$. See \Cref{alg:sample-soln} in Appendix~\ref{sec:sample-linear-opt-oracle}.

\subsection{Regret Analysis of \sqcomb}
\label{sec:regret-sqcomb}

We now present the main regret guarantee of \sqcomb. In \Cref{thm:regret-sqcb-comb-regsq}, we give a general regret bound in terms of the online regression oracle's square-loss regret $\Reg_{\mathrm{batch}}(T)$, which, combined with the guarantee of \Cref{assum:online-reg-semi}, yields the explicit bound in \Cref{corol:regret-sqcb-comb}.
\begin{theorem}[Total regret bound]
    \label{thm:regret-sqcb-comb-regsq}
    Suppose \Cref{assum:realizable,assum:action-size} hold and \sqcomb\ uses the constant exploration parameter $\gamma = \sqrt{A T / \Reg_{\mathrm{batch}}(T)}$.
    Then the expected regret of \sqcomb\ is bounded as
    \[
        \EE\sbr{\Regret(T)} \leq \Ocal\del{\sqrt{A\,T\,\Reg_{\mathrm{batch}}(T)}}.
    \]
\end{theorem}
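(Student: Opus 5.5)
The plan is the standard SquareCB-style decomposition: a per-round decision-to-estimation inequality, which the log-barrier solution of Eq.~\eqref{eqn:log-barrier} satisfies, followed by summation and the oracle guarantee. Write $f^\star_t = f^\star(x_t,\cdot)$ and $\hat f_t = \hat f_t(x_t,\cdot)$. Since $\EE[s_t \mid x_t, \Dcal_{t-1}] = \bar p_t$, the conditional expected regret at round $t$ is $\iinner{s^\star(x_t) - \bar p_t}{f^\star_t}$. The core per-round claim is that for every $f^\star_t \in [0,1]^A$,
\begin{align*}
\iinner{s^\star(x_t) - \bar p_t}{f^\star_t}
\le \frac{A}{\gamma} + \frac{\gamma}{4}\sum_{a\in\Acal}\bar p_t(a)\bigl(\hat f_t(a) - f^\star_t(a)\bigr)^2 .
\end{align*}
Constants may need adjustment; $A/\gamma$ could become $2A/\gamma$.

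\textbf{Step 1: first-order optimality.} The objective in Eq.~\eqref{eqn:log-barrier} is concave on the convex set $\bar\Scal$, and the barrier forces $\bar p_t(a) > 0$. Assuming every arm lies in some feasible action, so that $\bar\Scal$ contains a point with full support, the maximizer satisfies
\begin{align*}
\iinner{u - \bar p_t}{\hat f_t + \tfrac{1}{\gamma}\bar p_t^{-1}} \le 0 \quad \text{for all } u \in \bar\Scal,
\end{align*}
where $\bar p_t^{-1}$ is the coordinatewise inverse. Taking $u = s^\star(x_t)$ gives
\begin{align*}
\iinner{s^\star - \bar p_t}{\hat f_t} \le \tfrac{1}{\gamma}\Bigl(A - \sum_a s^\star(a)/\bar p_t(a)\Bigr).
\end{align*}

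\textbf{Step 2: replacing $\hat f_t$ with $f^\star_t$.} Write the regret as
\begin{align*}
\iinner{s^\star - \bar p_t}{\hat f_t} + \iinner{s^\star}{f^\star_t - \hat f_t} + \iinner{\bar p_t}{\hat f_t - f^\star_t}.
\end{align*}
For the middle term, apply AM-GM coordinatewise:
\begin{align*}
s^\star(a)\,|\hat f_t(a) - f^\star_t(a)| \le \frac{s^\star(a)}{\gamma\,\bar p_t(a)} + \frac{\gamma}{4}\bar p_t(a)\bigl(\hat f_t(a) - f^\star_t(a)\bigr)^2,
\end{align*}
using $s^\star(a)^2 = s^\star(a)$. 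The resulting $\sum_a s^\star(a)/(\gamma\bar p_t(a))$ cancels exactly against the negative term from Step~1. This cancellation is the key reason the log-barrier yields a bound in $A$ rather than $m$. For the last term, Cauchy--Schwarz followed by AM-GM gives
\begin{align*}
\iinner{\bar p_t}{\hat f_t - f^\star_t} \le \sqrt{\textstyle\sum_a \bar p_t(a)}\,\sqrt{\textstyle\sum_a \bar p_t(a)(\hat f_t - f^\star_t)^2}.
\end{align*}
Since $\sum_a \bar p_t(a) \le m \le A$, this is at most $A/\gamma$ plus a multiple of $\gamma\sum_a \bar p_t(a)(\hat f_t - f^\star_t)^2$. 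Combining the pieces gives the per-round claim up to constants.

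\textbf{Step 3: linking to the oracle.} The semi-bandit feedback at round $t$ is the batch $\{((x_t,a), r_t(a)) : s_t(a)=1\}$, of size at most $m$. Under realizability, $\EE[r_t(a)\mid x_t] = f^\star(x_t,a)$, so the standard square-loss argument applies. The expected excess loss $\EE[(\hat f_t - r_t)^2 - (f^\star - r_t)^2 \mid \cdot]$ equals $(\hat f_t - f^\star)^2$ per observed arm. Taking the expectation over $s_t$ weights arm $a$ by $\bar p_t(a)$. Hence
\begin{align*}
\EE\sum_t \sum_a \bar p_t(a)(\hat f_t - f^\star_t)^2 \le \EE[\Reg_{\mathrm{batch}}(T)],
\end{align*}
because the comparator $f^\star \in \Fcal$ has loss at least the infimum. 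Summing Step~2 over rounds gives
\begin{align*}
\EE[\Regret(T)] \lesssim \frac{AT}{\gamma} + \gamma\,\Reg_{\mathrm{batch}}(T),
\end{align*}
and the choice $\gamma = \sqrt{AT/\Reg_{\mathrm{batch}}(T)}$ yields $\Ocal(\sqrt{AT\,\Reg_{\mathrm{batch}}(T)})$.

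\textbf{Main obstacle.} I expect the hardest step to be the optimality argument in Steps~1--2 over a general polytope $\bar\Scal$ rather than the simplex. There are three issues. The KKT inequality must be justified on a possibly non-full-dimensional set. The cancellation of the $s^\star/\bar p_t$ terms must hold exactly. The $\bar p_t$-weighted term must be controlled with only an $A$ factor, and not an $mA$ factor, which holds since $\sum_a \bar p_t(a) \le m \le A$. A secondary subtlety is that $\Reg_{\mathrm{batch}}(T)$ is random while $\gamma$ is fixed. I would treat it through its deterministic upper bound from \Cref{assum:online-reg-semi}, taking $\gamma$ to be set from that bound.
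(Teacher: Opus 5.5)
Your proposal is correct and follows the paper's route: the variational inequality for the log-barrier maximizer, the same three-term decomposition of the regret, AM--GM/Cauchy--Schwarz on the two error terms (using $\sum_a \bar p_t(a)\le m$), and then summation plus the square-loss oracle guarantee. The difference is cosmetic. You plug the vertex $s^\star(x_t)$ directly into the first-order condition and cancel the $\sum_a s^\star(a)/\bar p_t(a)$ terms by hand, whereas the paper packages the same computation as two lemmas: the log-barrier solution certifies the $\coverage$ surrogate via $\bar q(a)^2\le \bar q(a)$, and the off-policy evaluation lemma turns that surrogate into the CS-DEC bound. Your worry about a non-full-dimensional $\bar\Scal$ is also unnecessary, since the variational inequality only needs the maximizer to have strictly positive coordinates.
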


Combining \Cref{thm:regret-sqcb-comb-regsq} with the guarantee of the online regression oracle in the finite-class semi-bandit setting gives the following regret bound:
\begin{corollary}
    \label{corol:regret-sqcb-comb}
    Under \Cref{assum:realizable,assum:action-size,assum:online-reg-semi}, and $\gamma = \sqrt{\frac{A T}{m \log \abs{\Fcal}}}$, the expected regret of \sqcomb\ is upper bounded by 
    \[ 
    \EE\sbr{\Regret(T)} \leq \Ocal\del{\sqrt{m A T \log |\Fcal|}}.
    \]
\end{corollary}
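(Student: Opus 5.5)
The corollary follows by instantiating \Cref{thm:regret-sqcb-comb-regsq} with the guarantee of \Cref{assum:online-reg-semi}. The work is in matching the semi-bandit data stream to the batch-mode oracle interface and checking that the prescribed $\gamma$ agrees with the one in the theorem up to constants.

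\textbf{Step 1: set up the reduction.} Take $\Gcal = \Fcal$, viewed as functions of $z = (x,a)$. At round $t$, the batch fed to $\batchoracle$ is
\[
\cbr{((x_t,a), r_t(a)) : s_t(a)=1}.
\]
By \Cref{assum:action-size}, each batch has size $B_t = \|s_t\|_1 \le m$, so $B = m$. All labels satisfy $r_t(a) \in [0,1]$. \Cref{assum:online-reg-semi} therefore gives
\[
\Reg_{\mathrm{batch}}(T) \lesssim m \log\abs{\Fcal}
\]
for every sequence, in particular pathwise for the realized sequence generated by \sqcomb. The oracle's predictor $\hat f_t$ is exactly the $\hat f_t$ used in line~\ref{line:sqcomb-regression}.

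\textbf{Step 2: plug into the theorem.} Substituting into the bound of \Cref{thm:regret-sqcb-comb-regsq} gives
\[
\EE\sbr{\Regret(T)} \lesssim \sqrt{A T \cdot m\log\abs{\Fcal}}.
\]

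\textbf{Step 3: the choice of $\gamma$ (main obstacle).} The theorem is stated with $\gamma = \sqrt{AT/\Reg_{\mathrm{batch}}(T)}$. That quantity is random and not known in advance, whereas the corollary fixes $\gamma = \sqrt{AT/(m\log\abs{\Fcal})}$. The plan is to go back to the structure the theorem's proof must have: a per-round bound of the form
\[
\EE\sbr{\Regret(T)} \lesssim \frac{AT}{\gamma} + \gamma\,\EE\sbr{\Reg_{\mathrm{batch}}(T)}.
\]
This comes from the log-barrier/IGW decision-estimation inequality, where the per-round regret is at most $A/\gamma$ plus $\gamma$ times the squared estimation error under $\bar p_t$, followed by realizability to pass from square-loss regret to estimation error. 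That bound holds for any fixed $\gamma$. Using the upper bound $\Reg_{\mathrm{batch}}(T) \le C\, m\log\abs{\Fcal}$ in place of the exact value, the fixed choice gives
\[
\frac{AT}{\gamma} + \gamma\, C\, m\log\abs{\Fcal} = (1+C)\sqrt{mAT\log\abs{\Fcal}},
\]
which is what the corollary asserts. The only delicate point is confirming that the intermediate inequality in the theorem's proof holds for arbitrary $\gamma>0$ and that $\Reg_{\mathrm{batch}}(T)$ enters only linearly through its expectation. I expect both to hold, since the oracle bound is worst-case and the expectation of the square-loss regret under realizability controls $\sum_t \EE\sbr{\opnorm{\hat f_t(x_t,\cdot)-f^\star(x_t,\cdot)}{\bar p_t}^2}$.
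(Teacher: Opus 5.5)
Your proposal is correct and follows the paper's argument. The paper obtains the corollary by substituting the oracle guarantee $\Reg_{\mathrm{batch}}(T) \lesssim m\log\abs{\Fcal}$ (batch size $B=m$, labels in $[0,1]$) into the intermediate bound $\EE\sbr{\Regret(T)} \lesssim \frac{AT}{\gamma} + \gamma\,\Reg_{\mathrm{batch}}(T)$ from the proof of \Cref{thm:regret-sqcb-comb-regsq}, which holds for any fixed $\gamma$ exactly as in your Step~3; your remark that the theorem's data-dependent $\gamma$ must be replaced by the fixed choice via this inequality usefully makes explicit a point the paper leaves implicit.
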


The bound is sublinear in $T$ and exhibits the same $\sqrt{mAT}$ dependence as the standard regret rate for the stochastic combinatorial semi-bandit problem under linear rewards~\citep{kveton2015tight}, while permitting the mean reward function to belong to a general class $\Fcal$.
When $m=1$, the contextual combinatorial semi-bandit setting reduces to the standard contextual bandit setting, and \Cref{thm:regret-sqcb-comb-regsq} recovers the $\sqrt{A T \log\abs{\Fcal}}$ regret of the IGW algorithm for contextual bandits~\citep{foster2020beyond} under realizability.
Compared with reducing CCSB to a contextual bandit problem on the slate space and applying a contextual bandit algorithm to the aggregated full-bandit reward, \sqcomb's regret bound scales as $\sqrt{m A T \log\abs{\Fcal}}$ rather than $m\sqrt{A T \log\abs{\Fcal}}$ (see \Cref{sec:cb-ccsb} for a formal justification); this gap can be significant when $m$ is large.

Specializing \Cref{corol:regret-sqcb-comb} to the three running examples gives:
\begin{itemize}
    \item \textit{Unordered $m$-set with $n$ items.} Each arm corresponds to one of the $n$ items, so $A = n$, and \Cref{corol:regret-sqcb-comb} yields regret $\iupboundlog{\sqrt{m n T \log\abs{\Fcal}}}$. This matches the $\iupboundlog{\sqrt{m n T \log\abs{\Pi}}}$ regret of \citet{kale2010non,krishnamurthy2016contextual}, which compete against the best policy in a class $\Pi$, while \sqcomb\ is computationally more efficient since it relies on a better practicality of regression oracles relative to classification oracles.
    
    \item \textit{Ordered $m$-slate with $n$ items.} Since $A = m n$, \Cref{corol:regret-sqcb-comb} yields regret $\iupboundlog{m\sqrt{n T \log\abs{\Fcal}}}$, comparable to the bound of \citet{kale2010non} with better computational efficiency.
    \item \textit{Contextual shortest path on a DAG.} Since $A = M$ and $m \le N$, \Cref{corol:regret-sqcb-comb} yields regret $\iupboundlog{\sqrt{N M T \log\abs{\Fcal}}}$. In comparison, applying the contextual MDP algorithm of \citet{levy2023efficient} requires solving an $N$-state, $N$-action MDP with episode length $H=N$, which yields regret at least $\iupboundlog{\sqrt{N^7 T \log\abs{\Fcal}}}$.

\end{itemize}

\paragraph{Extensions: infinite classes, misspecified setting, and offline oracle-efficient algorithm.}
\Cref{thm:regret-sqcb-comb-regsq} treats the regression oracle abstractly through its cumulative square-loss regret $\Reg_{\mathrm{batch}}(T)$, which readily yields regret-efficient CCSB algorithms beyond the basic realizable, finite-class setting by plugging in different online regression oracles.
For an infinite class $\Fcal$, instantiating the oracle with online square-loss regression based on sequential covering or sequential Rademacher complexity~\citep{rakhlin2014online,rakhlin2014statistical} replaces $m\log\abs{\Fcal}$ with the corresponding sequential complexity.
For the $\varepsilon$-misspecified setting, where $\inf_{f \in \Fcal}\sup_{x,a}\abs{f(x,a)-f^\star(x,a)} \le \varepsilon$, combining \sqcomb\ with the misspecification-aware online regression oracle of \citet{foster2020adapting} yields regret $\iupbound{\sqrt{mAT\log\abs{\Fcal}} + \varepsilon m \sqrt{A} T}$; the additive $\varepsilon T$ factor is unavoidable in light of the $\Omega(\varepsilon\sqrt{d}T)$ lower bound of \citet{lattimore2020learning} for the misspecified $d$-dimensional linear bandit problem.
In \Cref{sec:oe2d-comb}, we present a variant of \sqcomb\ that reduces CCSB to $O(\log T)$ offline regression problems while enjoying similar regret guarantees when the contexts are i.i.d.\ Given the wider availability of offline regression guarantees, this variant can be preferable in practice~\citep{foster2024online}.

\subsection{Proof sketch of \Cref{thm:regret-sqcb-comb-regsq}}
\label{sec:proof-sketch}

In this section, we provide a proof sketch of \Cref{thm:regret-sqcb-comb-regsq}.
We aim to adopt an analysis similar to SquareCB~\cite{foster2020beyond} that first bounds the instantaneous regret at round $t$ in terms of the expected regression error at the same round, and then conclude the regret bound by summing over all $t$'s.
However, SquareCB's analysis is inherently tied to bandit feedback (the learner receives a single reward observation per round) and thus does not extend to our semi-bandit setting. Furthermore, the general E2D algorithm~\citep{foster2021statistical} (see \Cref{sec:dmso} for a brief overview of the DMSO framework and the Decision-Estimation Coefficient) is not directly applicable here, since we do not have exact knowledge of the conditional distribution of $\cbr{r_t(a): s_t(a) = 1}$ given the reward model $f$ and $(x_t, s_t)$.
This inspires us to define a new notion of 
DEC customized to square loss and combinatorial semi-bandits: 
\begin{definition}
For a regressor $\hat{g}: \Acal \to [0,1]$, a function class $\Gcal$, and a combinatorial action set $\Scal$, define their combinatorial square-loss DEC (CS-DEC) as 
\begin{equation}
\dec_\gamma(\hat{g}, \Gcal) 
= 
\min_{p \in \Delta(\Scal)} 
\underbrace{ \max_{q \in \Delta(\Scal), g^\star \in \Gcal} \EE_{s \sim q} \sbr{\inner{g^\star}{s}} - \EE_{s \sim p} \sbr{\inner{g^\star}{s}} - 
\gamma \EE_{s \sim p} \sbr{ \sum_{a =1}^A s(a) (\hat{g}(a) - g^\star(a))^2 } }_{ =: F(p)}
,
\label{eqn:dec-distn}
\end{equation}
and define $\dec_\gamma(\Gcal) = \max_{\hat{g}} \dec_\gamma(\hat{g}, \Gcal)$. 
\end{definition}
The CS-DEC accounts for the semi-bandit feedback
by offsetting the instantaneous regret with the expected square-loss regression error over the entire set of arms associated with the combinatorial action; unlike DEC, it does not rely on exact probabilistic modeling of the reward distribution.
To use CS-DEC for algorithm design, we need to find a distribution $p \in \Delta(\Scal)$ that approximately minimizes $F(\cdot)$. This is in general difficult, since the corresponding optimization problem is $|\Scal|$-dimensional and $|\Scal|$ may be exponential in $m$. Our key observation is that the CS-DEC admits an equivalent representation in the space of participation vectors, which has the much lower dimension $A$, thereby admitting tractable solutions:
\begin{proposition} We have the following equivalent characterization: 
\begin{equation}
\dec_\gamma(\hat{g}, \Gcal) 
= 
\min_{\bar{p} \in \bar{\Scal}} \underbrace{ \max_{\bar{q} \in \bar{\Scal}, g^\star \in \Gcal} \inner{g^\star}{\bar{q}} - \inner{g^\star}{\bar{p}} - 
\gamma \sum_{a=1}^A \bar{p}(a) (\hat{g}(a) - g^\star(a))^2. }_{ =: \bar{F}(\bar p)}
\label{eqn:dec-part}
\end{equation}
In addition, any $p$ such that $\EE_{s \sim p}\sbr{s} = \bar{p}$ satisfies that $F(p) = \bar{F}(\bar p)$. 
\end{proposition}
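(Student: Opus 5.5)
The plan is to show that $F(p)$ depends on $p$ only through its participation vector $\bar p = \EE_{s\sim p}[s]$, and that $\bar F(\bar p) = F(p)$ whenever $p$ has marginals $\bar p$. The min over $\Delta(\Scal)$ then equals the min over $\bar\Scal$, because the map $p \mapsto \bar p$ sends $\Delta(\Scal)$ onto $\bar\Scal = \conv(\Scal)$. Here $\bar\Scal$ is the set of participation vectors of distributions over $\Scal$, i.e.\ the convex hull of $\Scal$.

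\textbf{Step 1 (linearity of each term).} Fix $p$, $q$, $g^\star$. Linearity of expectation gives $\EE_{s\sim p}[\inner{g^\star}{s}] = \inner{g^\star}{\EE_{s\sim p}[s]} = \inner{g^\star}{\bar p}$, and likewise $\EE_{s\sim q}[\inner{g^\star}{s}] = \inner{g^\star}{\bar q}$. For the offset term, $\sum_a s(a)(\hat g(a)-g^\star(a))^2 = \inner{s}{(\hat g - g^\star)^2}$ is linear in $s$, where the square is taken coordinatewise. Hence
\[
\EE_{s\sim p}\Big[\sum_a s(a)(\hat g(a)-g^\star(a))^2\Big] = \sum_a \bar p(a)(\hat g(a)-g^\star(a))^2 .
\]
So the inner objective of $F$ at $(p; q, g^\star)$ equals the inner objective of $\bar F$ at $(\bar p; \bar q, g^\star)$.

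\textbf{Step 2 (the inner max).} The inner objective depends on $q$ only through $\bar q$. The map $q \mapsto \bar q$ is surjective from $\Delta(\Scal)$ onto $\bar\Scal$, since every point of the convex hull is the mean of some distribution on $\Scal$. So the max over $(q, g^\star) \in \Delta(\Scal)\times\Gcal$ equals the max over $(\bar q, g^\star) \in \bar\Scal\times\Gcal$. This gives $F(p) = \bar F(\bar p)$ for every $p$ with $\EE_{s\sim p}[s] = \bar p$, which is the second claim.

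\textbf{Step 3 (the outer min).} By Step 2, $\min_p F(p) = \min_p \bar F(\bar p)$. Surjectivity of $p \mapsto \bar p$ onto $\bar\Scal$ then gives $\min_{p} F(p) = \min_{\bar p\in\bar\Scal} \bar F(\bar p)$. Attainment of the minima is harmless. $\Scal$ is finite, so $\bar\Scal$ is a compact polytope. $\Gcal$ is finite, so $\bar F$ is a finite max of continuous functions and attains its min over $\bar\Scal$. If one prefers, the argument can be phrased with infima throughout.

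The only point that needs care is that the offset term is linear in $s$. This holds because the regression error enters weighted by $s(a)\in\{0,1\}$ rather than through some nonlinear function of the whole set $s$. The argument is otherwise routine, and I do not expect a genuine obstacle.
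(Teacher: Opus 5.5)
Your proposal is correct and follows essentially the same approach as the paper. The paper gives no separate proof of this proposition; it treats it as immediate from linearity in $s$ of all three terms of $F$ (the same identity $\EE_{s\sim p}\sbr{\sum_a s(a)(\hat g(a)-g^\star(a))^2}=\sum_a \bar p(a)(\hat g(a)-g^\star(a))^2$ it uses in the proof of \Cref{thm:regret-sqcb-comb-regsq}) together with $\bar\Scal=\conv(\Scal)$ being exactly the set of marginals of distributions on $\Scal$, which is what your Steps 1--3 make explicit.
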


This enables the optimization problem~\eqref{eqn:dec-distn} to be approximately solved in two steps: first, compute a $\bar{p}$ that approximately solves~\eqref{eqn:dec-part}; second, convert $\bar{p}$ into a distribution $p \in \Delta(\Scal)$ satisfying $\EE_{s \sim p}[s] = \bar{p}$.

We show that the solution to the optimization problem in line~\ref{line:sqcomb-logbarrier} certifies that $\dec_\gamma( \hat{f}_t(x_t, \cdot), \Fcal_{x_t} ) \leq \frac{A}{\gamma}$, where $\Fcal_{x_t} := \cbr{f(x_t, \cdot): f \in \Fcal}$.
To this end, we design a surrogate min-max objective that relaxes the DEC, and establish two intermediate results: (i) the participation vector computed by the log-barrier optimization certifies a small value of the surrogate min-max objective; and (ii) a small value of the surrogate objective certifies a small value of the DEC.

Specifically, our surrogate objective is:
\begin{align}
    \inf_{\bar{p} \in \bar{\Scal}}
    \max_{\bar{q} \in \bar{\Scal}}
    \inner{\hat f_t(x_t,\cdot)}{\bar q}
    - \inner{\hat f_t(x_t,\cdot)}{\bar p}
    + \frac{1}{\gamma}\coverage(\bar p, \bar q).
        \label{eqn:doec}
\end{align}
where $\coverage(\bar p, \bar q) = \sum_{a \in \Acal} \frac{\bar q(a)^2}{\bar p(a)}$ quantifies the estimation error under the data collected using action distributions with participation vector $\bar p$ when trying to estimate the reward of action distribution with participation vector $\bar q$.
In \Cref{lemma:log-barrier-certify-doec}, we show that the $\bar{p}_t$ computed by the log-barrier optimization certifies a small value of the above surrogate objective.
Subsequently, in \Cref{lemma:doec-to-dec}, we show that any participation vector $\bar{p}_t$ that certifies the surrogate min-max objective to be smaller than $A / \gamma$ also certifies the DEC to be smaller than $O(A/\gamma)$.

Combining the above two lemmas yields the instantaneous regret decomposition:
\begin{align*}
    \max_{s \in \Scal} \inner{s}{f^\star(x_t,\cdot)}
    -
    \inner{\bar{p}_t}{f^\star(x_t,\cdot)}
    \lesssim
    \frac{A}{\gamma}
    +
    \gamma \opnorm{\hat{f}_t(x_t,\cdot)-f^\star(x_t,\cdot)}{\bar{p}_t}^2.
\end{align*}
By taking the instantaneous regret decomposition at each round and summing over $t$, we have
\begin{align*}
    \sum_{t=1}^T \max_{s \in \Scal} \inner{s}{f^\star(x_t,\cdot)}
    -
    \sum_{t=1}^T \inner{\bar{p}_t}{f^\star(x_t,\cdot)}
    &\lesssim
    \frac{A T}{\gamma}
    +
    \gamma \sum_{t=1}^T \opnorm{\hat{f}_t(x_t,\cdot)-f^\star(x_t,\cdot)}{\bar{p}_t}^2
    \\
    &\leq \frac{A T}{\gamma} + \gamma \Reg_{\mathrm{batch}}(T).
\end{align*}
Here, the second inequality is due to the batch-mode online regression oracle's guarantee as well as that $\EE\sbr{y_t(a) \mid x_t} = f^\star(x_t, a)$ for all $a \in \Acal$.
Finally, taking expectation over the randomness in the algorithm and history, choosing $\gamma=\sqrt{AT/\Reg_{\mathrm{batch}}(T)}$ yields the regret guarantee of \Cref{thm:regret-sqcb-comb-regsq}:
\[
    \EE\sbr{\Regret(T)}
    \lesssim
    \sqrt{A T \Reg_{\mathrm{batch}}(T)}.
\]
Under \Cref{assum:online-reg-semi}, substituting $\Reg_{\mathrm{batch}}(T)=m\log\abs{\Fcal}$ recovers the result in \Cref{corol:regret-sqcb-comb}.

\section{Lower Bound}
\label{sec:lower-bound-main}

We complement the upper bound of \Cref{corol:regret-sqcb-comb} with a matching lower bound, showing that \sqcomb\ is minimax optimal in $m$, $A$, $T$, and $\log\abs{\Fcal}$ up to logarithmic factors. The lower bound is parameterized by an upper bound $N$ on the size of the reward function class $\Fcal$ used in the construction.

\begin{restatable}[Lower bound for CCSB with finite function class]{theorem}{lowerBoundCCSB}
\label{thm:lower-bound-ccsb}
For any $m, A, T, N \in \NN$ such that $A/m \in \NN$, $N \geq A/m$, $T / \lfloor \log_{A/m} N \rfloor \in \NN$, and $T / \lfloor \log_{A/m} N \rfloor \geq 16 A / m$, there exists a CCSB problem with $A$ arms and combinatorial-action size at most $m$, and a reward function class $\Fcal$ with $\abs{\Fcal} \leq N$, such that for any algorithm $\Alg$ there is an environment realizable with respect to $\Fcal$ on which the expected regret of $\Alg$ is at least $\ilowboundlog{\sqrt{m\,A\,T \log N}}$.
\end{restatable}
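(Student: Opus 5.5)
We construct a hard instance by concatenating independent multi-armed semi-bandit subproblems, indexed by context, in the style of the classical contextual-bandit lower bound with $\log\abs{\Fcal}$ scaling.

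\textbf{Base instance.} Let $K = A/m$. Partition the $A$ arms into $m$ disjoint blocks $\Acal_1,\dots,\Acal_m$, each of size $K$. Take $\Scal$ to be the family of actions that choose exactly one arm from each block, so $\|s\|_1 = m$ and Assumption~\ref{assum:action-size} holds. Given a pattern $j=(j_1,\dots,j_m)\in[K]^m$, let arm $j_i$ of block $i$ have Bernoulli mean $1/2+\varepsilon$, and let every other arm have mean $1/2$.

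\textbf{Contexts and function class.} Set $L = \lfloor \log_K N\rfloor \ge 1$, and use $L$ contexts $x^1,\dots,x^L$. Context $x^\ell$ is presented deterministically during the $\ell$-th epoch of $T/L$ consecutive rounds; this is where the integrality assumptions on $T/L$ are used. To keep $\abs{\Fcal}\le N$, we use a single good arm per context shared across all blocks: the pattern is $(u,u,\dots,u)$ for $u\in[K]$. Then $\Fcal = \{f_{u_1,\dots,u_L} : u\in[K]^L\}$ has $K^L \le N$ elements, and the instance is realizable by construction.

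\textbf{Regret within one epoch.} Within a single epoch, the problem is a combinatorial semi-bandit consisting of $m$ parallel $K$-armed bandits that share the same unknown index $u$. We lower bound its regret by $\Omega(\sqrt{mK\cdot T/L})$ using the standard change-of-measure argument. Compare each $u$ to the null environment in which all means are $1/2$. Let $N_u$ be the expected number of pulls of arm $u$ summed over all blocks; then $\sum_u N_u = m\,T/L$. By Pinsker's inequality, together with $\mathrm{KL}\lesssim \varepsilon^2 N_u$, we get
\[
\EE_u[\text{pulls of } u] \le N_u + (mT/L)\,\varepsilon\sqrt{N_u}.
\]
Averaging over $u$, the regret is at least
\[
\varepsilon\bigl(mT/L - mT/(LK) - (mT/L)\,\varepsilon\sqrt{mT/(LK)}\bigr).
\]
Choosing $\varepsilon \asymp \sqrt{LK/(mT)}$ gives regret $\gtrsim \sqrt{mT K/L}\cdot m/\sqrt{m}$ per epoch, which equals $\sqrt{m\cdot mK\, T/L} = \sqrt{mA\,T/L}$ because $mK=A$. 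The condition $T/L\ge 16A/m = 16K$ ensures $\varepsilon\le 1/4$, so all means stay in $[0,1]$.

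\textbf{Summing over epochs.} Draw $u_1,\dots,u_L$ independently and uniformly. The epochs are then independent, since the learner's data from other epochs carries no information about $u_\ell$. Summing the per-epoch bounds gives
\[
L\sqrt{mAT/L} = \sqrt{mATL} = \sqrt{mAT\log N/\log K},
\]
which is $\tilde\Omega(\sqrt{mAT\log N})$.

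\textbf{Main obstacle.} The main difficulty is the per-epoch KL accounting with shared information. Because all $m$ blocks reveal the same index $u$, one round of semi-bandit feedback supplies $m$ samples, so the information is $m$-fold larger. We must check that the loss is also $m$-fold larger, so that the net scaling is $\sqrt{m}\cdot\sqrt{mK}$ rather than collapsing to $\sqrt{K}$. The KL divergence equals $\varepsilon^2$ times the total number of pulls of arm $u$ across blocks, which is bounded by $\sum_u N_u/K = mT/(LK)$ on average. This yields the tradeoff above.

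If the shared-index construction turns out to lose a factor, the fallback is to use independent indices per block. This gives $\abs{\Fcal}=K^{mL}$, so we would set $L=\lfloor\log_K N\rfloor/m$; the per-block bounds sum to $m\sqrt{K T/L}=\sqrt{mAT/(L m)}\cdot m$, which recovers the same rate up to logarithmic factors.
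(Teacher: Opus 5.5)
There is a genuine gap. Your shared-index construction cannot produce the $\sqrt{m}$ factor, and the per-epoch bound you state rests on an arithmetic slip. With $\tau = T/L$ and $\varepsilon \asymp \sqrt{K/(m\tau)}$, the leading term is $\varepsilon\cdot m\tau = \sqrt{mK\tau} = \sqrt{A\tau}$; the extra factor $m/\sqrt{m}$ has no source.

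Retuning $\varepsilon$ does not help. In your instance the learner may place different candidate indices in different blocks, and with independent arm rewards every round returns $m$ samples. The epoch is therefore just a $K$-armed bandit with $m\tau$ pulls at per-pull gap $\varepsilon$. Successive elimination over the $K$ indices, using all $m$ observations per round, has regret $\tilde{O}(\sqrt{Km\tau}) = \tilde{O}(\sqrt{A\tau})$ per epoch. So this construction can certify at most $\tilde{\Omega}(\sqrt{AT\log N})$.

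Your ``main obstacle'' paragraph points at exactly this: the $m$-fold gain in information and the $m$-fold gain in loss combine to $\sqrt{mK}$, not $\sqrt{m}\cdot\sqrt{mK}$. The fallback fails the same way. Independent blocks give $m\sqrt{K\tau}$ per epoch, but with only $L = \log_K N/m$ epochs the total is $m\sqrt{KTL} = \sqrt{mKT\log_K N} = \sqrt{AT\log_K N}$. This is again short by $\sqrt{m}$, because splitting the $\log N$ budget across blocks cancels the gain.

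The missing idea is to make one round of semi-bandit feedback no more informative than a single bandit sample, while keeping the per-round loss $m$-fold. The paper does this with the $m$-path instance of \citet{kveton2015tight}:
\begin{itemize}
\item $\Scal$ contains only the $K = A/m$ disjoint actions $p_j$. In your language, these are the ``diagonal'' actions choosing index $j$ in every block.
\item Each round draws one Bernoulli $w_{t,j}$ per path and copies it to all $m$ arms of $p_j$.
\end{itemize}
Then each round reveals one sample about one candidate, and a wrong choice costs $m\Delta$. With $\Delta = \sqrt{K/\tau}$ (rather than $\sqrt{K/(m\tau)}$), the per-epoch regret is $m\sqrt{K\tau} = \sqrt{mA\tau}$, and $\tau \ge 16K$ gives $\Delta \le 1/4$.

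Your epoch aggregation then goes through exactly as in the paper: uniform product prior on $[K]^L$, $|\Fcal| = K^L \le N$, and independence across epochs. Both ingredients are needed. With diagonal actions but independent arm rewards, each round again yields $m$ samples, and the bound falls back to $\sqrt{A\tau}$. This is why the paper stresses that its lower bound relies on dependence among $(r_t(a))_{a}$.
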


The construction (deferred to \Cref{sec:appendix-lower-bound}) partitions the horizon into $M = \lfloor \log_{A/m} N \rfloor$ equal-length intervals, assigns each interval a distinct context, and embeds an independent non-contextual $m$-path instance of \citet{kveton2015tight} into each interval. Aggregating the per-interval $\ilowbound{\sqrt{m A T / M}}$ regret across the $M$ intervals yields the displayed bound.

\paragraph{Minimax optimality.}
Setting $N = \abs{\Fcal}$ in \Cref{thm:lower-bound-ccsb} yields a lower bound of $\ilowboundlog{\sqrt{m\,A\,T \log\abs{\Fcal}}}$, which matches the upper bound of \Cref{corol:regret-sqcb-comb} up to polylogarithmic factors. Hence \sqcomb\ attains the minimax-optimal regret rate for CCSB with general reward function approximation up to logarithmic factors. To our knowledge, this is the first minimax-optimal regret guarantee for CCSB beyond the linear-reward setting.
We also remark that our lower bound crucially uses the dependence of $(r_t(a))_{a \in \Acal}$. Under the additional assumption that $(r_t(a))_{a \in \Acal}$ are independent given $x_t$ (as studied in~\cite{combes2015combinatorial}), we conjecture that the minimax regret bound may be strictly lower than $\iupboundlog{\sqrt{m\,A\,T \log\abs{\Fcal}}}$; we leave this as an interesting open question.

%% file: Contents/5.experiments.tex
\section{Experiments}
\label{sec:experiments}

We compare \sqcomb\ against several semi-bandit and contextual-bandit baselines on two public learning-to-rank corpora, running each algorithm for a single pass over the entire corpus, so that the horizon $T$ equals the corpus size.
We first search an optimal hyperparameter for each algorithm by maximizing the mean realized cumulative reward over the tuning seeds, then choose the best hyperparameter for each algorithm and compare their performance on $10$ disjoint runs.

\paragraph{Datasets.}
We use MSLR-WEB30k~\citep{qin2013letor} and the Yahoo!\ Learning-to-Rank Challenge Set~1~\citep{chapelle2011yahoo}. Both corpora are recast as CCSB instances via the standard supervised-to-bandit reduction~\citep{krishnamurthy2016contextual,qin2014contextual,foster2020beyond}, with per-document relevance labels in $\{0,1,2,3,4\}$ used as semi-bandit feedback. Following the conventions of \citet{krishnamurthy2016contextual}, we set the candidate pool size $A = 10$ and the slate size $m = 3$ on MSLR-WEB30k, and $A = 6$, $m = 2$ on Yahoo!\ LTR Set~1. After filtering out queries with fewer than $A$ candidate documents, each run makes one pass over the remaining queries, giving horizons of $T = 30{,}846$ rounds on MSLR-WEB30k and $T = 27{,}630$ rounds on Yahoo!\ LTR Set~1 (see \Cref{sec:appendix-experiments} for details).

\paragraph{Algorithms.}
We instantiate \sqcomb\ (\Cref{alg:sqcomb}) with a linear regression (\texttt{lin}) and a gradient-boosted regression tree ensemble (\texttt{gb}; see Appendix~\cref{sec:exp-algorithms} for details) as its regression oracle. We compare against four other algorithms:
\textsc{SquareCB.Lin}~\citep{foster2020adapting}, which replaces the per-arm IGW sampler with a log-determinant optimization problem over the $\binom{A}{m}$ size-$m$ super-arms, drawing the slate as one categorical sample from the resulting distribution and uses the summed semi-bandit reward feedback as bandit reward feedback;
\textsc{VCEE}~\citep{krishnamurthy2016contextual}, the policy-based contextual combinatorial bandit algorithm;
\textsc{LinUCB}~\citep{chu11contextual}, the linear UCB algorithm adapted to the semi-bandit setting; and
$\varepsilon$-greedy~\citep{langford2007epoch}, which maintains an online regression oracle of per-arm rewards and, at each round, plays a uniformly random size-$m$ slate with probability $\varepsilon$ or the predicted top-$m$ slate under the oracle otherwise.
We additionally report two non-learning algorithms for reference: a uniform-random slate (the floor), and a supervised skyline (the ceiling) that fits a regression oracle once on the entire labeled corpus and then greedily plays the in-sample top-$m$ arms every round; the table (\Cref{tab:main-final}) reports one skyline per regression oracle (\texttt{lin}, \texttt{gb2}, \texttt{gb5}).

\paragraph{Results.}
\Cref{fig:main} plots the per-round average reward (on a logarithmic round axis) and \Cref{tab:main-final} reports its value at the final round $t = T$, both over the $10$ disjoint Stage-2 seeds: the figure shades a $\pm 1$ standard-deviation band across the $10$ seeds, while the table reports the mean $\pm$ standard error ($\mathrm{std}/\sqrt{10}$).

\begin{figure}[t]
    \centering
    \includegraphics[width=\linewidth]{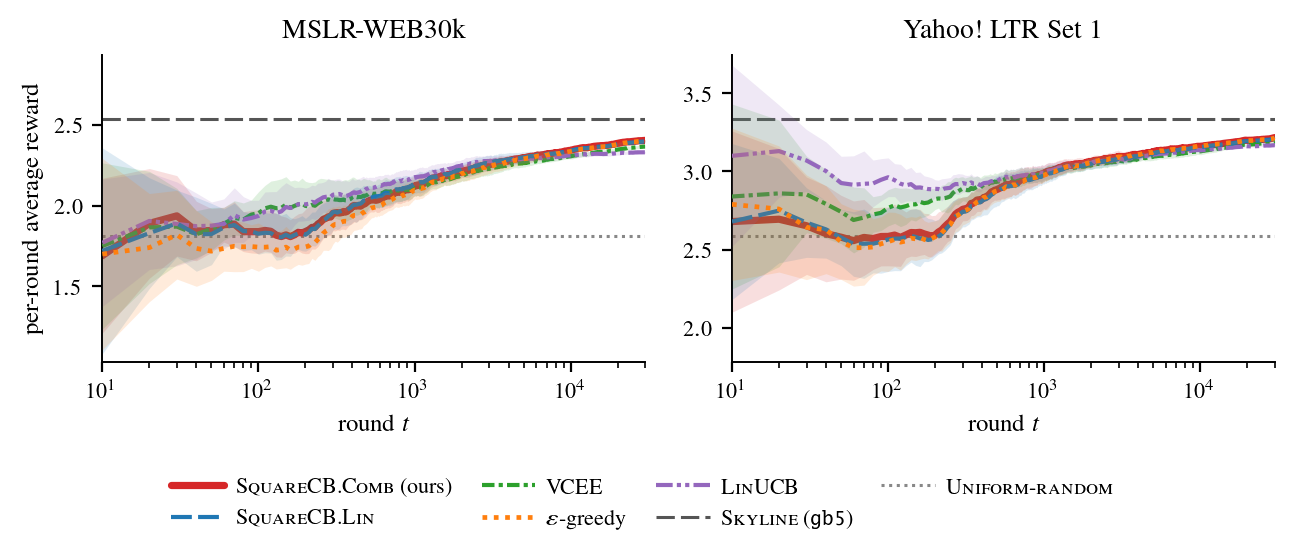}
    \caption{Per-round average reward on MSLR-WEB30k (left) and Yahoo!\ LTR Set~1 (right), with each online learner instantiated on the depth-$5$ gradient-boosted regression-tree (\texttt{gb5}) oracle (\textsc{LinUCB} is linear-only). The $X$-axis is logarithmic. Curves are the means over $10$ seeds with $\pm 1$ standard deviation. The supervised \textsc{Skyline} (\texttt{gb5}, the ceiling) and \textsc{Uniform-random} (the floor) references bracket the online algorithms.
    }
    \label{fig:main}
\end{figure}

\begin{table}[t]
    \centering
    \footnotesize
    \setlength{\tabcolsep}{3pt}
    \caption{Average per-round reward at the final round, broken out by regression oracle (\texttt{lin}, \texttt{gb2}, \texttt{gb5}). Mean $\pm$ standard error over $10$ seeds. In each column, the highest average reward among the online learners (excluding the \textsc{Skyline}) is shown in bold. \textsc{Skyline} and \textsc{Uniform-random} are non-online learning references that upper and lower bound the performance of online algorithms; \textsc{Skyline} is reported separately for each regression oracle.
    \textsc{LinUCB} is linear-only.}
    \label{tab:main-final}
    \resizebox{\linewidth}{!}{%
    \begin{tabular}{l ccc ccc}
        \toprule
        & \multicolumn{3}{c}{MSLR-WEB30k} & \multicolumn{3}{c}{Yahoo!\ LTR Set~1} \\
        \cmidrule(lr){2-4}\cmidrule(lr){5-7}
        Algorithm & \texttt{lin} & \texttt{gb2} & \texttt{gb5} & \texttt{lin} & \texttt{gb2} & \texttt{gb5} \\
        \midrule
        \textsc{Skyline} & $2.340 \pm 0.001$ & $2.402 \pm 0.001$ & $2.536 \pm 0.001$ & $3.209 \pm 0.001$ & $3.219 \pm 0.001$ & $3.335 \pm 0.001$ \\
        \midrule
        \sqcomb\ (ours) & $2.024 \pm 0.002$ & $\mathbf{2.373 \pm 0.002}$ & $\mathbf{2.404 \pm 0.001}$ & $3.141 \pm 0.001$ & $\mathbf{3.187 \pm 0.001}$ & $\mathbf{3.217 \pm 0.001}$ \\
        \textsc{SquareCB.Lin} & $2.028 \pm 0.002$ & $2.366 \pm 0.002$ & $2.398 \pm 0.002$ & $3.138 \pm 0.001$ & $3.184 \pm 0.001$ & $3.211 \pm 0.001$ \\
        \textsc{VCEE} & $2.252 \pm 0.002$ & $2.309 \pm 0.003$ & $2.366 \pm 0.003$ & $3.083 \pm 0.009$ & $3.132 \pm 0.001$ & $3.189 \pm 0.002$ \\
        $\varepsilon$-greedy & $2.030 \pm 0.002$ & $2.373 \pm 0.001$ & $2.401 \pm 0.002$ & $3.140 \pm 0.001$ & $3.186 \pm 0.001$ & $3.214 \pm 0.001$ \\
        \textsc{LinUCB} & $\mathbf{2.330 \pm 0.002}$ & --- & --- & $\mathbf{3.168 \pm 0.001}$ & --- & --- \\
        \midrule
        \textsc{Uniform-random} & \multicolumn{3}{c}{$1.815 \pm 0.002$} & \multicolumn{3}{c}{$2.591 \pm 0.002$} \\
        \bottomrule
    \end{tabular}%
    }
\end{table}

%% file: Contents/9.0.appendix.tex
\tableofcontents

\input{Contents/9.0.dmso.tex}
\input{Contents/9.1.squarecb-comb.tex}

\input{Contents/9.2.oe2d-comb.tex}

\input{Contents/9.3.full-bandit-regret.tex}
\input{Contents/9.4.experiments.tex}
\input{Contents/9.5.lower-bound.tex}

\input{Contents/9.x.supportings.tex}

%% file: Contents/9.0.dmso.tex
\section{Decision Making with Structured Observations}
\label{sec:dmso}

\citet{foster2021statistical} introduced a general framework for interactive decision making with structured observations (DMSO), which captures a wide range of problems including contextual bandits and reinforcement learning, and which subsumes the contextual combinatorial semi-bandit setting. The interactive protocol of DMSO is as follows. At the beginning, the learner has access to a model class $\Mcal$ that contains the true model $M^\star$; each model $M$ is associated with a reward function $f^M: \Pi \to \RR$. At every time step $t$:
\begin{enumerate}
    \item The learner selects a decision $\pi_t$ from a decision set $\Pi$;
    \item The environment draws an observation $o_t \sim M^\star(\pi_t)$ and sends it to the learner; the learner incurs reward $f^{M^\star}(\pi_t)$.
\end{enumerate}

The exploration-exploitation trade-off is a fundamental challenge in DMSO, and a key step in our analysis is to characterize its statistical complexity. The \textit{Decision-Estimation Coefficient} (DEC) of \citet{foster2021statistical} captures this complexity:
\[
\dec_\gamma(\Mcal, \hat M)
\coloneqq
\inf_{p \in \Delta(\Pi)} \sup_{q \in \Delta(\Pi),\, M^\star \in \Mcal}
\EE_{\pi \sim q} \sbr{ f^{M^\star}(\pi) }
- \EE_{\pi \sim p} \sbr{ f^{M^\star}(\pi) }
- \gamma \EE_{\pi \sim p} \sbr{ D_H^2 (\hat{M}(\pi), M^\star(\pi)) },
\]
where $D_H^2$ is the squared Hellinger distance between the distributions over observations induced by $\pi$ under $M$ and $\hat M$.

%% file: Contents/9.1.squarecb-comb.tex
\section{Additional Related Work}
\label{sec:addl-related-work}

\paragraph{Noncontextual combinatorial semi-bandit: stochastic and adversarial settings.}

The non-contextual combinatorial semi-bandit, also known as the combinatorial multi-armed bandit (CMAB), is a special case of the CCSB problem in which no contextual information is available.
The CMAB problem admits a variety of formulations in the literature~\citep{audibert2011minimax}.
\citet{anantharam_asymptotically_1987} first studied a special case of the CMAB problem under the fixed-cardinality constraint, in which the learner chooses $m$ arms per round, and proposed an algorithm with an asymptotically optimal regret guarantee.
\citet{gai2012combinatorial} studied the CMAB problem with arbitrary combinatorial constraints on the action space and proposed a UCB-based algorithm with an $\iupboundlog{m^3 A/\Delta^2}$ instance-dependent bound, where $\Delta$ is the minimum gap between the optimal and a suboptimal combinatorial action;
\citet{kveton2015tight} analyzed a similar UCB-based algorithm and improved the instance-dependent bound to $\iupboundlog{m A/\Delta}$ to achieve optimality when the arm set has dependency and obtained a $\iupbound{\sqrt{m A T}}$ minimax regret bound.
\citet{combes2015combinatorial} gives an optimal instance-dependent rate as $\iupboundlog{\sqrt{m} A/\Delta}$ when arms are independent.
\citet{chen2016combinatorial} extended the analysis to a general reward structure of a combinatorial action and gave both gap-dependent and gap-independent regret bounds.
\citet{jourdan2021efficient} studied the pure-exploration / best-arm identification problem with fixed confidence in the CMAB setting, where the learner aims to identify the optimal combinatorial action with high confidence.

Another line of work studies the CMAB problem in the adversarial setting with semi-bandit feedback, where each arm's reward is chosen by an adversary.
\citet{kale2010non} first studied the CMAB problem under the fixed-cardinality constraint and proposed an algorithm based on relative-entropy projections over the set of distributions on slates, achieving $\iupbound{\sqrt{m A T}}$ regret.
\citet{audibert14regret} adapted the Online Stochastic Mirror Descent (OSMD) algorithm to solve the adversarial CMAB problem and obtained optimal minimax regret bounds $\iupbound{\sqrt{m A T}}$;
\citet{zimmert2019beating} proposed an FTRL-based algorithm that achieves the first best-of-both-worlds guarantee, with $\iupbound{\sqrt{T}}$ regret in the adversarial setting and $\iupbound{\log T}$ regret in the stochastic setting;

\paragraph{Complexity measures of online decision-making problems.}
Our method follows the Estimation-to-Decisions (E2D) paradigm of \citet{foster2021statistical}, which reduces an online decision-making problem to a statistical estimation problem. \citet{foster2021statistical} introduced the Decision-Estimation Coefficient (DEC) as a complexity measure for the interactive decision-making problems and developed the Decision Making with Structured Observations (DMSO) framework around it.
This framework has inspired a line of work that follows the same reduction idea.
For example, \citet{foster2022complexity} extended the DEC to adversarial decision-making with structured observations; \citet{neu2024optimistic} bridged the Bayesian theory of information-directed sampling and the frequentist theory of E2D via the DEC; and \citet{kirschner2023regret} relaxed E2D to an anytime variant and applied it to linear bandits with side observations.

\section{Sampling from exploration distribution using linear optimization oracle}
\label{sec:sample-linear-opt-oracle}

Recall that a key step of Algorithm~\ref{alg:sqcomb} is to sample $s \in \Scal$ at random such that $\EE[s]$ is the solution of the following optimization problem:
\begin{equation}
\max_{\bar{s} \in \bar{S}} \inner{g}{\bar{s}} + \frac{1}{\gamma} \sum_{a=1}^A \log(\bar{s}(a))
\label{eqn:log-barrier-abstract}
\end{equation}
We show in this section that as long as $\Scal$ admits an efficient linear optimization oracle, this can be done efficiently. Indeed, Algorithm~\ref{alg:sample-soln} first solves the optimization above approximately using the Frank-Wolfe algorithm~\cite{jaggi2013revisiting}, which crucially maintains a sparse convex combination representation of the iterates $\bar{s}^i$'s -- specifically, at any point, $\bar{s}^i = \sum_{j=1}^i \alpha_j s^j$, where $\alpha \in \Delta^{i-1}$ and all $s^j$'s are elements in $\Scal$. 
Specifically, the last iterate, $\bar{s}^n$, is also a convex combination of $s^i, i=1,\ldots,n$.
Thus, sampling $I = i$ with probability $\alpha^i$ and returning $s^I$ that has expectation equal to $\bar{s}^n$, an approximate solution to~\eqref{eqn:log-barrier-abstract}.

\begin{algorithm}
\caption{Find a combinatorial action whose expectation approximately solves~\eqref{eqn:log-barrier-abstract}}
\label{alg:sample-soln}
\begin{algorithmic}
\STATE Let $\bar{s}^0$ be an arbitrary element in $\Scal$.
\STATE Let $\alpha = ()$.
\FOR{$i=0,1,\ldots,n-1$}
\STATE Find $s^{i+1} \gets \argmax_{s \in \Scal} \inner{s}{ \hat{g} + \frac{1}{\gamma \bar{s}^i } }$

\STATE $\bar{s}^{i+1} \gets (1 - \frac{2}{i+2}) \bar{s}^{i} + \frac{2}{i+2} s^{i+1}$ 

\STATE $\alpha \gets (\frac{i}{i+2} \alpha).\text{append}(\frac{2}{i+2})$

\ENDFOR

\STATE Sample $I \sim \mathrm{Categorical}(\alpha^1, \ldots, \alpha^n)$, and return $s^I$.

\end{algorithmic}
\end{algorithm}

\section{Regret Guarantees of SquareCB.comb}
In this section, we present and prove the regret guarantee of \sqcomb, and provide all necessary lemmas with their proofs in the following subsection.
\subsection{Lemmas about the reduction from DEC to the log-barrier optimization}

The analysis in this subsection is conducted conditional on the observed context $x_t$ and the estimated reward function $\hat{f}_t$ at round $t$, and we omit the dependence on $x_t$ and $\hat{f}_t$ for brevity. We write $g(a)$, $\hat{g}(a)$, and $\Gcal$ to denote $f^\star(x_t, a)$, $\hat{f}_t(x_t, a)$, and $\Fcal_{x_t} := \cbr{ f(x_t, \cdot): f \in \Fcal }$, respectively.
In the proof sketch of \Cref{thm:regret-sqcb-comb-regsq}, we noted that the solution of the log-barrier optimization problem in Eq.~\eqref{eqn:log-barrier} certifies that the CS-DEC is upper bounded by $\frac{A}{\gamma}$ up to a constant factor.
\begin{align*}
    \bar{p}_t = \arg \max_{\bar{q} \in \bar{\Scal}} \inner{\hat{g}}{\bar{q}} + \frac{1}{\gamma} \sum_{a \in \Acal} \log(\bar{q}(a))
\end{align*}
Specifically, we aim to show that the solution $\bar{p}_t$ satisfies the following inequality:
\begin{align}
    \max_{\bar{q} \in \bar{\Scal}, g^\star} \inner{g^\star}{\bar{q}} - \inner{g^\star}{\bar{p}_t} - \gamma \cdot \opnorm{g^\star - \hat{g}}{\bar{p}_t}^2
    \lesssim
    \frac{A}{\gamma},
        \label{eqn:small-dec}
\end{align}
To this end, we introduce a surrogate min-max optimization problem that connects the log-barrier optimization problem with Eq.~\eqref{eqn:small-dec}.
Specifically, the surrogate min-max optimization problem is defined as follows:
\begin{align}
    \min_{\bar{p} \in \bar{\Scal}} \max_{\bar{q} \in \bar{\Scal}} \inner{\hat{g}}{\bar{q}} - \inner{\hat{g}}{\bar{p}} + \frac{1}{\gamma} \coverage(\bar{p}, \bar{q})
        \label{eqn:small-doec}
\end{align}
where the coverage term is defined as $\coverage(\bar{p}, \bar{q}) = \sum_{a \in \Acal} \frac{\bar{q}(a)^2}{\bar{p}(a)}$.
We show in \Cref{lemma:log-barrier-certify-doec} that the solution $\bar{p}_t$ of the log-barrier optimization problem approximately solves Eq.~\eqref{eqn:small-doec}.
Taking Eq.~\eqref{eqn:small-doec} as an intermediate result, we then show in \Cref{lemma:doec-to-dec} that $\bar{p}_t$ certifies a small CS-DEC value (Eq.~\eqref{eqn:small-dec}).

\begin{lemma}[Solving log-barrier regularized problem approximately solves Eq.~\eqref{eqn:small-doec}]
    \label{lemma:log-barrier-certify-doec}
    For any $\hat{g}$ %
    , define 
    \[
        \Lcal(\bar{s}; \hat{g}) := \inner{\hat{g}}{\bar{s}} + \frac{1}{\gamma} \sum_{a \in \Acal} \log(\bar{s}(a))
    \]
    for any $\bar{s} \in \bar{\Scal}$. Let $\hat{p}$ be the optimal solution of $\max_{\bar{s} \in \bar{S}} \Lcal(\bar{s}; \hat{g})$,
    then we have
    \[
        \max_{\bar{q} \in \bar{\Scal}} \inner{\hat{g}}{\bar{q}} - \inner{\hat{g}}{\hat{p}} + \frac{1}{\gamma} \coverage(\hat{p}, \bar{q}) 
        \leq 
        \frac{A}{\gamma}.
    \]
\end{lemma}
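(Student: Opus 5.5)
The plan is to use the first-order optimality condition of the concave program $\max_{\bar s\in\bar\Scal}\Lcal(\bar s;\hat g)$. The log-barrier term tends to $-\infty$ on the boundary, so $\hat p(a)>0$ for every $a$ and $\Lcal$ is differentiable at $\hat p$. Its gradient there is $\hat g + \frac{1}{\gamma\hat p}$, understood coordinatewise. Since $\bar\Scal=\conv(\Scal)$ is convex and $\Lcal$ is concave, optimality of $\hat p$ is equivalent to the variational inequality
\begin{align*}
\inner{\hat g}{\bar q-\hat p} + \frac{1}{\gamma}\sum_{a\in\Acal}\frac{\bar q(a)-\hat p(a)}{\hat p(a)} \le 0 \qquad \forall \bar q\in\bar\Scal .
\end{align*}
Rearranging gives $\inner{\hat g}{\bar q}-\inner{\hat g}{\hat p} \le \frac{1}{\gamma}\big(A - \sum_a \frac{\bar q(a)}{\hat p(a)}\big)$ for every $\bar q$.

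Substituting this into the target quantity, for any fixed $\bar q$,
\begin{align*}
\inner{\hat g}{\bar q}-\inner{\hat g}{\hat p}+\frac{1}{\gamma}\coverage(\hat p,\bar q) \le \frac{1}{\gamma}\Big(A - \sum_{a}\frac{\bar q(a)}{\hat p(a)} + \sum_a\frac{\bar q(a)^2}{\hat p(a)}\Big) = \frac{1}{\gamma}\Big(A - \sum_a \frac{\bar q(a)\,(1-\bar q(a))}{\hat p(a)}\Big).
\end{align*}
Every $s\in\Scal$ lies in $\{0,1\}^A$, so each coordinate of a point in $\bar\Scal$ lies in $[0,1]$. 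Hence $\bar q(a)(1-\bar q(a))\ge 0$, the subtracted sum is nonnegative, and the bound is at most $A/\gamma$. Taking the maximum over $\bar q$ then yields the lemma.

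The proof is short, so the main care points are technical. First, $\hat p$ must be interior in each coordinate so that the gradient exists. This requires every arm to be covered by some $s\in\Scal$; if some arm is never covered, the objective is $-\infty$ everywhere and such arms should be dropped, replacing $A$ by the number of coverable arms, which only helps. Second, the first-order condition must be justified at a possibly boundary point of the polytope $\bar\Scal$. This follows from concavity: consider $\phi(\epsilon)=\Lcal(\hat p+\epsilon(\bar q-\hat p))$ on $[0,1]$, observe that $\phi'(0)\le 0$ because $\hat p$ maximizes $\phi$ at $\epsilon=0$, and compute $\phi'(0)$ explicitly. The key step is recognizing that the coverage term cancels against the linear barrier gradient term up to $\bar q(1-\bar q)\ge0$.
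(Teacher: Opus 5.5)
Your proof is correct and is essentially the paper's argument. Both use the first-order optimality condition of the log-barrier program to get $\inner{\hat g}{\bar q-\hat p}+\frac{1}{\gamma}\sum_a \bar q(a)/\hat p(a)\le A/\gamma$, then absorb the coverage term via $\bar q(a)^2\le \bar q(a)$ (which the paper uses silently and you make explicit through $\bar q(a)(1-\bar q(a))\ge 0$); your remarks on the positivity of $\hat p$ and on justifying the variational inequality at a boundary point of $\bar\Scal$ fill in details the paper takes for granted.
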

\begin{proof}[Proof of \Cref{lemma:log-barrier-certify-doec}]
    Throughout this proof, for any vector $u \in \RR^{A}$ with strictly positive entries, we write $\frac{1}{u} \in \RR^{A}$ for its entrywise reciprocal, and $u^2 \in \RR^A$ for its entrywise square.

    Since $\Lcal(\bar{q}; g)$ is a strictly concave function of $\bar{q}$ and $\bar{\Scal}$ is a convex set, we know that the optimal solution $\hat{p}$ is achieved when it meets the first-order optimality condition, that is, for any $\bar{q} \in \bar{\Scal}$, we have
    \begin{align*}
        \inner{\nabla_{\bar{q}} -\Lcal(\bar{q})\mid_{\bar{q}=\hat{p}}}{\bar{q} - \hat{p}} &\geq 0 \\
        \Leftrightarrow \inner{-\hat{g} - \frac{1}{\gamma \hat{p}}}{\bar{q} - \hat{p}} &\geq 0
            \tag{Gradient of $-\Lcal$ at $\hat{p}$}
        \\
        \Leftrightarrow \inner{\hat{g} + \frac{1}{\gamma \hat{p}}}{\bar{q} - \hat{p}} &\leq 0
        \\
        \Leftrightarrow \inner{\hat{g}}{\bar{q} - \hat{p}} + \frac{1}{\gamma} \inner{\frac{1}{\hat{p}}}{\bar{q}} &\leq \frac{A}{\gamma}
    \end{align*}

    Since $\hat{p}$ satisfies the above first-order optimality condition, we know that for any $\bar{q} \in \bar{\Scal}$,
    \begin{align}
        \inner{\hat{g}}{\bar{q} - \hat{p}} + \frac{1}{\gamma} \inner{\frac{1}{\hat{p}}}{\bar{q}} &\leq \frac{A}{\gamma} \nonumber \\
        \Rightarrow \inner{\hat{g}}{\bar{q}} - \inner{\hat{g}}{\hat{p}} + \frac{1}{\gamma} \inner{\frac{1}{\hat{p}}}{\bar{q}^2} &\leq \frac{A}{\gamma} \nonumber \\
        \Leftrightarrow \inner{\hat{g}}{\bar{q}} - \inner{\hat{g}}{\hat{p}} + \frac{1}{\gamma} \coverage(\hat{p}, \bar{q}) &\leq \frac{A}{\gamma}
            \label{eqn:log-barrier-optimality}
    \end{align}
    
\end{proof}

\begin{lemma}[Certifying Eq.~\eqref{eqn:small-dec} by Eq.~\eqref{eqn:small-doec}]
    \label{lemma:doec-to-dec}
    Let $\hat{g}$ be any reward estimate and $\hat{p} \in \bar{\Scal}$ a participation vector. If $\hat{p}$ satisfies the inequality
    \[
        \max_{\bar{q} \in \bar{\Scal}} \inner{\hat{g}}{\bar{q}} - \inner{\hat{g}}{\hat{p}} + \frac{1}{\gamma} \coverage(\hat{p}, \bar{q})
        \leq
        \frac{A}{\gamma},
    \]
    then $\hat{p}$ also satisfies the inequality
    \[
        \max_{\bar{q} \in \bar{\Scal},\, g^\star} \inner{g^\star}{\bar{q}} - \inner{g^\star}{\hat{p}} - \gamma \cdot \opnorm{g^\star - \hat{g}}{\hat{p}}^{2}
        \lesssim
        \frac{A}{\gamma},
    \]
    where we recall that the squared weighted norm is defined as $\opnorm{g^\star - \hat{g}}{\hat{p}}^{2} = \sum_{a \in \Acal} \hat{p}(a)\,\bigl(g^\star(a) - \hat{g}(a)\bigr)^{2}$.
\end{lemma}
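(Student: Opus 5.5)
Fix any $\bar{q} \in \bar{\Scal}$ and any $g^\star$. The plan is to split the true regret into a part measured under $\hat g$ plus two estimation-error terms, one at $\bar q$ and one at $\hat p$:
\[
\inner{g^\star}{\bar q} - \inner{g^\star}{\hat p}
= \bigl(\inner{\hat g}{\bar q} - \inner{\hat g}{\hat p}\bigr) + \inner{g^\star - \hat g}{\bar q} - \inner{g^\star - \hat g}{\hat p}.
\]
The first bracket is controlled by the hypothesis: it is at most $\frac{A}{\gamma} - \frac{1}{\gamma}\coverage(\hat p, \bar q)$. So it suffices to absorb the two error terms into the negative coverage term, the offset $-\gamma\opnorm{g^\star-\hat g}{\hat p}^2$, and an additive $O(A/\gamma)$.

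For the $\bar q$-term, write $\bar q(a)(g^\star(a)-\hat g(a)) = \frac{\bar q(a)}{\sqrt{\hat p(a)}}\cdot \sqrt{\hat p(a)}\,(g^\star(a)-\hat g(a))$. Apply AM-GM coordinatewise with weights chosen so the square of the first factor carries coefficient $\frac{1}{\gamma}$ and the second carries $\frac{\gamma}{4}$:
\[
\inner{g^\star-\hat g}{\bar q} \le \frac{1}{\gamma}\coverage(\hat p,\bar q) + \frac{\gamma}{4}\opnorm{g^\star-\hat g}{\hat p}^2 .
\]
The coverage part then exactly cancels the $-\frac1\gamma\coverage$ from the hypothesis. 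This cancellation is why the surrogate was designed with coverage defined as $\sum_a \bar q(a)^2/\hat p(a)$.

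For the $\hat p$-term, use the same trick with $\bar q$ replaced by $\hat p$:
\[
-\inner{g^\star-\hat g}{\hat p} \le \sum_a \hat p(a)\,\abs{g^\star(a)-\hat g(a)} \le \frac{1}{\gamma}\sum_a \hat p(a) + \frac{\gamma}{4}\opnorm{g^\star-\hat g}{\hat p}^2 .
\]
Here $\sum_a \hat p(a) = \EE_{s}\|s\|_1 \le m \le A$ by \Cref{assum:action-size}, so this contributes at most $A/\gamma$. An alternative that avoids the assumption is to note $\coverage(\hat p,\hat p)=\sum_a\hat p(a)$ and bound it using the hypothesis at $\bar q=\hat p$, which gives $\coverage(\hat p,\hat p)\le A$.

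Adding the three pieces gives
\[
\inner{g^\star}{\bar q} - \inner{g^\star}{\hat p} \le \frac{2A}{\gamma} + \frac{\gamma}{2}\opnorm{g^\star-\hat g}{\hat p}^2 \le \frac{2A}{\gamma} + \gamma\opnorm{g^\star-\hat g}{\hat p}^2 .
\]
Subtracting $\gamma\opnorm{g^\star-\hat g}{\hat p}^2$ and maximizing over $\bar q, g^\star$ yields the claim with constant $2$.

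The main obstacle is choosing the AM-GM constants so that the coverage term cancels exactly while the total square-error coefficient stays at most $\gamma$. With $\gamma/4$ for each of the two terms this is satisfied with room to spare.
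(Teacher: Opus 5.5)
Your proof is correct and follows the paper's own route. You use the same three-term decomposition; you bound the $\bar q$-term by $\frac{1}{\gamma}\coverage(\hat p,\bar q)+\frac{\gamma}{4}\opnorm{g^\star-\hat g}{\hat p}^2$ (coordinatewise AM-GM in place of the paper's Cauchy--Schwarz/OPE lemma followed by AM-GM, which is equivalent), so that the coverage term is cancelled by the hypothesis; and you handle the $\hat p$-term by AM-GM with $\sum_a \hat p(a)\le m$. Your explicit constants make the cancellation and the final subtraction of $\gamma\opnorm{g^\star-\hat g}{\hat p}^2$ exact, which the paper's $\lesssim$ bookkeeping glosses over. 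You also correctly note that $\sum_a\hat p(a)\le A$ holds without \Cref{assum:action-size}.
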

\begin{proof}[Proof of \Cref{lemma:doec-to-dec}]
    Fix the ground-truth reward function $g^\star$ and two participation vectors $\bar{s}, \hat{p} \in \bar{\Scal}$. Decompose the excess reward among $\bar{s}$ and $\hat{p}$ as follows:
    \begin{align*}
        \inner{g^\star}{\bar{s}} - \inner{g^\star}{\hat{p}}
        = \inner{\hat{g}}{\bar{s}} - \inner{\hat{g}}{\hat{p}}
        + \inner{g^\star - \hat{g}}{\bar{s}}
        + \inner{\hat{g} - g^\star}{\hat{p}}
    \end{align*}
    We bound the second difference by using the Off-policy evaluation (OPE) lemma (\Cref{lemma:ope-error}) followed by AM-GM:
    \begin{align*}
        \inner{g^\star - \hat{g}}{\bar{s}}
        &\leq
        \sqrt{ \coverage(\hat{p}, \bar{s}) \cdot \opnorm{g^\star - \hat{g}}{\hat{p}}^2}
        \\
        &\lesssim
        \frac{1}{\gamma} \coverage(\hat{p}, \bar{s}) + \gamma \cdot \opnorm{g^\star- \hat{g}}{\hat{p}}^2.
    \end{align*}
    For the third term, Cauchy-Schwarz combined with $\sum_{a \in \Acal} \bar{s}(a) \leq m, \forall \bar{s} \in \bar{\Scal}$ from \Cref{assum:action-size} and AM-GM yields
    \begin{align*}
        \inner{\hat{g} - g^\star}{\hat{p}}
        &\leq
        \sqrt{\idel{\sum_{a\in\Acal}\hat{p}(a)} \cdot \opnorm{\hat{g} - g^\star}{\hat{p}}^2}
        \\
        &\leq
        \frac{m}{\gamma} + \frac{\gamma}{4} \cdot \opnorm{\hat{g} - g^\star}{\hat{p}}^2.
    \end{align*}
    Plugging the two bounds back into the decomposition,
    \begin{align*}
        \inner{g^\star}{\bar{s}} - \inner{g^\star}{\hat{p}}
        \lesssim \inner{\hat{g}}{\bar{s}} - \inner{\hat{g}}{\hat{p}} + \frac{1}{\gamma} \coverage(\hat{p}, \bar{s}) + \frac{m}{\gamma} + \gamma \cdot \opnorm{g^\star - \hat{g}}{\hat{p}}^2.
    \end{align*}
    Since the above inequality holds for any $\bar{s} \in \bar{\Scal}$, we can take the maximum over $\bar{s}$ on the left-hand side and use the fact that $\hat{p}$ approximately solves Eq.~\eqref{eqn:small-doec} to upper bound the sum of first three terms on the right-hand side by $\frac{A}{\gamma}$, which gives
    \begin{align*}
        \inner{g^\star}{\bar{s}} - \inner{g^\star}{\hat{p}}
        &\lesssim \frac{A}{\gamma} + \frac{m}{\gamma} + \gamma \cdot \opnorm{g^\star - \hat{g}}{\hat{p}}^2
        \\
        \Rightarrow \inner{g^\star}{\bar{s}} - \inner{g^\star}{\hat{p}} - \gamma \cdot \opnorm{g^\star - \hat{g}}{\hat{p}}^2
        &\lesssim \frac{A}{\gamma}
    \end{align*}
    The last inequality holds for any $\bar{s} \in \bar{\Scal}$ and $g^\star \in \Gcal$, which completes the proof.
\end{proof}

\begin{lemma}[Solution of log-barrier optimization certifies small DEC]
    \label{lemma:certify-small-dec}
    Under \Cref{assum:realizable,assum:action-size}, the solution $\bar{p}_t$ of the log-barrier optimization problem in Eq.~\eqref{eqn:log-barrier} satisfies that for any $\bar{q} \in \bar{\Scal}$ and any $g^\star \in \Gcal$, we have
    \begin{align*}
        \inner{g^\star}{\bar{q}} - \inner{g^\star}{\bar{p}_t} - \gamma \cdot \opnorm{g^\star - \hat{g}}{\bar{p}_t}^2
        \lesssim
        \frac{A}{\gamma}.
    \end{align*}
\end{lemma}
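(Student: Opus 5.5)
The plan is to chain the two preceding lemmas; nearly all the work is already done there. Fix round $t$ and condition on $x_t$ and $\hat f_t$. In the notation of this subsection, $\hat g = \hat f_t(x_t,\cdot)$ and $\Gcal = \Fcal_{x_t}$. Since $\bar p_t$ is defined in Eq.~\eqref{eqn:log-barrier}, it is exactly the maximizer $\hat p$ of $\Lcal(\cdot;\hat g)$ over $\bar\Scal$.

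First, I must ensure this maximizer exists, is unique, and has strictly positive coordinates. This is what lets \Cref{lemma:log-barrier-certify-doec} use the first-order optimality condition with finite gradient $\hat g + \frac{1}{\gamma \hat p}$. The argument runs as follows.
\begin{itemize}
\item Assume without loss of generality that every arm is covered by some $s \in \Scal$; arms never covered can be dropped from $\Acal$. Then the average of covering actions is a point of $\bar\Scal$ with all coordinates positive, so the objective is finite somewhere.
\item $\bar\Scal$ is compact, and the objective tends to $-\infty$ at the boundary where some coordinate vanishes. Hence a maximizer exists and lies in the relative interior with respect to positivity.
\item Strict concavity of $\sum_a \log \bar p(a)$ gives uniqueness.
\end{itemize}
With this in hand, \Cref{lemma:log-barrier-certify-doec} gives
\[
\max_{\bar q \in \bar\Scal} \inner{\hat g}{\bar q} - \inner{\hat g}{\bar p_t} + \tfrac{1}{\gamma}\coverage(\bar p_t,\bar q) \le \tfrac{A}{\gamma}.
\]

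Second, I feed this inequality, with $\hat p = \bar p_t$, into \Cref{lemma:doec-to-dec}. Its hypothesis is exactly the displayed bound. Its conclusion gives, for every $\bar q\in\bar\Scal$ and every $g^\star\in\Gcal$,
\[
\inner{g^\star}{\bar q} - \inner{g^\star}{\bar p_t} - \gamma \opnorm{g^\star-\hat g}{\bar p_t}^2 \lesssim \tfrac{A}{\gamma}.
\]
The proof of \Cref{lemma:doec-to-dec} produces an additive $\frac{m}{\gamma}$ term from the third piece of its decomposition. This term is absorbed because $m \le A$: every $s\in\Scal$ is a 0/1 vector in $\RR^A$, so $\|s\|_1 \le A$, and we may take $m \le A$. \Cref{assum:action-size} is used there through $\sum_a \bar p_t(a) \le m$.

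Third, I check the constants carefully, since this is the only place the argument could break. In \Cref{lemma:doec-to-dec}, the AM-GM step applied to the second term yields $\frac{c}{\gamma}\coverage + c'\gamma\opnorm{\cdot}{}^2$, and the coverage coefficient need not be $1$. If it exceeds $1$, I would rescale. Concretely, I use AM-GM in the form $\sqrt{XY} \le \frac{X}{2\gamma} + \frac{\gamma Y}{2}$ so the coverage coefficient is at most $1$. Then $\frac{1}{2\gamma}\coverage \le \frac{1}{\gamma}\coverage$, which is dominated by the surrogate bound, while the leftover $\gamma$-weighted square-loss terms carry constant coefficients.

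To move those constant coefficients onto the left side, I would state the conclusion with $\gamma$ replaced by $\gamma/C$ for an absolute constant $C$, or equivalently apply the chain with parameter $C\gamma$. Both changes only alter the absolute constant hidden in $\lesssim$.

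Realizability (\Cref{assum:realizable}) is not needed for the pointwise inequality itself, since it holds for every $g^\star\in\Gcal$. It matters only later, when $g^\star$ is instantiated as $f^\star(x_t,\cdot)$. The expected main obstacle is therefore just the existence and positivity issue in the first step together with this constant bookkeeping; the substantive content is already supplied by the two lemmas.
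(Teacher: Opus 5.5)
Your proposal is correct and follows the paper's proof, which simply feeds \Cref{lemma:log-barrier-certify-doec} into \Cref{lemma:doec-to-dec} with $\hat p = \bar p_t$; your existence/positivity argument and the $m \le A$ absorption make explicit what the paper leaves implicit. One correction to your constant bookkeeping: with your AM-GM choice the square-loss coefficients already total $\tfrac{1}{2} + \tfrac{1}{4} = \tfrac{3}{4} \le 1$, so no rescaling is needed, and rerunning the chain with parameter $C\gamma$ would not be an equivalent alternative, since it changes $\bar p_t$ itself.
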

\begin{proof}[Proof of \Cref{lemma:certify-small-dec}]
    From \Cref{lemma:log-barrier-certify-doec}, we have that the solution $\bar{p}_t$ of the log-barrier optimization problem in Eq.~\eqref{eqn:log-barrier} certifies Eq.~\eqref{eqn:small-doec}.
    Then, applying \Cref{lemma:doec-to-dec} by letting $\hat{p} = \bar{p}_t$, we get that $\bar{p}_t$ certifies Eq.~\eqref{eqn:small-dec}.
\end{proof}

\subsection{Regret guarantee of \sqcomb}

We are now ready to bound the regret of \sqcomb\ at each round by leveraging the fact that the solution of the log-barrier optimization problem certifies a small DEC value. We define the regret of \sqcomb\ at round $t$ as $\regret_t(\bar{p}_t) = \max_{\bar{q} \in \bar{\Scal}} \inner{f^\star(x_t, \cdot)}{\bar{q}} - \inner{f^\star(x_t, \cdot)}{\bar{p}_t}$, which represents the difference between the expected reward of the best combinatorial action and that of the combinatorial action selected by \sqcomb\ at round $t$.
\begin{lemma}[Instantaneous regret upper bound of \sqcomb]
    \label{lemma:regret-per-round}
        Under \Cref{assum:realizable,assum:action-size}, for each round of the regret of \sqcomb\ is bounded by
        \begin{align}
            \regret_t(\bar{p}_t) \lesssim \frac{A}{\gamma} + \gamma \opnorm{f^\star(x_t, \cdot) - \hat{f}_t(x_t, \cdot)}{\bar{p}_t}^2.
        \end{align}
\end{lemma}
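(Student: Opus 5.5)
The plan is to obtain the per-round bound as a direct specialization of \Cref{lemma:certify-small-dec}, instantiated at round $t$ with the ground-truth function.

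Condition on the history up to round $t$ and on the context $x_t$. Under this conditioning, $\hat f_t$ is fixed, since it is produced by $\batchoracle$ from $\cbr{(x_\tau,s_\tau,o_\tau)}_{\tau<t}$ before $x_t$ is revealed. Likewise, $\bar p_t$ is the deterministic solution of \eqref{eqn:log-barrier}. We then set $\hat g = \hat f_t(x_t,\cdot)$ and $\Gcal = \Fcal_{x_t}$, as in the conventions of this subsection. By \Cref{assum:realizable}, $g^\star := f^\star(x_t,\cdot)$ lies in $\Gcal$, so $g^\star$ is an admissible choice in \Cref{lemma:certify-small-dec}.

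The next step is to apply \Cref{lemma:certify-small-dec} with this $g^\star$ and an arbitrary $\bar q\in\bar\Scal$. This gives
\[
\inner{f^\star(x_t,\cdot)}{\bar q}-\inner{f^\star(x_t,\cdot)}{\bar p_t}\lesssim \frac{A}{\gamma}+\gamma\opnorm{f^\star(x_t,\cdot)-\hat f_t(x_t,\cdot)}{\bar p_t}^2 .
\]
The right-hand side does not depend on $\bar q$, so we may take the maximum over $\bar q\in\bar\Scal$ on the left. That maximum is exactly $\regret_t(\bar p_t)$, which proves the claim. \Cref{assum:action-size} enters only through \Cref{lemma:doec-to-dec}, where it is used to absorb the $m/\gamma$ term into $A/\gamma$ via $m\le A$.

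There is one bookkeeping point to check. $\regret_t$ maximizes over $\bar\Scal$, whereas the regret in \eqref{eqn:regret} compares against $s^\star(x_t)\in\Scal$. A linear function is maximized over the convex hull at a vertex, so the two maxima coincide. Also, $\inner{\bar p_t}{f^\star(x_t,\cdot)}=\EE[\inner{s_t}{f^\star(x_t,\cdot)}\mid x_t,\text{history}]$, because $\sampleoracle$ guarantees $\EE[s_t]=\bar p_t$. The main obstacle is not technical; it is making sure the conditioning is legitimate, namely that $\hat f_t$ and $\bar p_t$ are measurable with respect to the past and $x_t$, so that the pointwise lemma applies without interference from the randomness in $s_t$.
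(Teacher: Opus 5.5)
Your proposal is correct and follows the paper's own proof: instantiate \Cref{lemma:certify-small-dec} at $g^\star = f^\star(x_t,\cdot)$, which is admissible by realizability, move the $\gamma$-weighted squared-error term to the right-hand side, and maximize over $\bar q \in \bar\Scal$. Your extra remarks are sound bookkeeping that the paper leaves implicit: the conditioning, the fact that a linear objective attains its maximum over $\bar\Scal$ at a vertex, and the absorption of $m/\gamma$ into $A/\gamma$.
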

\begin{proof}[Proof of \Cref{lemma:regret-per-round}]
    By \Cref{lemma:certify-small-dec}, the solution $\bar{p}_t$ of the log-barrier optimization problem in Eq.~\eqref{eqn:log-barrier} certifies that, for any $\bar{q} \in \bar{\Scal}$ and any $f^\star \in \Fcal$,
    \[
        \inner{f^\star(x_t, \cdot)}{\bar{q}} - \inner{f^\star(x_t, \cdot)}{\bar{p}_t} - \gamma \cdot \opnorm{f^\star(x_t, \cdot) - \hat{f}_t(x_t, \cdot)}{\bar{p}_t}^2
        \lesssim
        \frac{A}{\gamma}.
    \]
    Then, for the regret of \sqcomb\ at round $t$, we have
    \begin{align*}
        &\quad \regret_t(\bar{p}_t)
        \\
        &= 
        \max_{\bar{q} \in \bar{\Scal}} \inner{f^\star(x_t, \cdot)}{\bar{q}} - \inner{f^\star(x_t, \cdot)}{\bar{p}_t}
        \\
        &= 
        \max_{\bar{q} \in \bar{\Scal}} \inner{f^\star(x_t, \cdot)}{\bar{q}} - \inner{f^\star(x_t, \cdot)}{\bar{p}_t} - \gamma \cdot \opnorm{f^\star(x_t, \cdot) - \hat{f}_t(x_t, \cdot)}{\bar{p}_t}^2 + \gamma \cdot \opnorm{f^\star(x_t, \cdot) - \hat{f}_t(x_t, \cdot)}{\bar{p}_t}^2
        \\
        &\leq \frac{A}{\gamma} + \gamma \cdot \opnorm{f^\star(x_t, \cdot) - \hat{f}_t(x_t, \cdot)}{\bar{p}_t}^2,
    \end{align*}
    where the first inequality is due to Eq.~\eqref{eqn:small-dec}.
\end{proof}

We are now ready to prove the main regret guarantee of \sqcomb.

\begin{proof}[Proof of \Cref{thm:regret-sqcb-comb-regsq}]
    By \Cref{lemma:regret-per-round}, we bound the instantaneous regret of \sqcomb\ at every round $t$. Summing over $t = 1, \ldots, T$ and then taking expectation over all the randomness we have
    \begin{align*}
        \EE[\Regret(T)]
        &= \sum_{t=1}^T \EE[\regret_t(\bar{p}_t)]
        \\
        &\leq \sum_{t=1}^T \frac{A}{\gamma} + \gamma \cdot \EE[\opnorm{f^\star(x_t, \cdot) - \hat{f}_t(x_t, \cdot)}{\bar{p}_t}^2]
            \tag{Apply \Cref{lemma:regret-per-round}}
        \\
        &= \frac{T A}{\gamma} + \gamma \cdot \EE \sbr{ \EE_{s_t \sim p_t} \sbr{ \sum_{t=1}^T \sum_{a: s_t(a) = 1} \del{f^\star(x_t, a) - \hat{f}_t(x_t, a)}^2 } }
        \\
        &\leq \frac{T A}{\gamma} + \gamma\, \Reg_{\mathrm{batch}}(T)
    \end{align*}
    In the second-to-the-last equation, we use the definition of expectation.
    The last inequality uses the online regression guarantee in \Cref{assum:online-reg-semi}, where for any sequence of $(s_t, a_t, o_t)_{t=1}^T$, we have
    \begin{align*}
    & \EE\sbr{ \sum_{t=1}^T \sum_{a: s_t(a) = 1} (\hat{f}(x_t, a) - f^\star(x_t, a))^2 } \\ 
    = & 
    \EE\sbr{ \sum_{t=1}^T \sum_{a: s_t(a) = 1} 
    \rbr{ (\hat{f}(x_t, a) - r_t(a))^2 -  (f^\star(x_t, a) - r_t(a))^2 } 
    }
    \\ 
    \leq &  \Reg_{\mathrm{batch}}(T),
    \end{align*}
    where the equality is due to that $\EE\sbr{r_t(a) \mid x_t } = f^\star(x_t, a)$.

    Setting $\gamma = \sqrt{A T / \Reg_{\mathrm{batch}}(T)}$ balances the dominant exploitation term $T A/\gamma$ against the exploration penalty $\gamma \Reg_{\mathrm{batch}}(T)$, yielding $\Regret(T) \leq \Ocal\del{\sqrt{A T \Reg_{\mathrm{batch}}(T)}}$, which completes the proof.
\end{proof}

%% file: Contents/9.2.oe2d-comb.tex
\section{An Offline Oracle-efficient Variant: \oetdcomb}
\label{sec:oe2d-comb}

In this section, we present \oetdcomb, an epoch-based variant of \sqcomb\ that replaces the batch-mode online regression oracle $\batchoracle$ with an offline regression oracle $\offoracle$, in the spirit of the offline oracle-efficient contextual bandit algorithms FALCON~\citep{simchi2022bypassing} and the OE2D framework of \citet{qin2026taming}. The algorithm partitions the horizon $T$ into $M$ epochs with endpoints $0 = \tau_0 < \tau_1 < \cdots < \tau_M = T$; within each epoch $m$, the reward estimate $\hat f_m \in \Fcal$ is held fixed, and at every round the same log-barrier optimization problem (the analogue of Eq.~\eqref{eqn:log-barrier} with $\gamma$ replaced by $\gamma_m$) is solved to obtain a participation vector $\bar p_t \in \bar\Scal$. We assume access to an offline regression oracle $\offoracle$ that, given a batch of i.i.d.\ semi-bandit observations, returns the empirical risk minimizer $\hat f_m \in \Fcal$ of the squared loss summed over the observed coordinates; its on-policy estimation error is controlled by \Cref{lemma:off-reg-semi-bandit}.

\begin{algorithm}[H]
    \begin{algorithmic}
    \STATE \textbf{Input:} Semi-bandit tuple $(\Xcal, \Scal, \Acal)$, horizon $T$, offline regression oracle $\offoracle$, sampling oracle $\sampleoracle$, epoch schedule $\cbr{\tau_m}_{m=1}^M$, exploration schedule $\cbr{\gamma_m}_{m=1}^M$, reward function class $\Fcal$.
    \FOR{epoch $m = 1, 2, \ldots, M$}
        \STATE Let $\hat{f}_m = 0$ for $m=1$. Otherwise,
        Call $\offoracle$ to obtain $\hat f_m$ from the dataset $\cbr{(x_\tau, s_\tau, o_\tau)}_{\tau = \tau_{m-2}+1}^{\tau_{m-1}}$ collected in epoch $m-1$; 
        \FOR{round $t = \tau_{m-1}+1, \ldots, \tau_m$}
            \STATE Observe context $x_t$.
            \STATE Compute the participation vector
            \begin{align*}
                \bar{p}_t = \argmax_{\bar p \in \bar{\Scal}}
                \inner{\hat{f}_m(x_t, \cdot)}{\bar p} + \frac{1}{\gamma_m}\sum_{a \in \Acal} \log\del{\bar{p}(a)}.
            \end{align*}
            \STATE Sample a combinatorial action $s_t$ such that $\EE\sbr{s_t} = \bar{p}_t$ by calling $\sampleoracle(\bar{p}_t, \Scal)$. 
            
            \STATE Observe semi-bandit feedback $o_t = \cbr{r_t(a) : s_t(a) = 1}$.
        \ENDFOR
    \ENDFOR
    \end{algorithmic}
    \caption{OE2D-comb: epoch-based offline-regression variant of \sqcomb.}
    \label{alg:oe2d-comb}
\end{algorithm}

The optimization problem solved at every round of \oetdcomb\ is precisely the $A$-dimensional log-barrier optimization problem in \sqcomb\ (Eq.~\eqref{eqn:log-barrier}), instantiated with the per-epoch estimate $\hat f_m$ and exploration parameter $\gamma_m$. As in \sqcomb, the convex hull constraint $\bar p_t \in \bar\Scal$ ensures that $\bar p_t$ is realizable by some distribution over $\Scal$, and the sampling oracle $\sampleoracle$ converts $\bar p_t$ into a combinatorial action $s_t$ with matching marginals (cf.~\Cref{sec:reduction}). By \Cref{lemma:log-barrier-certify-doec}, the resulting $\bar p_t$ certifies the surrogate min-max objective of Eq.~\eqref{eqn:doec} at value at most $A/\gamma_m$, which we use below to prove \oetdcomb's regret bound:

\begin{theorem}
    \label{thm:oe2d-comb-restate}
    Suppose that the contexts $\cbr{x_t}_{t=1}^T$ are drawn i.i.d.\ from $\Dcal_\Xcal$. Under \Cref{assum:realizable,assum:action-size}, \oetdcomb\ with the doubling schedule $\tau_m = 2^m$ and exploration parameter $\gamma_m = \sqrt{A / \Reg_{\offoracle}(\Fcal, \tau_{m-1}/2)}$ satisfies
    \begin{align*}
        \EE\sbr{\Regret(T)}
        \leq \widetilde{\Ocal}\del{\sqrt{m\, A\, T \log\abs{\Fcal}}}.
    \end{align*}
\end{theorem}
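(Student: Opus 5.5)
We prove the theorem epoch by epoch, combining the per-context DEC certificate with an offline estimation bound that holds in expectation over the i.i.d.\ context distribution. Throughout, write $\bar p_m(x)$ for the participation vector that the epoch-$m$ log-barrier problem produces at context $x$; this defines a stationary policy within epoch $m$.

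\textbf{Step 1 (per-round inequality).} \Cref{lemma:log-barrier-certify-doec} and \Cref{lemma:doec-to-dec}, applied with $\hat g = \hat f_m(x,\cdot)$ and $\gamma=\gamma_m$, give for every context $x$:
\[
\max_{\bar q\in\bar\Scal}\inner{f^\star(x,\cdot)}{\bar q}-\inner{f^\star(x,\cdot)}{\bar p_m(x)}\lesssim \frac{A}{\gamma_m}+\gamma_m\opnorm{f^\star(x,\cdot)-\hat f_m(x,\cdot)}{\bar p_m(x)}^2 .
\]
Taking expectation over $x\sim\Dcal_\Xcal$ bounds the expected per-round regret in epoch $m$ by $A/\gamma_m$ plus $\gamma_m$ times the error of $\hat f_m$ measured under the \emph{current} policy $\bar p_m$. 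However, $\hat f_m$ is guaranteed to be accurate only under the \emph{previous} policy $\bar p_{m-1}$, via \Cref{lemma:off-reg-semi-bandit}:
\[
\EE_x\opnorm{f^\star-\hat f_m}{\bar p_{m-1}(x)}^2\le \Reg_{\offoracle}(\Fcal,\tau_{m-1}/2)\approx m\log|\Fcal|/\tau_{m-1}.
\]

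\textbf{Step 2 (distribution shift; the main obstacle).} We must transfer the error from $\bar p_{m-1}$ to $\bar p_m$. We avoid bounding a density ratio directly and instead rerun the proof of \Cref{lemma:doec-to-dec}, keeping the coverage term explicit. The decomposition in that proof, with the OPE lemma used relative to $\bar p_{m-1}$, gives
\[
\inner{f^\star-\hat f_m}{\bar s}\le \frac{1}{\gamma_m}\coverage(\bar p_{m-1}(x),\bar s)+\gamma_m\opnorm{f^\star-\hat f_m}{\bar p_{m-1}(x)}^2 .
\]
What remains is to control $\coverage(\bar p_{m-1},\bar p_m)$ and $\coverage(\bar p_{m-1},\bar s^\star)$ in expectation over $x$, which is the FALCON/OE2D-style implicit-exploration argument. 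We proceed by induction on epochs, showing that both
\[
\EE_x\,\coverage(\bar p_{m-1}(x),\bar s)\qquad\text{and}\qquad \EE_x\big[\text{regret of }\bar s\text{ under }\hat f_{m-1}\big]
\]
are controlled by the regret of $\bar s$ under $f^\star$ plus $A/\gamma_{m-1}$. The ingredient is the optimality condition from the proof of \Cref{lemma:log-barrier-certify-doec}: for every $\bar q$,
\[
\frac{1}{\gamma_{m-1}}\coverage(\bar p_{m-1},\bar q)\le \frac{A}{\gamma_{m-1}}+\inner{\hat f_{m-1}}{\bar p_{m-1}-\bar q}.
\]
The gap on the right is then related to the true gap via the estimation error of $\hat f_{m-1}$, using the inductive hypothesis. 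The argument needs $\gamma_m/\gamma_{m-1}$ bounded by a constant, which the doubling schedule guarantees. Getting these constants to close is the delicate part; if the induction fails to close directly, we would add a factor-of-two slack in $\gamma_m$.

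\textbf{Step 3 (summing).} The induction yields an expected per-round regret in epoch $m$ of order
\[
\frac{A}{\gamma_m}+\gamma_m\Reg_{\offoracle}(\Fcal,\tau_{m-1}/2)\approx\sqrt{A\,m\log|\Fcal|/\tau_{m-1}} .
\]
Multiplying by the epoch length $\tau_m-\tau_{m-1}\approx\tau_{m-1}$ and summing over the $O(\log T)$ epochs gives a geometric series dominated by its last term, hence $\widetilde\Ocal(\sqrt{mAT\log|\Fcal|})$. The first epoch contributes only $O(m)$.
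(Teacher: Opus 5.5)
Your proposal is correct and is essentially the paper's argument with the black box opened. The paper verifies the two hypotheses of the cited epoch-based result (Theorem~\ref{thm:abstract-oe2d}) using exactly your ingredients: the log-barrier optimality condition of \Cref{lemma:log-barrier-certify-doec}, which bounds $\coverage(\bar p_{m-1},\cdot)$ by $A$ plus $\gamma_{m-1}$ times the estimated regret, and \Cref{lemma:ope-error} taken relative to $\bar p_{m-1}$. It then inserts \Cref{lemma:off-reg-semi-bandit} and sums over the doubling epochs, and your FALCON-style induction is precisely what that theorem packages.

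Two refinements. First, the constants close with $\gamma_m$ exactly as stated if you use AM-GM weight $1/(c\gamma_m)$ with a large absolute constant $c$ in the off-policy step; the weight $1/\gamma_m$ can leave a coefficient of at least one on the regret term you need to absorb. Second, the property of the schedule the induction actually uses is $\gamma_{m-1}\le\gamma_m$, not a bound on $\gamma_m/\gamma_{m-1}$.
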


\begin{proof}[Proof sketch of \Cref{thm:oe2d-comb-restate}]
    
    Our proof utilizes Theorem~\ref{thm:abstract-oe2d} (deferred after the proof), a regret theorem for an epoch-based decision making algorithm abstracted from~\citet{qin2026taming}.
    First, we will adopt a policy-viewpoint of \oetdcomb: 
    at epoch $m$, we view the algorithm as executing the policy $\pi(x)$ with participation vector:
    \[
    \bar{\pi}_m(x)
    = 
    \argmax_{\bar p \in \bar{\Scal}}
    \inner{\hat{f}_m(x, \cdot)}{\bar p} + \frac{1}{\gamma_m}\sum_{a \in \Acal} \log\del{\bar{p}(a)};
    \]
    Therefore, the regret of the algorithm can be represented as
    \[
    \EE\sbr{\Regret(T)}
    =
    \sum_{m=1}^M (\tau_m - \tau_{m-1}) \EE_{x \sim \Dcal_\Xcal} \sbr{ \max_{\lambda \in \Lambda} \inner{f^\star(x,\cdot)}{\lambda} - \inner{f^\star(x,\cdot)}{\pi_m(x)} }.
    \]
    
    We now apply Theorem~\ref{thm:abstract-oe2d} by taking $\Lambda = \bar{\Scal}$, $p_m = \bar{\pi}_m$, 
    $C = \coverage$ (recall that 
    $\coverage(\bar{p}, \bar{q}) = \sum_{a=1}^A \frac{\bar{q}(a)^2}{\bar{p}(a)}$), $D_m = \frac{A}{\gamma_m}$ and $E_m = \opnorm{\hat{f}_m - f^\star}{\bar{p}_{m-1}}^2$.
    The two conditions of Theorem~\ref{thm:abstract-oe2d} are satisfied:
    \begin{itemize}
    \item Condition~\ref{item:p-m-doec} follows from Lemma~\ref{lemma:log-barrier-certify-doec};

    \item Condition~\ref{item:ope} follows from Lemma~\ref{lemma:ope-error};

    \end{itemize}

    Thus, applying the theorem,
    \begin{align*}
        &\EE\sbr{\Regret(T)}  \\
        \leq & \EE\sbr{\tau_1 + \del{\max_{n \in [2, M]} \frac{\tau_n}{\gamma_n}} \del{A + \max_{n \in [2, M]} \gamma_n^2 \opnorm{\hat{f}_m - f^\star}{\bar{p}_{m-1}}^2 } }
        \\
        \lesssim &
        \tau_1 
        + 
        \rbr{ \max_{n \in [2,M]} \frac{\tau_n}{\gamma_n} }\cdot \rbr{ A + \max_{n \in [2,M]} \gamma_n^2 \Reg_{\offoracle}(\Fcal, \tau_{n-1}/2) } \\
        \lesssim & 1 
        + \max_{n \in [2,M]} \tau_n \sqrt{ \frac{\Reg_{\offoracle}(\Fcal, \tau_{n-1}/2) }{A} } \cdot A \\
        \lesssim & 
        1 + \max_{n \in [2,M]} 2^n \cdot \sqrt{ \frac{ m A  \ln|\Fcal| }{2^n} } 
        \lesssim 
        \sqrt{m A T \ln|\Fcal|}.
    \end{align*}
    where the first inequality is by Theorem~\ref{thm:abstract-oe2d}; the second inequality uses that $\EE\isbr{\opnorm{\hat{f}_m - f^\star}{\bar{p}_{m-1}}^2} \leq \Reg_{\offoracle}(\Fcal, \tau_{m-1}/2)$ by Lemma~\ref{lemma:off-reg-semi-bandit};
    the third inequality uses $\gamma_m = \sqrt{A / \Reg_{\offoracle}(\Fcal, \tau_{m-1}/2)}$; and the forth inequality uses that $\Reg_{\offoracle}(\Fcal, \tau_{m-1}/2) = \sqrt{ \frac{m \ln|\Fcal|}{\tau_{m-1}} }$ and the definition of $\tau_n = 2^n$; the last inequality is by algebra. 
    \end{proof}

\begin{theorem}[Abstract version of~\cite{qin2026taming}, Theorem 1]
\label{thm:abstract-oe2d}
Suppose $\Lambda \subset \RR_+^A$, and 
we have a sequence of reward predictors 
$(\widehat{G}_m)_{m=1}^M \subset \RR^{\Acal}$
and distributions $(p_m)_{m=1}^M \subset \Lambda$, 
a mapping $C: \Lambda \times \Lambda \mapsto \RR^+$,
and sequence of positive numbers 
$\cbr{(D_m, E_m)}_{m=1}^M$ and a sequence of nonnegative increasing numbers $\cbr{\gamma_m}_{m=1}^M$, 
that satisfies: 
\begin{enumerate}
\item For every $m \geq 1$, $p_m$ is such that
\[
    \max_{\lambda \in \Lambda}\,
    \inner{\widehat{G}_m}{\lambda} - \inner{\widehat{G} _m}{p_m} + \frac{1}{\gamma_m} C (p_m, \lambda)
    \leq 
    D_m
\]
\label{item:p-m-doec}
\item For every $m \geq 2$, and for any $\lambda \in \Lambda$, 
\[
    \abs{ \inner{\widehat{G}_m}{\lambda} 
    - 
    \inner{G^\star}{\lambda}
    }
    \leq 
    \sqrt{ C(p_{m-1}, \lambda) E_{m-1} }
\] \label{item:ope}
\end{enumerate}
Then, the following holds: 
\begin{align*}
& \sum_{m=1}^M (\tau_m - \tau_{m-1}) \sbr{ \max_{\lambda \in \Lambda} \inner{G^\star}{\lambda} - \inner{G^\star}{p_m} } \\
\lesssim & \tau_1 
+ 
\rbr{ \max_{n \in [2,M]} \frac{\tau_n}{\gamma_n} }\cdot \rbr{ \max_{n \in [1,M]} \gamma_n D_n + \max_{n \in [2,M]} \gamma_n^2 E_n }
\end{align*}
\end{theorem}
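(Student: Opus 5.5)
We plan to follow the standard epoch-based inductive argument of FALCON~\citep{simchi2022bypassing} and \citet{qin2026taming}, phrased in terms of the abstract coverage $C$. For $\lambda \in \Lambda$, write $\Reg^\star(\lambda) := \max_{\lambda'}\inner{G^\star}{\lambda'} - \inner{G^\star}{\lambda}$ and $\widehat{\Reg}_m(\lambda) := \max_{\lambda'}\inner{\widehat{G}_m}{\lambda'} - \inner{\widehat{G}_m}{\lambda}$. Epoch $1$ contributes at most $\tau_1$ after normalizing rewards. So it suffices to show that for every $m \ge 2$,
\[
\Reg^\star(p_m) \lesssim \frac{1}{\gamma_m}\Bigl(\max_{n \le m}\gamma_n D_n + \max_{2\le n\le m}\gamma_n^2 E_{n}\Bigr) =: \frac{K_m}{\gamma_m}.
\]
Multiplying by $\tau_m - \tau_{m-1} \le \tau_m$ and summing over the $M$ epochs then gives the claim. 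The summation costs a factor $M$, or a constant if $\tau_m/\gamma_m$ grows geometrically; this is absorbed in $\lesssim$ up to logarithmic factors.

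\textbf{Key lemma, proved by induction on $m \ge 1$.} For all $\lambda \in \Lambda$,
\[
\Reg^\star(\lambda) \le 2\widehat{\Reg}_m(\lambda) + c_1 K_m/\gamma_m
\quad\text{and}\quad
\widehat{\Reg}_m(\lambda) \le 2\Reg^\star(\lambda) + c_1 K_m/\gamma_m .
\]
The base case $m=1$ needs a trivial bound and costs only $\tau_1$; one may also start the induction at $m=2$.

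\textbf{Inductive step.} First we compare the two regret functions via condition~\ref{item:ope}. For any $\lambda$ and any reference point $\lambda^\star$ (the maximizer of $\inner{G^\star}{\cdot}$ or of $\inner{\widehat G_m}{\cdot}$),
\[
\bigl|\Reg^\star(\lambda) - \widehat{\Reg}_m(\lambda)\bigr| \le \sqrt{C(p_{m-1},\lambda)E_{m-1}} + \sqrt{C(p_{m-1},\lambda^\star)E_{m-1}} .
\]
Next we apply AM-GM with a small weight, $\sqrt{CE} \le \frac{C}{5\gamma_{m-1}} + \frac{5}{4}\gamma_{m-1}E_{m-1}$. We then bound the coverage using condition~\ref{item:p-m-doec} at epoch $m-1$: rearranging it gives
\[
\frac{1}{\gamma_{m-1}}C(p_{m-1},\lambda) \le D_{m-1} + \widehat{\Reg}_{m-1}(\lambda) - \widehat{\Reg}_{m-1}(p_{m-1}) \le D_{m-1} + \widehat{\Reg}_{m-1}(\lambda).
\]
By the induction hypothesis, $\widehat{\Reg}_{m-1}(\lambda) \le 2\Reg^\star(\lambda) + c_1K_{m-1}/\gamma_{m-1}$. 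The same bound with $\lambda^\star$ in place of $\lambda$ gives an $O(K_{m-1}/\gamma_{m-1})$ term, since $\Reg^\star(\lambda^\star)$ is either $0$ or bounded by $\widehat{\Reg}_m(\lambda^\star)=0$ plus the error.

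Collecting these, the discrepancy is at most $\tfrac{2}{5}\Reg^\star(\lambda)$ plus terms of order $D_{m-1}$, $\gamma_{m-1}E_{m-1}$ and $K_{m-1}/\gamma_{m-1}$. Since $\gamma$ is nondecreasing, each of these is at most a constant times $K_m/\gamma_m$. Indeed $D_{m-1} = \gamma_{m-1}D_{m-1}/\gamma_{m-1}$, which is at most $K_m/\gamma_{m-1}$. The slight mismatch between $\gamma_{m-1}$ and $\gamma_m$ must be handled: either $K_m$ absorbs it (it contains $\gamma_n^2E_n$ and $\gamma_nD_n$), or we assume $\gamma_m \le 2\gamma_{m-1}$, which holds for the doubling schedules used. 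Solving the resulting inequality $\Reg^\star \le \widehat{\Reg}_m + \tfrac{2}{5}\Reg^\star + \cdots$ for $\Reg^\star$ gives the factor $2$, and the reverse direction is symmetric. This closes the induction.

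\textbf{Conclusion and main obstacle.} Apply the lemma with $\lambda = p_m$. Condition~\ref{item:p-m-doec}, with $\lambda$ the maximizer of $\inner{\widehat G_m}{\cdot}$ and $C \ge 0$, gives $\widehat{\Reg}_m(p_m) \le D_m \le K_m/\gamma_m$. Hence $\Reg^\star(p_m) \lesssim K_m/\gamma_m$, and summing completes the proof. The main obstacle is closing the induction with constants that do not blow up across epochs. Specifically, the AM-GM weights must keep the coefficient on $\Reg^\star$ below $1$, and $K_{m-1}/\gamma_{m-1}$ must be shown to be at most $O(K_m/\gamma_m)$ without geometric accumulation. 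We plan to define $K_m$ as a running maximum so that it is monotone, and to use the monotonicity of $\gamma_m$ (plus a bounded-growth assumption if needed) to prevent the constant $c_1$ from growing with $m$.
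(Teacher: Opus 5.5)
The paper itself gives no proof of this statement; it imports it from Qin et al. Your two-sided FALCON induction is the right template, but the inductive step does not close as written, and the issue you flag as the main obstacle is resolved in the wrong direction.

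Because you run AM--GM at scale $\gamma_{m-1}$, every leftover term ($D_{m-1}$, $\gamma_{m-1}E_{m-1}$, $c_1K_{m-1}/\gamma_{m-1}$) is of order $K/\gamma_{m-1}$. For nondecreasing $\gamma$ we have $1/\gamma_{m-1}\ge 1/\gamma_m$, so your sentence ``since $\gamma$ is nondecreasing, each of these is at most a constant times $K_m/\gamma_m$'' is backwards. $K_m$ cannot absorb the mismatch either, since it contains $\gamma_{m-1}D_{m-1}$ and not $\gamma_mD_{m-1}$. The fallback $\gamma_m\le 2\gamma_{m-1}$ is not a hypothesis of the theorem. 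Even granting it, your weights fail: from $\tfrac35\Reg^\star(\lambda)\le\widehat{\Reg}_m(\lambda)+\tfrac25c_1K_{m-1}/\gamma_{m-1}+\cdots$ you get $\tfrac23c_1K_{m-1}/\gamma_{m-1}\le\tfrac43c_1K_m/\gamma_m$, so $c_1$ grows geometrically in $m$.

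The missing idea is to calibrate the estimation error of $\widehat G_m$ against the current $\gamma_m$ and to keep the coverage in multiplied form. Then monotonicity is used in the favorable direction and the inherited term becomes scale-free. Set $K:=\max_n\gamma_nD_n+\max_{n\ge2}\gamma_n^2E_n$ and take the hypothesis $\widehat{\Reg}_{m-1}(\lambda)\le2\Reg^\star(\lambda)+cK/\gamma_{m-1}$. Condition~1 at $m-1$, this hypothesis, and $\gamma_{m-1}\le\gamma_m$ give
\[
C(p_{m-1},\lambda)\le\gamma_{m-1}\bigl(D_{m-1}+\widehat{\Reg}_{m-1}(\lambda)\bigr)\le(1+c)K+2\gamma_m\Reg^\star(\lambda).
\]
If the error $E$ used in condition~2 at epoch $m$ satisfies $\gamma_m^2E\le K$, then
\[
\sqrt{C(p_{m-1},\lambda)\,E}\le\frac{\sqrt{1+c}\,K}{\gamma_m}+\sqrt{\frac{2K\Reg^\star(\lambda)}{\gamma_m}}\le\frac15\Reg^\star(\lambda)+\Bigl(\sqrt{1+c}+\frac52\Bigr)\frac{K}{\gamma_m}.
\]
The inherited constant now enters only as $\sqrt{1+c}$. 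The induction therefore closes for an absolute $c$ (e.g.\ $c=20$), with no growth condition on $\gamma$.

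This calibration is available only under the indexing the application actually uses, $E_m=\|\hat f_m-f^\star\|^2_{\bar p_{m-1}}$, i.e.\ condition~2 read with $E_m$ in place of $E_{m-1}$. Read literally, the statement is false, and your proof breaks exactly at $m=2$. $E_1$ enters condition~2 at $m=2$ but never appears on the right-hand side: your $K_2$ omits it, while your step produces $\tfrac54\gamma_1E_1$. A counterexample is $M=2$ with $E_1$ huge (so $\widehat G_2$ is unconstrained and $p_2$ can sit on a bad point), $E_2\to 0$, $\gamma_2$ large, and $\tau_2\gg\tau_1$. The literal indexing pins the error used at epoch $m$ to scale $\gamma_{m-1}$, which is why a growth condition looked necessary to you.

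The base case is also not ``trivial, costing $\tau_1$''. The $\tau_1$ term pays only for epoch~1's own regret. The hypothesis $\widehat{\Reg}_1(\lambda)\le 2\Reg^\star(\lambda)+cK/\gamma_1$ is what bounds the coverage $C(p_1,\cdot)$ used in epoch~2, and it needs something like $\gamma_1\sup_\lambda\widehat{\Reg}_1(\lambda)\lesssim K$ (FALCON takes $\gamma_1=1$ with rewards in $[0,1]$). Without it, even the corrected statement fails. Take two arms, log-barrier $p_m$, and $\gamma_1=\gamma_2=\gamma$ large. Let $G^\star=(1,1-2\epsilon)$ with $\epsilon=\sqrt{2/\gamma}$. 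A wrong $\widehat G_1=(0,1)$ makes $p_1$ sample the optimal arm with probability about $1/\gamma$. Then $\widehat G_2=(1-3\epsilon,1-2\epsilon)$ satisfies $\gamma^2E_2=O(1)$, yet epoch~2 pays about $2\epsilon$ per round rather than $O(1/\gamma)$.

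Your remark that summing over epochs can cost a factor $M$ is correct. The stated bound hides only absolute constants, so it implicitly needs $\tau_n/\gamma_n$ to grow geometrically.
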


%% file: Contents/9.3.full-bandit-regret.tex
\section{The Suboptimality of Using Contextual Bandit Algorithms to Solve CCSB Problems}
\label{sec:cb-ccsb}

In this section, we present the regret guarantee obtained by applying \sqlin~\citep{foster2020adapting} and \falconlin~\citep{xu2020upper}, originally designed for contextual bandits with linear reward structure per-context, to the CCSB problem under full-bandit feedback. Following the canonical reduction described in \Cref{sec:related-work}, 
from the contextual bandit learner's perspective, 
the expected aggregate reward of combinatorial action $s$ is linear in $s$ (for every fixed context $x$): $\EE\sbr{ \inner{r_t}{s_t} \mid x_t = x , s_t = s } = \inner{f^\star(x,\cdot)}{s}$. Under this reduction, the contextual bandit problem's decision space is $\Scal$, the action feature dimension is $A$, the 
reward range is $[0, m]$, and the reward function class $\Hcal = \cbr{ h(x, s) = \inner{f(x, \cdot)}{s}: f \in \Fcal }$.

\subsection{Applying \sqlin\ under full-bandit feedback}
\label{sec:sqlin-full-bandit}

Applying SquareCB.Lin's regret theorem~\citep[Theorem 1]{foster2020adapting} under the above reduction requires a single change relative to the standard linear contextual bandit setting: the batch-mode online regression oracle $\batchoracle$ now operates on aggregated bandit feedback $(x_t, s_t, \inner{r_t}{s_t})$, whose label range is $[0, m]$ rather than $[0, 1]$. As a result, the square-loss regret bound provided by $\batchoracle$ scales as
\[
    \Reg_{\mathrm{batch}}(T) \;\lesssim\; m^2 \log|\Fcal|,
\]
i.e., a factor of $m^2$
worse than the unit-reward case, reflecting the fact that the slate reward range is $[0, m]$. The IGW-style exploration distribution and the linear-DEC certificate $A/\gamma$ used in the proof of SquareCB.Lin is unaffected by the reduction; only the regression regret rate changes.

\begin{theorem}
\label{thm:sqlin-full-bandit}
Under \Cref{assum:realizable,assum:action-size}, \sqlin\ applied to the CCSB problem with full-bandit feedback, with constant exploration parameter $\gamma = \sqrt{\frac{A T}{m^2 \log|\Fcal|}}$, satisfies
\[
    \EE\sbr{\Regret(T)} \;\leq\; \Ocal\del{m \sqrt{A\, T\, \log|\Fcal|}}.
\]
\end{theorem}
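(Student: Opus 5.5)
The plan is a black-box reduction. We view CCSB under full-bandit feedback as a contextual bandit with linearly structured rewards. The decision space is $\Scal \subseteq \cbr{0,1}^A$, the action features are the indicator vectors $s$, and the value class is $\Hcal = \cbr{h(x,s) = \inner{f(x,\cdot)}{s} : f \in \Fcal}$. Realizability (\Cref{assum:realizable}) gives $\EE\sbr{\inner{r_t}{s_t} \mid x_t, s_t} = \inner{f^\star(x_t,\cdot)}{s_t}$, so $h^\star \in \Hcal$. Moreover $\abs{\Hcal} \le \abs{\Fcal}$. We then invoke \citet[Theorem 1]{foster2020adapting}, which for feature dimension $A$ and per-context linear rewards gives
\[
\EE\sbr{\Regret(T)} \lesssim \frac{AT}{\gamma} + \gamma \Reg_{\mathrm{Sq}}(T).
\]
Here $\Reg_{\mathrm{Sq}}(T)$ is the square-loss regret of the online oracle on the aggregated labels. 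The argument has three steps.

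\emph{Step 1: the per-round certificate.} We check that the per-round certificate of \sqlin\ (its linear DEC bound of order $A/\gamma$) does not depend on the reward range. That certificate is a statement about the decision geometry only: the log-determinant / G-optimal-design-style exploration in dimension $A$ bounds $\max_{q}\EE_q\sbr{h^\star}-\EE_p\sbr{h^\star}-\gamma\EE_p\sbr{(\hat h-h^\star)^2}$ by $A/\gamma$. It uses no boundedness of $h$, so the first term $AT/\gamma$ is unchanged.

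\emph{Step 2: the oracle rate.} We bound the oracle's regret on labels $y_t=\inner{r_t}{s_t}\in[0,m]$ by rescaling. Dividing labels and predictions by $m$ gives a $[0,1]$-valued problem over the class $\Hcal/m$. A standard oracle such as EWA (with $B=1$ in \Cref{assum:online-reg-semi}) then has square-loss regret $\lesssim \log\abs{\Hcal}$. Multiplying back by $m^2$ yields $\Reg_{\mathrm{Sq}}(T)\lesssim m^2\log\abs{\Fcal}$. By realizability and the conditional-mean property, this regret upper bounds the expected cumulative estimation error $\EE\sum_t(\hat h_t(x_t,s_t)-h^\star(x_t,s_t))^2$, exactly as in the proof of \Cref{thm:regret-sqcb-comb-regsq}.

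\emph{Step 3: tuning.} Substituting gives
\[
\EE\sbr{\Regret(T)}\lesssim \frac{AT}{\gamma}+\gamma m^2\log\abs{\Fcal}.
\]
With $\gamma=\sqrt{AT/(m^2\log\abs{\Fcal})}$ this becomes $\Ocal\del{m\sqrt{AT\log\abs{\Fcal}}}$.

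The main obstacle is Step 1. We must confirm that the cited theorem of \citet{foster2020adapting} is stated, or can be restated, so that the range of $h$ enters only through $\Reg_{\mathrm{Sq}}$ and not through the exploration certificate. If their statement assumes rewards in $[0,1]$, the fallback is to run the whole reduction on rescaled rewards $\inner{r_t}{s_t}/m$. That gives regret $\sqrt{AT\log\abs{\Fcal}}$ in rescaled units, which is $m\sqrt{AT\log\abs{\Fcal}}$ in original units. This reaches the same bound with $\gamma$ correspondingly rescaled.
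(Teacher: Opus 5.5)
Your proposal is correct and follows the paper's proof: apply \citet[Theorem 1]{foster2020adapting} to the canonical reduction, keep the linear-DEC certificate $A/\gamma$ unchanged, and plug in the inflated oracle rate $\Reg_{\mathrm{batch}}(T)\lesssim m^2\log\abs{\Fcal}$ caused by the $[0,m]$ label range, then tune $\gamma$. Your rescaling argument for the $m^2$ factor and your check that the certificate never uses the reward range supply the two facts that the paper's sketch only asserts.
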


\begin{proof}[Proof sketch of \Cref{thm:sqlin-full-bandit}]
Identical to the proof of \citet[Theorem 1]{foster2020adapting}, except that we plug the inflated regression rate $\Reg_{\mathrm{batch}}(T) \lesssim m^2 \log|\Fcal|$ into the general SquareCB.Lin bound
\[
    \EE\sbr{\Regret(T)}
    \;\lesssim\;
    \sqrt{A\, T\, \Reg_{\mathrm{batch}}(T)}
    \;\lesssim\;
    m \sqrt{A\, T\, \log|\Fcal|}.
\]
\end{proof}

\subsection{Applying \falconlin\ under full-bandit feedback}
\label{sec:falcon-lin-full-bandit}

Applying \falconlin's regret theorem~\citep[][Section 4]{xu2020upper} under the above reduction requires a single change: the offline regression oracle now operates on aggregated bandit feedback $(x_t, s_t, \inner{r_t}{s_t})$, and its on-policy guarantee is given by \Cref{lemma:off-reg-full-bandit} rather than the standard offline regression guarantee for $[0,1]$-valued rewards, with rate
\[
    \Reg_{\offoracle}(\Fcal, n) \;\lesssim\; \frac{m^2 \log|\Fcal|}{n},
\]
i.e., a factor of $m$ worse than the unit-reward case, reflecting the fact that the slate reward range is $[0, m]$. The per-context DOEC certificate $D_m = A/\gamma_m$ and the off-policy evaluation lemma used to verify Conditions~\ref{item:p-m-doec} and~\ref{item:ope} of \Cref{thm:abstract-oe2d} carry over unchanged from the standard linear contextual bandit analysis.

\begin{theorem}
\label{thm:falcon-lin-full-bandit}
Suppose contexts $\cbr{x_t}_{t=1}^T$ are drawn i.i.d.\ from $\Dcal_\Xcal$. Under \Cref{assum:realizable,assum:action-size}, \falconlin\ applied to the CCSB problem with full-bandit feedback, with the doubling schedule $\tau_m = 2^m$ and exploration parameter $\gamma_m = \sqrt{A/\Reg_{\offoracle}(\Fcal, \tau_{m-1}/2)}$, satisfies
\[
    \EE\sbr{\Regret(T)} \;\leq\; \widetilde{\Ocal}\del{m \sqrt{A\, T\, \log|\Fcal|}}.
\]
\end{theorem}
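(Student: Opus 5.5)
The plan is to prove \Cref{thm:falcon-lin-full-bandit} the same way as \Cref{thm:oe2d-comb-restate}: reduce it to the abstract epoch-based guarantee \Cref{thm:abstract-oe2d}, and change only the estimation-error rate. First I take the policy viewpoint. In epoch $m$, \falconlin\ plays a fixed policy $x \mapsto p_m(x) \in \Delta(\Scal)$, built from the offline estimate $\hat h_m(x,s) = \inner{\hat f_m(x,\cdot)}{s}$. Because the contexts are i.i.d., the expected regret is $\sum_m (\tau_m - \tau_{m-1})\,\EE_{x}\sbr{\max_{s}\inner{f^\star(x,\cdot)}{s} - \EE_{s\sim p_m(x)}\inner{f^\star(x,\cdot)}{s}}$.

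I then instantiate \Cref{thm:abstract-oe2d} per context with the following choices:
\begin{itemize}
\item $\Lambda = \bar\Scal$, and $p_m$ equal to the participation vector of the \falconlin\ exploration distribution.
\item $C$ equal to the linear-bandit coverage $C(p,\lambda) = \lambda^\top \Sigma_p^{-1}\lambda$, where $\Sigma_p = \EE_{s\sim p}[ss^\top]$.
\item $D_m = A/\gamma_m$ and $E_m = \EE_{s \sim p_m}\sbr{(\hat h_{m+1}(x,s) - h^\star(x,s))^2}$.
\end{itemize}
Condition~\ref{item:p-m-doec} is the standard certificate for the linear IGW/log-determinant (or G-optimal-design) exploration of \citet{xu2020upper}: its value in dimension $A$ is $O(A/\gamma_m)$, and this does not depend on the reward range. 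Condition~\ref{item:ope} follows from Cauchy--Schwarz. For any $\lambda$, write $\inner{\hat f - f^\star}{\lambda} = \inner{\Sigma_p^{1/2}(\hat f - f^\star)}{\Sigma_p^{-1/2}\lambda}$. This is bounded by $\sqrt{C(p,\lambda)\,\EE_{s\sim p}\inner{\hat f - f^\star}{s}^2}$, which is exactly the on-policy aggregate squared error. When $\Sigma_p$ is singular I use a pseudo-inverse, noting that $\lambda$ lies in the span of the support.

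Next I bound $E_m$ with \Cref{lemma:off-reg-full-bandit}. The labels $\inner{r_t}{s_t} \in [0,m]$ and the predictions lie in $[0,m]$, so the squared loss is bounded by $m^2$. The standard ERM analysis with a Bernstein/Freedman step, done on i.i.d. data from the previous epoch, then gives $\EE[E_m] \lesssim m^2\log(|\Fcal|T)/\tau_{m-1}$. Plugging into \Cref{thm:abstract-oe2d} gives
\[
\EE\sbr{\Regret(T)} \lesssim \tau_1 + \Bigl(\max_n \tfrac{\tau_n}{\gamma_n}\Bigr)\Bigl(A + \max_n \gamma_n^2 \Reg_{\offoracle}(\Fcal,\tau_{n-1}/2)\Bigr).
\]
With $\gamma_n = \sqrt{A/\Reg_{\offoracle}}$, each term becomes $\tau_n\sqrt{A \cdot m^2\log|\Fcal|/2^{n}}$. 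With doubling, this sums to $\widetilde O(m\sqrt{AT\log|\Fcal|})$.

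The main obstacle is making Condition~\ref{item:p-m-doec} rigorous for the FALCON.Lin sampler with the matrix coverage $C$ rather than the diagonal $\coverage$. I would first try to invoke the log-determinant optimality certificate of \citet{foster2020adapting}. That certificate bounds $\max_\lambda \inner{\hat f}{\lambda - p} + \gamma^{-1}\lambda^\top\Sigma_p^{-1}\lambda$ by $O(A/\gamma)$ through the first-order conditions of the log-det barrier, mirroring the proof of \Cref{lemma:log-barrier-certify-doec}. If that fails, I fall back on the spanner-based argument, which loses only a constant or logarithmic factor.
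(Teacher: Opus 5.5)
Your proposal is correct and follows the paper's route: take the policy view of \falconlin, invoke \Cref{thm:abstract-oe2d} with the linear-bandit certificate $D_m = A/\gamma_m$, and change only the estimation rate, replacing it with the $m^2\log\abs{\Fcal}/n$ bound of \Cref{lemma:off-reg-full-bandit}. Your explicit checks of Condition~\ref{item:p-m-doec} (first-order optimality of the log-determinant program) and Condition~\ref{item:ope} (Cauchy--Schwarz with $\Sigma_p$) fill in what the paper only asserts ``carries over unchanged''; note that moving the expectation inside $\max_n \gamma_n^2 E_n$, as both you and the paper do, should be justified by bounding the max by a sum over the $O(\log T)$ epochs, which $\widetilde{\Ocal}$ absorbs.
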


\begin{proof}[Proof sketch of \Cref{thm:falcon-lin-full-bandit}]
Identical to the proof of \citet{xu2020upper}, except that we plug $\Reg_{\offoracle}(\Fcal, n) \lesssim m^2 \log|\Fcal|/n$ from \Cref{lemma:off-reg-full-bandit} into the abstract bound of \Cref{thm:abstract-oe2d}:
\begin{align*}
    \EE\sbr{\Regret(T)}
    &\lesssim
    \tau_1
    +
    \rbr{\max_{n \in [2,M]} \tfrac{\tau_n}{\gamma_n}} \cdot \rbr{\max_{n \in [1,M]} \gamma_n D_n + \max_{n \in [2,M]} \gamma_n^2 \Reg_{\offoracle}(\Fcal, \tau_{n-1}/2)} \\
    &\lesssim
    1 + \max_{n \in [2,M]} 2^n \sqrt{\tfrac{m^2 A \ln|\Fcal|}{2^n}}
    \;\lesssim\;
    m \sqrt{A\, T\, \ln|\Fcal|}.
\end{align*}
\end{proof}

%% file: Contents/9.4.experiments.tex
\section{Experiments Details}
\label{sec:appendix-experiments}
We compare our algorithm against representative existing methods for contextual combinatorial semi-bandits on two public learning-to-rank benchmarks, MSLR-WEB30k~\citep{qin2013letor} and the Yahoo!\ Learning-to-Rank Challenge Set~1~\citep{chapelle2011yahoo}.
Specifically, we treat them as CCSB problems with the combinatorial action space $\Scal$ being all $m$-sets (the unordered slate setting).
Both corpora are recast as CCSB instances via the supervised-to-bandit reduction commonly used in the contextual-bandit literature~\citep{krishnamurthy2016contextual,qin2014contextual,foster2020beyond}, as we give more details next; we use the average cumulative pseudo-reward $\frac{1}{T}\sum_{t=1}^T r_t = \frac{1}{T} \sum_{t=1}^T \sum_{a: s_t(a) = 1} f^*(x_t, a)$, where $f^*(x,a)$ is the recorded relevance label for the (query, document) pair $(x,a)$ and $T$ is the horizon, as the performance metric. This is the same reduction and metric used by \citet{krishnamurthy2016contextual} and \citet{foster18practical}, so our numbers are directly comparable to theirs.

\subsection{Datasets}

We use two public learning-to-rank corpora.

\paragraph{MSLR-WEB30k.}
Released by Microsoft Research~\citep{qin2013letor}, the whole dataset contains $31{,}531$ queries sampled from Bing's search logs, each associated with a candidate set of documents and a relevance label per (query, document) pair, with an integer relevance scale in $\{0,1,2,3,4\}$. Each (query, document) pair is summarized by $136$ features.
After filtering out queries with fewer than $10$
candidate documents, we are left with $30{,}846$ queries and $1{,}489{,}911$ query-document records.

\paragraph{Yahoo!\ LTR Set~1.}
The Yahoo!\ Learning-to-Rank Challenge corpus~\citep{chapelle2011yahoo} provides $29{,}921$ queries sampled from Yahoo!\ search logs, each associated with a candidate set of documents and a relevance label per (query, document) pair, with an integer relevance scale in $\{0,1,2,3,4\}$. Each (query, document) pair is summarized by $519$ features. After filtering out queries with fewer than $6$ candidate documents, we are left with $27{,}630$ queries and $669{,}295$ query-document records.

\paragraph{Query-order seeds.}
Each run reshuffles the pool with its own random seed and reads queries off the shuffled order one round at a time. We use $20$ seeds in all: seeds $0$-$9$ for the hyperparameter tuning and seeds $10$-$19$ for the performance evaluation with the selected hyperparameters, so the two stages do not share a query order.

\subsection{From a learning-to-rank corpus to a CCSB instance}

We turn each corpus into an online CCSB simulator. Round $t$ proceeds in three steps:
\begin{enumerate}
    \item \textbf{Context.} The next query is pulled from the seed-shuffled pool, and its feature vector $x_t$ is revealed to the algorithm as the round's context.
    \item \textbf{Action.} The simulator exposes a candidate set of $A$ documents associated with $x_t$ (sampled uniformly without replacement when the underlying pool exceeds $A$; as mentioned above, we have already removed queries with fewer than $A$ candidates). The algorithm draws a slate $s_t$ of $m$ documents.
    \item \textbf{Feedback.} The simulator reveals the relevance label of each document in $s_t$ (semi-bandit feedback).
\end{enumerate}

\paragraph{Default configuration.}
We follow the conventions established by \citet{krishnamurthy2016contextual} for these two datasets. On MSLR-WEB30k we set the candidate pool size $A = 10$ and the slate size $m = 3$; on Yahoo!\ LTR Set~1 we set $A = 6$ and $m = 2$. Each run makes a single pass over the entire filtered corpus, and the horizon equals the number of retained queries: $T = 30{,}846$ on MSLR-WEB30k and $T = 27{,}630$ on Yahoo!\ LTR Set~1.

\subsection{Algorithms compared}
\label{sec:exp-algorithms}

\paragraph{Regression oracles.}
All algorithms that need a per-arm reward predictor share the same two regression oracles, chosen at run time by \texttt{learning\_alg}:
\begin{itemize}[leftmargin=*,itemsep=1pt,topsep=2pt]
    \item \texttt{lin}: a linear regressor on the raw $d$-dimensional features.
    \item \texttt{gb2}, \texttt{gb5}: sklearn's \texttt{GradientBoostingRegressor} with $100$ trees of depth $d \in \{2, 5\}$, refit from scratch on the cumulative interaction buffer at a doubling schedule of rounds.
\end{itemize}

\paragraph{Our method.}
\sqcomb\ is our \Cref{alg:sqcomb}. At each round it solves a per-arm IGW program over participation vectors $\bar{p}_t \in \bar{\Scal}$ and then draws a size-$m$ slate whose per-arm marginals match $\bar{p}_t$ exactly via dependent rounding~\citep{gandhi2006dependent}. The exploration parameter follows the schedule $\gamma_t = \gamma_0 \sqrt{A t / m}$, with $\gamma_0$ being a hyperparameter. We instantiate the regression oracle with the linear and gradient-boosted backbones (\texttt{lin}, \texttt{gb2}, \texttt{gb5}).

\paragraph{Baselines.}
We list the following baselines to compare against our method:
\begin{itemize}
    \item \textsc{SquareCB.Lin}~\citep{foster2020adapting} is a contextual bandit algorithm that each round it selects one arm. To apply it to the semi-bandit feedback setting, we treats each combinatorial action/slate $s \in \Scal$ as a single arm, and draws a slate from the $\binom{A}{m}$ size-$m$ combinatorial action set.
    The algorithm draws a slate from a distribution that minimizes a log-determinant barrier objective over the combinatorial action set.
    \item \textsc{VCEE}~\citep{krishnamurthy2016contextual} is the policy-elimination CCSB algorithm and is, to our knowledge, the strongest published semi-bandit baseline on these corpora.
    \item \textsc{LinUCB}~\citep{chu11contextual} maintains a ridge estimator of the linear reward and a UCB-style confidence radius $\theta^{\!\top} \phi + \alpha \sqrt{\phi^{\!\top}\Sigma^{-1}\phi}$, with $\Sigma$ refreshed every $100$ rounds; it represents the linear-bandit baseline.
    \item \textsc{$\varepsilon$-greedy}~\citep{langford2007epoch} is a baseline that maintains an online regression oracle (the same regression backbone as \sqcomb) and plays a uniformly random size-$m$ slate with probability $\varepsilon$, otherwise the hindsight top-$m$ slate under the regression oracle (tuned $\varepsilon$ per dataset)
    \item \textsc{Uniform-random} draws a uniformly random size-$m$ subset every round and never learns.
    \item \textsc{Skyline} is an in-sample supervised learning comparator: a regression oracle (one per backbone family) is fit once on the entire labeled training corpus and then, every round, greedily plays the top-$m$ arms under its own predictions. We report one skyline per regression oracle (\texttt{lin}, \texttt{gb2}, \texttt{gb5}) in \Cref{tab:main-final}; the learning-curve figures show the skyline matching each panel's oracle (\texttt{gb5} in \Cref{fig:main}).
\end{itemize}

All oracle-based methods (\sqcomb, \textsc{SquareCB.Lin}, \textsc{VCEE}, $\varepsilon$-greedy, and the supervised \textsc{Skyline} ceiling) are reported with the \texttt{lin}, \texttt{gb2}, and \texttt{gb5} oracles.
\subsection{Experimental protocol}
\label{sec:exp-protocol}

We adopt a two-stage protocol: Stage~1 uses seeds $I \in \{0, \ldots, 9\}$ to pick, for each (algorithm, oracle, sampler, dataset) tuple, the hyperparameter value that yields the highest mean realized cumulative reward, and Stage~2 re-runs the Stage-1 winners on the disjoint seeds $I \in \{10, \ldots, 19\}$. Both stages run for the full horizon, i.e., a single pass over the entire filtered dataset. The realized cumulative reward of a single run is
\[
    \mathrm{cum\_reward}(v, I) \;=\; \sum_{t=1}^{T} \sum_{a \in s_t} r_t(a),
\]
where $s_t$ is the size-$m$ slate played at round $t$ and $r_t(a) \in \{0, 1, 2, 3, 4\}$ is the relevance label of item $a$ observed under semi-bandit feedback. Selecting by realized reward keeps the tuning objective on the same scale as the reported benchmark and avoids the instability of a prediction-based criterion. The full searching table of hyperparameter grids is deferred to \Cref{tab:appendix-grids}.

\begin{table}[t]
    \centering
    \caption{Per-algorithm scalar tuning knob and search grid (same on both datasets).}
    \label{tab:appendix-grids}
    \footnotesize
    \begin{tabular}{l l l r}
        \toprule
        Algorithm & Hyperparameter & Grid & \# pts \\
        \midrule
        \sqcomb                    & $\gamma_0$    & $\{1, 2, 5\} \cdot 10^{k},\ k = -1, 0, 1$, plus $10^{2}$ & 10 \\
        \textsc{SquareCB.Lin}      & $\gamma_0$    & same as \sqcomb                                        & 10 \\
        \textsc{VCEE}              & $\mu$         & $\{1, 3\} \cdot 10^{k},\ k = -4, \ldots, 0$            & 10 \\
        $\varepsilon$-greedy       & $\varepsilon$ & $\{1, 2, 5\} \cdot 10^{k},\ k = -3, -2, -1$, plus $1$  & 10 \\
        \textsc{LinUCB}            & $\alpha$      & $\{1, 3\} \cdot 10^{k},\ k = -3, \ldots, 1$            & 10 \\
        \bottomrule
    \end{tabular}
\end{table}

\subsection{Results}
\label{sec:appendix-curves}

\Cref{fig:main} in the main body fixes the depth-$5$ GBRT (\texttt{gb5}) oracle. \Cref{fig:appendix-curves} complements it by breaking out the same online learners across the three regression oracles reported in \Cref{tab:main-final} (\texttt{lin}, \texttt{gb2}, \texttt{gb5}); \textsc{LinUCB} exclusively uses linear regression and therefore appears only in the \texttt{lin} column. The curves confirm the table's two trends: (i) for a fixed learner, richer oracles (\texttt{gb5} $>$ \texttt{gb2} $>$ \texttt{lin}) lift the per-round reward, and (ii) on the gradient-boosted oracles (\texttt{gb2}, \texttt{gb5}), \sqcomb, \textsc{SquareCB.Lin}, and $\varepsilon$-greedy track each other closely and lead \textsc{VCEE}, whereas on the weaker \texttt{lin} oracle the spread narrows and \textsc{VCEE} (together with \textsc{LinUCB}) becomes competitive; in all cases the online learners sit well above the \textsc{Uniform-random} floor and below the supervised \textsc{Skyline} ceiling.

\begin{figure}[t]
    \centering
    \includegraphics[width=\linewidth]{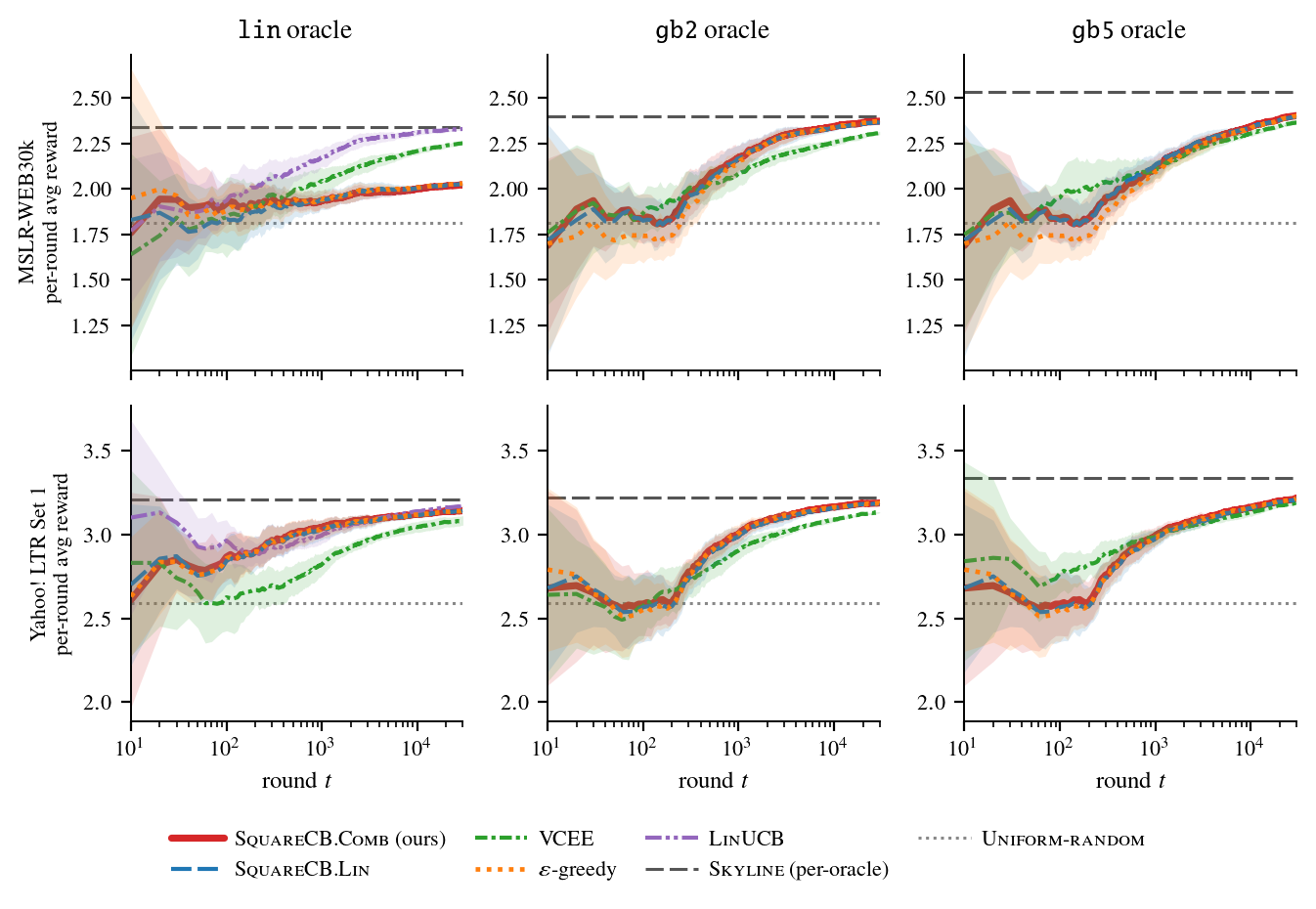}
    \caption{Per-round average reward by regression oracle: rows are the two corpora (MSLR-WEB30k, top; Yahoo!\ LTR Set~1, bottom) and columns are the \texttt{lin}, \texttt{gb2}, and \texttt{gb5} oracles. \textsc{LinUCB} is linear-only and appears only in the \texttt{lin} column; the \textsc{Skyline} reference uses the column's oracle. The round axis is logarithmic. Curves are the means over $10$ seeds; shaded bands show $\pm 1$ standard deviation across those seeds. The \texttt{gb5} column reproduces \Cref{fig:main}.}
    \label{fig:appendix-curves}
\end{figure}

%% file: Contents/9.5.lower-bound.tex
\section{Lower Bound for the CCSB Problem}
\label{sec:appendix-lower-bound}

In this section, we establish a lower bound on the CCSB regret in terms of $\abs{\Fcal}$, $T$, $m$, and $A$. Our lower bound construction uses a contextual $m$-path problem with $A$ arms and a function class $\Fcal$ of size at most $N$, building on the non-contextual $m$-path lower bound of \citet{kveton2015tight} and the realizable contextual bandit lower bound of~\citet{agarwal12contextual}.
In the contextual $m$-path problem, the learner chooses an $m$-edge path at each round, and each $f \in \Fcal$ maps from the space of (context, arm) pairs to $[0,1]$. We partition the time horizon $T$ into $M$ non-overlapping intervals of equal length $T/M$.

The main result of this section is the following lower bound for CCSB with general function approximation.

\lowerBoundCCSB*

\begin{proof}[Proof of \Cref{thm:lower-bound-ccsb}] We will first give the construction of a family of contextual shortest path instances and argue that $\Alg$ must suffer a large regret in one of the instances.

\paragraph{The construction.}
Let $M = \lfloor \log_{A/m} N \rfloor$ and $\tau = T / M$; both are positive integers under our assumptions that $N \geq A/m$ and $T/M \in \NN$. Define the context space $\Xcal = \cbr{1, \ldots, M}$ and the combinatorial action space
\[
    \Scal = \cbr{ s_j = \one_{p_j}: j \in [A/m]}, \qquad p_j = \cbr{m(j-1) + 1, \ldots, m j},
\]
We use $s_j$ to represent the $j$-th path from the start to the goal; see Figure~\ref{fig:m-path} for an illustration

\begin{figure}[t]
    \centering
    \includegraphics[width=\linewidth]{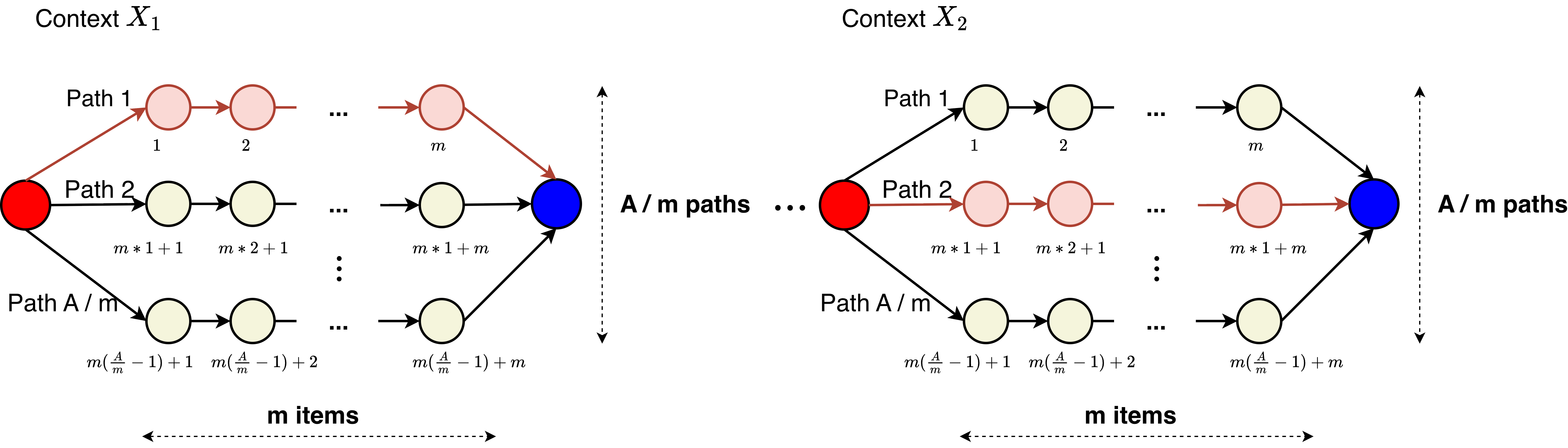}
    \caption{Contextual $m$-path construction. Each $m \times (A/m)$ grid depicts the shared arm set $[A]$, with cell $(j, k)$ being the $k$-th node of the $j$-th path. The optimal path is shaded pink and yields expected reward $1/2$; the remaining $A/m - 1$ paths are shaded beige and yield reward $1/2 - \Delta$. Different contexts induce different optimal paths, so the reward distributions across contexts are independent.}
    \label{fig:m-path}
\end{figure}

Each combinatorial action has cardinality exactly $m$, so the size constraint of the CCSB problem holds.

For each $j \in [A/m]$, define $g_j : \Acal \to [0,1]$ by
\[
    g_j(a) =
    \begin{cases}
        \tfrac{1}{2}, & a \in p_j, \\
        \tfrac{1}{2} - \Delta, & a \notin p_j,
    \end{cases}
    \qquad \text{where } \Delta = \sqrt{\frac{A}{m\,\tau}} \in (0, \tfrac{1}{4}].
\]
Our assumption $\tau \geq 16 A / m$ ensures that $\Delta \leq 1/4$. Set $\Gcal = \cbr{g_j : j \in [A/m]}$ and
\[
    \Fcal = \cbr{f : f(i, \cdot) \in \Gcal \text{ for every } i \in [M]}.
\]
By construction, $\abs{\Fcal} = (A/m)^M \leq N$.

For each $f \in \Fcal$, define the environment $E_f$ as follows: for $t$ in the $i$-th interval $[(i-1)\tau + 1, i\tau]$, the environment shows context $x_t = i$, and the reward vector $r_t$ is drawn by first sampling path rewards $w_{t,1}, \ldots, w_{t,A/m}$ independently with $w_{t,j} \sim \mathrm{Bernoulli}\bigl(f(i, m(j-1) + 1)\bigr)$ and then setting $r_t(a) = w_{t,j}$ for all $a \in p_j$. In other words, the \textit{realized} rewards of the arms within each path are identical; see \Cref{fig:m-path}. By construction $\EE\sbr{r_t(a) \mid x_t} = f(x_t, a)$, so $E_f$ is realizable with respect to $\Fcal$.

\textbf{Lower bound from the non-contextual $m$-path problem.}
For any fixed $g \in \Gcal$, define $E_g$ to be the environment with horizon $\tau$ in which $r_t$ is drawn by sampling $w_{t,1}, \ldots, w_{t,A/m}$ independently with $w_{t,j} \sim \mathrm{Bernoulli}\bigl(g(m(j-1) + 1)\bigr)$ and setting $r_t(a) = w_{t,j}$ for all $a \in p_j$. We will use the following non-contextual lower bound.

\begin{lemma}[\citealp{kveton2015tight}, Proposition~2]
\label{lemma:lower-bound-m-path}
For any $m, A, \tau \in \NN$ with $A/m \in \NN$ and $\tau \geq 16 A / m$, and any algorithm $\Alg$,
\[
    \EE_{g^\star \sim \Unif(\Gcal)} \EE_{\Alg, E_{g^\star}} \sbr{ \sum_{t=1}^\tau \max_{s \in \Scal}\inner{g^\star}{s} - \inner{g^\star}{s_t} }
    \geq \ilowbound{\sqrt{A m \tau}}.
\]
\end{lemma}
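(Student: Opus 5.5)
The plan is to reduce the non-contextual $m$-path instance to a standard $K$-armed Bernoulli bandit with $K := A/m$ arms, and then run the usual change-of-measure argument.

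\textbf{Step 1: the reduction.} Since the paths $p_j$ are disjoint and every $s \in \Scal$ is exactly one $\one_{p_j}$, choosing $s_t = s_j$ reveals only the vector $(r_t(a))_{a \in p_j}$. All its coordinates equal the single Bernoulli draw $w_{t,j}$. So any semi-bandit algorithm is equivalent to a $K$-armed bandit algorithm that pulls arm $j$ and observes $w_{t,j}$. The per-round regret of $s_j$ under $g^\star = g_{j^\star}$ is $m\Delta\,\one[j \neq j^\star]$. Writing $N_j$ for the number of rounds in which path $j$ is played, the regret in $E_{g_{j^\star}}$ therefore equals $m\Delta\,(\tau - \EE_{j^\star}[N_{j^\star}])$. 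It then suffices to show that, averaged over a uniform $j^\star$, $\EE_{j^\star}[N_{j^\star}] \le \tau/2 + o(\tau)$, or at least that $\PP_{j^\star}(N_{j^\star} \le \tau/2)$ is bounded below by a constant. Given that, the regret is $\gtrsim m\Delta\tau = m\sqrt{K\tau} = \sqrt{mA\tau}$.

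\textbf{Step 2: change of measure.} Introduce the reference environment $E_0$ in which every path has mean $\tfrac12 - \Delta$. Then $E_0$ and $E_{g_j}$ differ only in the law of arm $j$. By the divergence decomposition,
\[
\mathrm{KL}\bigl(\PP_0^{\Alg}, \PP_j^{\Alg}\bigr) = \EE_0[N_j]\,\mathrm{kl}\bigl(\tfrac12-\Delta, \tfrac12\bigr) \le c\,\EE_0[N_j]\,\Delta^2,
\]
where $c$ is absolute because $\Delta \le 1/4$ keeps the means bounded away from $0$ and $1$. Since $\sum_j \EE_0[N_j] = \tau$, at least half of the indices $j$ satisfy $\EE_0[N_j] \le 2\tau/K$. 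For those $j$, the KL divergence is at most $2c\tau\Delta^2/K = 2c$, a constant.

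\textbf{Step 3: extract a constant-probability bad event.} For each such $j$, apply the Bretagnolle--Huber inequality to the event $B_j = \{N_j > \tau/2\}$:
\[
\PP_0(B_j) + \PP_j(B_j^c) \ge \tfrac12 e^{-2c}.
\]
By Markov's inequality, $\PP_0(B_j) \le (2\tau/K)/(\tau/2) = 4/K$. When $K$ exceeds a suitable absolute constant, this forces $\PP_j(N_j \le \tau/2) \ge \tfrac14 e^{-2c}$, and hence the regret in $E_{g_j}$ is at least $\tfrac18 e^{-2c}\, m\Delta\tau$. Averaging over the uniform $j^\star$, and using that half of the indices qualify, gives $\ilowbound{m\Delta\tau} = \ilowbound{\sqrt{Am\tau}}$.

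\textbf{Main obstacle.} The hard part is Step 3, because $\Delta = \sqrt{K/\tau}$ is fixed with constant $1$. A Pinsker-based argument would need $\mathrm{KL}$ below an absolute threshold, which this choice of $\Delta$ does not give; that is why I use Bretagnolle--Huber, which remains nonvacuous for any constant KL. The remaining gap is small $K$: for $K = 1$ the regret is identically zero, so the bound must implicitly assume $K \ge 2$. For small $K \ge 2$, I would replace the Markov step with a pairwise comparison of $E_{g_j}$ against $E_{g_{j'}}$, using the event $\{N_j \le \tau/2\}$ versus its complement. Those two events are disjoint and each causes regret $\gtrsim m\Delta\tau$ in the corresponding environment. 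Their KL divergence is again $O(\tau\Delta^2) = O(K)$, which is constant for bounded $K$.
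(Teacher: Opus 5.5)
Your proof is correct, and it is more self-contained than what the paper does. The paper gives no argument of its own. It imports Proposition~2 of \citet{kveton2015tight}, whose proof does your Step~1 (disjoint paths with identical realized rewards collapse the instance to a $K = A/m$-armed Bernoulli bandit with regret scaled by $m$). That proof then appeals to the standard $\Omega(\sqrt{K\tau})$ minimax lower bound for $K$-armed bandits as a black box.

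You differ in the second half. You run the change of measure explicitly: a uniform-suboptimal reference environment, a counting and Markov step, and Bretagnolle--Huber for $K$ above an absolute constant. For bounded $K \ge 2$ you switch to a pairwise comparison with $\mathrm{KL} = O(K)$.

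This gains something real here. The standard minimax bound is usually proved with a gap shrunk by a small constant so that a Pinsker-type argument goes through. The lemma, however, is stated for the specific class $\Gcal$ with $\Delta = \sqrt{A/(m\tau)}$ exactly. At that gap the relevant KL is only $O(1)$, not small, so Pinsker is vacuous, as you note. Your Bretagnolle--Huber route certifies the bound for the instance as actually defined, at the cost of very poor but absolute constants. The citation route gains brevity and a clean constant when one is free to choose $\Delta$.

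Your remark about $K = 1$ is also right. If $A = m$, then $\Scal$ is a singleton and the regret is identically zero, so the statement as written needs $A/m \ge 2$. This is implicitly required anyway, since $\log_{A/m} N$ in \Cref{thm:lower-bound-ccsb} is undefined when $A/m = 1$.
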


\textbf{Aggregating across intervals.}
Let $\Alg$ be any CCSB algorithm and let $p_t \in \Delta(\Scal)$ denote its action distribution at round $t$. The expected regret of $\Alg$ on environment $E_{f^\star}$ is
\[
    \Regret_{f^\star}(\Alg) = \sum_{t=1}^T \max_{s \in \Scal}\inner{f^\star(x_t, \cdot)}{s} - \EE_{s \sim p_t}\inner{f^\star(x_t, \cdot)}{s}.
\]
Sampling $f^\star$ uniformly at random from $\Fcal$ gives a Bayesian lower bound on the worst case:
\[
    \sup_{f^\star \in \Fcal} \Regret_{f^\star}(\Alg)
    \;\geq\; \EE_{f^\star \sim \Unif(\Fcal)} \Regret_{f^\star}(\Alg).
\]
Splitting the sum across the $M$ intervals,
\begin{align*}
    \EE_{f^\star \sim \Unif(\Fcal)} \Regret_{f^\star}(\Alg)
    = 
    \sum_{i=1}^M \EE_{f^\star \sim \Unif(\Fcal)}
    \sbr{
    \sum_{t=(i-1)\tau + 1}^{i\tau}\!\! \max_{s \in \Scal}\inner{f^\star(i, \cdot)}{s} - \EE_{s \sim p_t}\inner{f^\star(i, \cdot)}{s}
    }.
\end{align*}
Since $\Fcal = \Gcal^M$ as a Cartesian product, drawing $f^\star \sim \Unif(\Fcal)$ is equivalent to drawing $f^\star(i, \cdot) \sim \Unif(\Gcal)$ independently for each $i \in [M]$. The history available to $\Alg$ during interval $i$ depends only on $\cbr{f^\star(j, \cdot)}_{j < i}$, which are independent of $f^\star(i, \cdot)$. Hence within interval $i$, $\Alg$ effectively faces a non-contextual $m$-path instance of horizon $\tau$ with reward function drawn from $\Unif(\Gcal)$. \Cref{lemma:lower-bound-m-path} therefore yields
\[
    \EE_{f^\star \sim \Unif(\Fcal)}
    \sbr{
    \sum_{t=(i-1)\tau + 1}^{i\tau}\!\! \max_{s \in \Scal}\inner{f^\star(i, \cdot)}{s} - \EE_{s \sim p_t}\inner{f^\star(i, \cdot)}{s} }
    \;\geq\; \ilowbound{\sqrt{A m \tau}}.
\]
Summing over $i$ and using $\tau = T/M$ with $M = \lfloor \log_{A/m} N \rfloor$,
\[
    \sup_{f^\star \in \Fcal} \Regret_{f^\star}(\Alg)
    \;\geq\; M \cdot \ilowbound{\sqrt{A m T / M}}
    = \ilowbound{\sqrt{A m T M}}
    = \ilowboundlog{\sqrt{A m T \log N}},
\]
where the last step uses $M = \log N / \log(A/m)$ and absorbs the $\log(A/m)$ factor into $\widetilde{\Omega}(\cdot)$.

\end{proof}

%% file: Contents/9.x.supportings.tex
\section{Supporting Lemmas}
\subsection{Off-policy evaluation lemma}

The lemma in this subsection is stated and analyzed at a fixed context $x \in \Xcal$, so we omit the dependence on $x$ for brevity: we write $g(a)$, $g^\star(a)$, and $\Gcal$ to denote $f(x, a)$, $f^\star(x, a)$, and the per-context reward function class $\Fcal_x \coloneqq \cbr{f(x, \cdot) : f \in \Fcal} \subseteq \RR^A$, respectively.

\begin{lemma}
    \label{lemma:ope-error}
    Let $g^\star \in \Gcal$ be the ground truth reward function. For any $g \in \Gcal$ and any participation vectors $\bar p, \bar q \in \bar\Scal$ with $\bar p(a) > 0$ for all $a \in \Acal$, we have
    \begin{align*}
        \abs{\inner{g}{\bar q} - \inner{g^\star}{\bar q}}
        \leq
        \sqrt{ \coverage(\bar p, \bar q) \cdot \opnorm{g - g^\star}{\bar p}^2 }.
    \end{align*}
\end{lemma}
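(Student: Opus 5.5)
The plan is a single weighted Cauchy--Schwarz inequality. First write the difference as a sum over arms,
\[
\inner{g}{\bar q} - \inner{g^\star}{\bar q} = \sum_{a \in \Acal} \bar q(a)\bigl(g(a) - g^\star(a)\bigr).
\]
Since $\bar p(a) > 0$ for every $a$, each summand can be multiplied and divided by $\sqrt{\bar p(a)}$:
\[
\sum_{a \in \Acal} \frac{\bar q(a)}{\sqrt{\bar p(a)}} \cdot \sqrt{\bar p(a)}\,\bigl(g(a) - g^\star(a)\bigr).
\]

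Next take absolute values and apply Cauchy--Schwarz in $\RR^A$ to the vectors $u(a) = \bar q(a)/\sqrt{\bar p(a)}$ and $v(a) = \sqrt{\bar p(a)}\,(g(a)-g^\star(a))$. This gives
\[
\abs{\inner{g}{\bar q} - \inner{g^\star}{\bar q}} \le \Bigl(\sum_{a \in \Acal} \frac{\bar q(a)^2}{\bar p(a)}\Bigr)^{1/2} \Bigl(\sum_{a \in \Acal} \bar p(a)\bigl(g(a)-g^\star(a)\bigr)^2\Bigr)^{1/2}.
\]

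Finally, match the two factors to the paper's notation. The first factor is $\sqrt{\coverage(\bar p, \bar q)}$ by the definition of $\coverage$. The second is $\opnorm{g-g^\star}{\bar p}$ by the definition of the $L^2(\bar p)$ norm in the preliminaries. Combining the two square roots under one radical yields the stated bound.

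There is no real obstacle here. The only care needed is the positivity assumption on $\bar p$, which makes the weights well defined. Nothing about realizability, $\bar\Scal$, or the cardinality bound is used, so the bound holds for arbitrary vectors $g, g^\star \in \RR^A$.
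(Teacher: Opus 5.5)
Your proposal is correct and follows the paper's own proof: it writes each summand as $\frac{\bar q(a)}{\sqrt{\bar p(a)}}\cdot\sqrt{\bar p(a)}\,(g(a)-g^\star(a))$ and applies Cauchy--Schwarz, differing only cosmetically in that it bounds the squared difference first and then takes a square root. Your remark that the bound holds for arbitrary $g, g^\star \in \RR^A$, using only $\bar p(a) > 0$, is also accurate.
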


\begin{proof}[Proof of Lemma~\ref{lemma:ope-error}]
    Starting from the squared left-hand side,
    \begin{align*}
        & \del{ \inner{g}{\bar q} - \inner{g^\star}{\bar q} }^2
        = \del{ \sum_{a \in \Acal} \bar q(a) \del{g(a) - g^\star(a)} }^2
        \\
        =&
        \del{ \sum_{a \in \Acal} \frac{\bar q(a)}{\sqrt{\bar p(a)}} \sqrt{\bar p(a)}\del{g(a) - g^\star(a)} }^2
        \\
        \leq&
        \del{\sum_{a \in \Acal} \frac{\bar q(a)^2}{\bar p(a)}}
        \cdot
        \sum_{a \in \Acal}\bar p(a)\del{g(a) - g^\star(a)}^2
            \tag{Cauchy-Schwarz inequality}
        \\
        =&
        \coverage(\bar p, \bar q) \cdot \opnorm{g - g^\star}{\bar p}^2.
    \end{align*}
    Taking a square root on both sides yields the lemma.
\end{proof}

In typical applications, we instantiate $g = \hat f_m(x, \cdot)$ and $g^\star = f^\star(x, \cdot)$ at the current context $x$, with $\bar p = \bar\pi_{m-1}(x)$ denoting the participation vector of the data-collection policy at epoch $m-1$ and $\bar q$ denoting a target participation vector. Then $\opnorm{g - g^\star}{\bar p}^2$ is the on-policy model estimation error, which is controlled by the regression oracle, and $\coverage(\bar p, \bar q)$ measures how well $\bar p$ covers $\bar q$. The lemma thus states that the off-policy evaluation error of any target policy $\bar q$ is bounded by the on-policy estimation error and the coverage of $\bar p$ over $\bar q$.

\subsection{Batch Mode Offline Regression Oracle Guarantees}
In this section, we provide on-policy reward estimation error guarantees for the ERM estimator under full-bandit and semi-bandit feedback, where 
\Cref{lemma:off-reg-full-bandit} establishes the former, while \Cref{lemma:off-reg-semi-bandit} establishes the latter.
Recall from \Cref{sec:prelim} that $r_i \in [0,1]^A$ and $\Fcal \subseteq [0,1]^{\Xcal \times \Acal}$.  The full-bandit guarantee is worse than the semi-bandit guarantee by a factor of $m$ (the maximum size of a combinatorial action), which is expected since full-bandit feedback is less informative than semi-bandit feedback.

\begin{lemma}[On-policy model estimation error in full-bandit feedback]
    \label{lemma:off-reg-full-bandit}
    Let $\Dcal_\Xcal$ be the marginal distribution over contexts. Suppose we collect a dataset $\Dcal_n = \cbr{(x_i, s_i, \inner{r_i}{s_i})}_{i=1}^n$ of $n$ i.i.d.\ slate-level observations, where $x_i \sim \Dcal_\Xcal$, $s_i \sim \pi(\cdot \mid x_i)$, and the (hidden) reward vector $r_i \in [0, 1]^A$ satisfies $\EE\sbr{r_i(a) \mid x_i} = f^\star(x_i, a)$ for every $a \in \Acal$. Let $\hat f \in \argmin_{f \in \Fcal} \sum_{i=1}^n \del{\inner{f(x_i, \cdot)}{s_i} - \inner{r_i}{s_i}}^2$ be the ERM estimator. Then for any $\delta \in (0, 1)$, with probability at least $1 - \delta$,
    \begin{align*}
        \EE_{x \sim \Dcal_\Xcal}\sbr{
            \EE_{s \sim \pi(\cdot \mid x)}\sbr{
                \del{\inner{\hat f(x, \cdot) - f^\star(x, \cdot)}{s}}^2
            }
        }
        \lesssim
        \frac{m^2}{n} \log\del{\frac{\abs{\Fcal}}{\delta}}.
    \end{align*}
\end{lemma}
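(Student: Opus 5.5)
The plan is to prove \Cref{lemma:off-reg-full-bandit} by reducing it to the standard realizable least-squares guarantee for a finite class, applied to the induced slate-level class
\[
\Hcal = \cbr{ h_f(x,s) = \inner{f(x,\cdot)}{s} : f \in \Fcal }, \qquad \abs{\Hcal} \le \abs{\Fcal}.
\]
Write $z_i = (x_i, s_i)$ and $y_i = \inner{r_i}{s_i}$. The pairs $(z_i, y_i)$ are i.i.d. By \Cref{assum:action-size} and $r_i \in [0,1]^A$, every label satisfies $y_i \in [0,m]$, and likewise $h_f(z) \in [0,m]$ for all $f \in \Fcal$.

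The first step is to check realizability at the slate level. The distribution of $r_i$ given $x_i$ does not depend on $s_i$, because $s_i$ is drawn from $\pi(\cdot\mid x_i)$ using independent randomness. Hence
\[
\EE\sbr{y_i \mid x_i, s_i} = \inner{f^\star(x_i,\cdot)}{s_i} = h_{f^\star}(z_i).
\]
The ERM $\hat f$ over $\Fcal$ is exactly the ERM $h_{\hat f}$ over $\Hcal$. The target quantity $\EE\sbr{\del{\inner{\hat f - f^\star}{s}}^2}$ is the population excess risk
\[
\EE\sbr{\del{h_{\hat f}(z) - h_{f^\star}(z)}^2} = \EE\sbr{\ell_{\hat f}(z,y) - \ell_{f^\star}(z,y)},
\]
where $\ell_f(z,y) = (h_f(z)-y)^2$. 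This identity holds by realizability, since the cross term vanishes.

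The second step is the core concentration argument. For each $f$, define $Z_f = \ell_f - \ell_{f^\star} = (h_f - h_{f^\star})(h_f + h_{f^\star} - 2y)$. Then $\abs{Z_f} \le 4m^2$. Its mean is $\EE\sbr{(h_f-h_{f^\star})^2}$, and its variance is at most $4m^2\,\EE\sbr{(h_f-h_{f^\star})^2} = 4m^2\,\EE Z_f$, because $\abs{h_f+h_{f^\star}-2y} \le 2m$. Apply Bernstein's inequality to each $f$ and take a union bound over $\abs{\Fcal}$. With probability $1-\delta$, simultaneously for all $f$,
\[
\EE Z_f - \hat{\EE}_n Z_f \le \sqrt{\frac{8m^2\,\EE Z_f \log(\abs{\Fcal}/\delta)}{n}} + \frac{8m^2\log(\abs{\Fcal}/\delta)}{3n}.
\]

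The third step closes the argument. For $f = \hat f$, ERM gives $\hat{\EE}_n Z_{\hat f} \le 0$. The inequality then becomes a quadratic inequality in $\sqrt{\EE Z_{\hat f}}$, which solves to $\EE Z_{\hat f} \lesssim m^2 \log(\abs{\Fcal}/\delta)/n$. Combined with the identity from the first step, this is the claim.

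The main obstacle is the variance-to-mean (Bernstein) condition in the second step. A naive Hoeffding bound would only give a $1/\sqrt n$ rate. I therefore need to track carefully that both the variance proxy and the range scale as $m^2$, which is where the $m^2$ factor in the final bound comes from.
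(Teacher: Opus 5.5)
Your proposal is correct and follows essentially the same route as the paper. Both arguments use the excess loss $Z_f = (h_f - h_{f^\star})(h_f + h_{f^\star} - 2y)$, derive $\EE[Z_f^2] \le 4m^2\,\EE[Z_f]$ from the difference-of-squares factorization, apply Bernstein with a union bound over $\Fcal$ and the ERM property $\hat{\EE}_n Z_{\hat f} \le 0$, and then solve the resulting quadratic inequality; your explicit remark that $s_i$ and $r_i$ are conditionally independent given $x_i$, which gives $\EE[y_i \mid x_i, s_i] = h_{f^\star}(z_i)$, justifies a step the paper simply asserts.
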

\begin{proof}[Proof of \Cref{lemma:off-reg-full-bandit}]
    For any $f \in \Fcal$, define the per-round excess loss
    \[
        Z_i(f)
        :=
        \del{\inner{f(x_i, \cdot)}{s_i} - \inner{r_i}{s_i}}^2
        -
        \del{\inner{f^\star(x_i, \cdot)}{s_i} - \inner{r_i}{s_i}}^2.
    \]
    Since $\EE\sbr{\inner{r_i}{s_i} \mid x_i, s_i} = \inner{f^\star(x_i, \cdot)}{s_i}$,
    \[
        \EE\sbr{Z_i(f) \mid x_i, s_i}
        =
        \del{\inner{f(x_i, \cdot) - f^\star(x_i, \cdot)}{s_i}}^2,
    \]
    and taking expectation over $(x_i, s_i)$,
    \begin{align*}
        \EE\sbr{Z_i(f)}
        =
        \EE_{x \sim \Dcal_\Xcal}\sbr{
            \EE_{s \sim \pi(\cdot \mid x)}\sbr{
                \del{\inner{f(x, \cdot) - f^\star(x, \cdot)}{s}}^2
            }
        }.
    \end{align*}

    Since $f, f^\star \in [0, 1]^{\Xcal \times \Acal}$, $r_i \in [0, 1]^A$, and $\norm{s_i}_1 \leq m$, we have $\abs{\inner{f(x_i, \cdot)}{s_i}} \leq m$ and $\abs{\inner{r_i}{s_i}} \leq m$, so $\abs{Z_i(f)} \leq m^2$.

    Factoring the excess loss as a difference of squares,
    \[
        Z_i(f)
        =
        \inner{f(x_i, \cdot) - f^\star(x_i, \cdot)}{s_i}
        \cdot
        \del{\inner{f(x_i, \cdot) + f^\star(x_i, \cdot)}{s_i} - 2 \inner{r_i}{s_i}},
    \]
    where the second factor is bounded by $2m$ in absolute value, so
    \[
        Z_i(f)^2
        \leq
        4 m^2 \del{\inner{f(x_i, \cdot) - f^\star(x_i, \cdot)}{s_i}}^2,
    \]
    and taking expectation gives $\EE\sbr{Z_i(f)^2} \leq 4 m^2\, \EE\sbr{Z_i(f)}$.

    Applying Bernstein's inequality and a union bound over $f \in \Fcal$, with probability at least $1 - \delta$, simultaneously for all $f \in \Fcal$,
    \[
        \EE\sbr{Z_1(f)} - \frac{1}{n} \sum_{i=1}^n Z_i(f)
        \lesssim
        m \sqrt{\frac{\EE\sbr{Z_i(f)} \log(\abs{\Fcal}/\delta)}{n}}
        +
        \frac{m^2}{n} \log\del{\frac{\abs{\Fcal}}{\delta}}.
    \]
    Substituting $f = \hat f$ and using $\frac{1}{n} \sum_{i=1}^n Z_i(\hat f) \leq 0$ (by the ERM property), the elementary implication $x \leq c\sqrt{xA} + B \Rightarrow x \lesssim c^2 A + B$ for $A, B \geq 0$ yields
    \[
        \EE\sbr{Z_1(\hat f)}
        \lesssim
        \frac{m^2}{n} \log\del{\frac{\abs{\Fcal}}{\delta}},
    \]
    which is the desired bound by the expression for $\EE\sbr{Z_1(\hat f)}$ derived above.
\end{proof}

\begin{lemma}[On-policy model estimation error in semi-bandit feedback]
    \label{lemma:off-reg-semi-bandit}
    Let $\Dcal_\Xcal$ be the marginal distribution over contexts. Suppose we collect a dataset $\Dcal_n = \cbr{(x_i, s_i, o_i)}_{i=1}^n$ of $n$ i.i.d.\ semi-bandit observations, where $x_i \sim \Dcal_\Xcal$, $s_i \sim \pi(\cdot \mid x_i)$, $o_i = \cbr{o_i(a) : s_i(a) = 1}$ with $o_i(a) = r_i(a)$, and the (hidden) reward vector $r_i \in [0, 1]^A$ satisfies $\EE\sbr{r_i(a) \mid x_i} = f^\star(x_i, a)$ for every $a \in \Acal$. Let $\hat f \in \argmin_{f \in \Fcal} \sum_{i=1}^n \sum_{a: s_i(a) = 1} \del{f(x_i, a) - o_i(a)}^2$ be the ERM estimator. Then for any $\delta \in (0, 1)$, with probability at least $1 - \delta$,
    \begin{align*}
        \EE_{x \sim \Dcal_\Xcal}\sbr{ \opnorm{\hat f - f^\star}{\bar\pi(x)}^2 }
        \lesssim
        \frac{m}{n} \log\del{\frac{\abs{\Fcal}}{\delta}}.
    \end{align*}
\end{lemma}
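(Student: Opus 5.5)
The plan mirrors the proof of \Cref{lemma:off-reg-full-bandit}, but replaces the slate-level excess loss with a per-arm excess loss summed over the selected coordinates. For each $f \in \Fcal$ define
\[
    Z_i(f) := \sum_{a: s_i(a)=1} \Bigl[ \bigl(f(x_i,a) - r_i(a)\bigr)^2 - \bigl(f^\star(x_i,a) - r_i(a)\bigr)^2 \Bigr].
\]
Conditioning on $(x_i,s_i)$, only the rewards $r_i(a)$ are random. The cross term vanishes because $\EE[r_i(a) \mid x_i, s_i] = f^\star(x_i,a)$. This requires that $s_i$ is drawn from $\pi(\cdot\mid x_i)$ independently of $r_i$ given $x_i$, which holds since the policy only sees $x_i$. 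Hence $\EE[Z_i(f) \mid x_i,s_i] = \sum_a s_i(a)(f(x_i,a)-f^\star(x_i,a))^2$. Averaging over $s_i \sim \pi(\cdot\mid x_i)$ replaces $s_i(a)$ by $\bar\pi(x_i)(a)$. Therefore $\EE[Z_i(f)] = \EE_{x}\bigl[\opnorm{f-f^\star}{\bar\pi(x)}^2\bigr]$, which is exactly the target quantity.

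Next come the range and variance bounds. Each summand lies in $[-1,1]$ and at most $m$ arms are selected, so $\abs{Z_i(f)} \le m$. For the variance I factor each summand as $(f-f^\star)(x_i,a)\cdot\bigl(f(x_i,a)+f^\star(x_i,a)-2r_i(a)\bigr)$, where the second factor has magnitude at most $2$. Writing $d_a = f(x_i,a)-f^\star(x_i,a)$, Cauchy--Schwarz over the at most $m$ selected arms gives
\[
    Z_i(f)^2 \le \Bigl(\sum_{a:s_i(a)=1} 2\abs{d_a}\Bigr)^2 \le 4m \sum_{a:s_i(a)=1} d_a^2,
\]
so $\EE[Z_i(f)^2] \le 4m\,\EE[Z_i(f)]$. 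This is the key improvement over the full-bandit case: the variance proxy is $m$ instead of $m^2$, and the range is $m$ instead of $m^2$.

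Finally, I apply Bernstein's inequality to the i.i.d.\ variables $Z_i(f)$ and take a union bound over $\Fcal$. With probability at least $1-\delta$, simultaneously for all $f$,
\[
    \EE[Z_1(f)] - \frac1n\sum_i Z_i(f) \lesssim \sqrt{\frac{m\,\EE[Z_1(f)]\log(\abs{\Fcal}/\delta)}{n}} + \frac{m\log(\abs{\Fcal}/\delta)}{n}.
\]
I plug in $f=\hat f$ and use the ERM property $\sum_i Z_i(\hat f) \le 0$, which holds because $f^\star \in \Fcal$ by realizability. Solving the resulting quadratic inequality $x \le c\sqrt{x} + B$ gives $x \lesssim c^2 + B$, that is, $\EE_x\bigl[\opnorm{\hat f-f^\star}{\bar\pi(x)}^2\bigr] \lesssim \frac{m}{n}\log(\abs{\Fcal}/\delta)$.

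The only step needing care is the variance bound, specifically using Cauchy--Schwarz over the at most $m$ active coordinates to get a factor $m$ rather than $m^2$. Everything else is routine.
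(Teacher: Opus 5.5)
Your proposal is correct and follows the paper's proof essentially step for step: the same per-arm excess loss $Z_i(f)$, the range bound $|Z_i(f)| \le m$, the variance-to-mean bound $\EE[Z_i(f)^2] \le 4m\,\EE[Z_i(f)]$ via Cauchy--Schwarz over the at most $m$ selected arms, and Bernstein with a union bound over $\Fcal$ combined with the ERM property. Your explicit remarks are assumptions the paper uses only implicitly, and they are worth keeping: the cross term vanishes only because $s_i$ is conditionally independent of $r_i$ given $x_i$, and $\sum_i Z_i(\hat f)\le 0$ needs realizability $f^\star\in\Fcal$.
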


\begin{proof}[Proof of \Cref{lemma:off-reg-semi-bandit}]

    For any $f \in \Fcal$, define the per-round semi-bandit excess loss
    \[
        Z_i(f)
        :=
        \sum_{a: s_i(a) = 1}
        \sbr{
            \del{f(x_i, a) - o_i(a)}^2
            -
            \del{f^\star(x_i, a) - o_i(a)}^2
        }.
    \]
    Since $\EE\sbr{o_i(a) \mid x_i, s_i} = f^\star(x_i, a)$ whenever $s_i(a) = 1$, the expected excess loss conditioned on $(x_i, s_i)$ is
    \[
        \EE\sbr{Z_i(f) \mid x_i, s_i}
        =
        \sum_{a: s_i(a) = 1}
        \del{f(x_i, a) - f^\star(x_i, a)}^2,
    \]
    and taking expectation over $(x_i, s_i)$,
    \[
        \EE\sbr{Z_i(f)}
        =
        \EE_{x \sim \Dcal_\Xcal}\sbr{
            \EE_{s \sim \pi(\cdot \mid x)}\sbr{
                \sum_{a: s(a) = 1} \del{f(x, a) - f^\star(x, a)}^2
            }
        }
        =
        \EE_{x \sim \Dcal_\Xcal}\sbr{ \opnorm{f - f^\star}{\bar\pi(x)}^2 },
    \]
    where the last equality uses $\bar\pi(x)(a) = \EE_{s \sim \pi(\cdot \mid x)}\sbr{s(a)}$.

    Since $f, f^\star, o_i(a) \in [0, 1]$ and $\norm{s_i}_1 \leq m$, we have $\abs{Z_i(f)} \leq m$. Factoring each term as a difference of squares,
    \[
        Z_i(f)
        =
        \sum_{a: s_i(a) = 1}
        \del{f(x_i, a) - f^\star(x_i, a)}
        \del{f(x_i, a) + f^\star(x_i, a) - 2 o_i(a)},
    \]
    so by the Cauchy--Schwarz inequality,
    \begin{align*}
        Z_i(f)^2
        &\leq
        \del{\sum_{a: s_i(a) = 1} \del{f(x_i, a) - f^\star(x_i, a)}^2}
        \del{\sum_{a: s_i(a) = 1} \del{f(x_i, a) + f^\star(x_i, a) - 2 o_i(a)}^2} \\
        &\leq
        4 m \cdot \EE\sbr{Z_i(f) \mid x_i, s_i},
    \end{align*}
    where we used that each summand in the second factor is at most $4$ and there are at most $m$ summands. Taking expectation gives $\EE\sbr{Z_i(f)^2} \leq 4 m\, \EE\sbr{Z_i(f)}$.

    Applying Bernstein's inequality and a union bound over $f \in \Fcal$, with probability at least $1 - \delta$, simultaneously for all $f \in \Fcal$,
    \[
        \EE\sbr{Z_1(f)} - \frac{1}{n} \sum_{i=1}^n Z_i(f)
        \lesssim
        \sqrt{\frac{m\, \EE\sbr{Z_i(f)} \log(\abs{\Fcal}/\delta)}{n}}
        +
        \frac{m}{n} \log\del{\frac{\abs{\Fcal}}{\delta}}.
    \]
    Substituting $f = \hat f$ and using $\frac{1}{n} \sum_{i=1}^n Z_i(\hat f) \leq 0$ (by the ERM property), the elementary implication $x \leq \sqrt{cxA} + B \Rightarrow x \lesssim cA + B$ for $A, B \geq 0$ yields
    \[
        \EE\sbr{Z_1(\hat f)}
        \lesssim
        \frac{m}{n} \log\del{\frac{\abs{\Fcal}}{\delta}},
    \]
    which is the desired bound by the expression for $\EE\sbr{Z_1(\hat f)}$ derived above.

\end{proof}